\DeclareMathOperator*{\argmin}{arg\,min}
\DeclareMathOperator*{\argmax}{arg\,max}
\begin{document}

\title{Sparse Canonical Correlation Analysis via Concave Minimization}

\author{\name Omid Shams Solari \email solari@berkeley.edu \\
       \addr Department of Statistics\\
       University of California, Berkeley
       \AND
       \name James B. Brown \email jbbrown@lbl.gov\\
       \addr Lawrence Berkeley National Laboratory and Department of Statistics\\
       University of California, Berkeley
       \AND
       \name Peter J. Bickel \email 
       bickel@stat.berkeley.edu\\
       \addr Department of Statistics\\
       University of California, Berkeley}

\editor{}

\makeatletter
\let\mule\@title

\maketitle

\begin{abstract}


A new approach to the \textit{sparse Canonical Correlation Analysis (sCCA)} is proposed with the aim of discovering interpretable associations in very high-dimensional multi-view, i.e. observations of multiple sets of variables on the same subjects, problems. Inspired by the sparse PCA approach of \cite{journe:nesterov}, we also show that the sparse CCA formulation, while non-convex, is equivalent to a maximization program of a convex objective over a compact set for which we propose a first-order gradient method. This result helps us reduce the search space drastically to the boundaries of the set. Consequently, we propose a two-step algorithm, where we first infer the sparsity pattern of the canonical directions using our fast algorithm, then we shrink each view, i.e. observations of a set of covariates, to contain observations on the sets of covariates selected in the previous step, and compute their canonical directions via any CCA algorithm.  We also introduce \textit{Directed Sparse CCA}, which is able to find associations which are aligned with a specified experiment design, and \textit{Multi-View sCCA} which is used to discover associations between multiple sets of covariates. Our simulations establish the superior convergence properties and computational efficiency of our algorithm as well as accuracy in terms of the canonical correlation and its ability to recover the supports of the canonical directions. We study the associations between metabolomics, trasncriptomics and microbiomics in a multi-omic study using \texttt{MuLe}, which is an \texttt{R} package that implements our approach, in order to form hypotheses on mechanisms of adaptations of \textit{Drosophila Melanogaster} to high doses of environmental toxicants, specifically Atrazine, which is a commonly used chemical fertilizer.

\end{abstract}

\begin{keywords}
sparse CCA, Canonical Correlation Analysis, Multivariate Analysis, Multivariate Learning
\end{keywords}

\section{Introduction}

\textit{Canonical Correlation Analysis}(CCA), \cite{hotelling} , is a powerful set of approaches for analyzing the relationship between two sets of random vectors, and discovering associations between elements of said vectors. Classical CCA is specifically concerned with finding linear combinations of the elements of each random vector such that they are maximally correlated estimated using observations of each random vector on matching subjects/individuals, i.e. different \textit{views}, of the same latent random vector. In this article, we use the terms \textit{view} and \textit{dataset} interchangeably, denoted by $\bm{X}_i \in \mathbb{R}^{n \times p_i}$, to refer to $n$ observations of a random vector of length $p_i$.

CCA has been widely used in various fields of data science and machine learning and has found successful applications in finance, neuro-imaging, computer vision, NLP, social sciences, geography, collaborative filtering, astronomy and a new surge in genomics, especially in recently popular multi-assay genetic/clinical population studies.
After its proposition by \cite{hotelling}, CCA was first applied in \cite{waugh42} where he studied the relationship between the characteristics of wheat and the resulting flour. He demonstrated that desirable wheat is high in texture, density and protein content and low on damaged kernels and foreign materials. Other rather classic applications of CCA include: medical geography, where \cite{monmonier73} showed direct association between the number of hospital beds per capita and physician ratios, socio-medical studies, e.g. \cite{hopkins69} studies the relationship between housing and health in Baltimore, education, \cite{dunham75} analyzes the association between measures of academic performance in college and exam scores in high school, economics, where \cite{assetLiability} employs this technique to identify and describe hedging behavior between the asset side and the capital side of the balance sheets of a selection of US. banks, signal processing, e.g. \cite{schell95} introduces \textit{Programmable CCA} to design filters to distinguish between desired signal and noise, time-series analysis, e.g. \cite{heij91} employs CCA for state-space modeling, geography, e.g. \cite{2001JHyd..254..157O} perform a regional flood frequency analysis using CCA by investigating the correlation structure between watershed characteristics and flood peaks, medical imaging, e.g. \cite{fMRI} benefited from CCA in detecting activated brain regions based on physiological parameters such as temporal shape and delay of the hemodynamic response. There are plenty of other examples in the fields of chemistry, e.g. \cite{tu89}, physics, e.g. \cite{wong80}, dentistry, e.g. \cite{lindsey85} where CCA is utilized to discover complex yet meaningful associations between two sets of variables.

CCA and its variants have also found substantial grounds in modern fields of research such as artificial intelligence and statistical learning, neuro-imaging and human perception, context-based content retrieval, collaborative filtering, dimensionality reduction and feature selection, and spatial and temporal genome-wide association studies. \cite{cao15} and \cite{nakanishi15} used CCA in the area of Brain Computer Interface(BCI) to recognize the frequency components of target stimuli. In the area of image recognition, \cite{hardoon2004canonical} use a kernel CCA method to perform content-based image retrieval and learn semantics of multimedia content by combining image and text data. \cite{ogura13}, \cite{shen13}, and \cite{wang13} have employed CCA and its variants for the purpose of feature selection/extraction/fusion and dimensionality reduction.

Modern Canonical Correlation Analysis algorithms have had a significant surge in genomics esp. multi-omic genetic and environmental studies in the last few years mainly due to fast and efficient genome sequencing and measurement technologies becoming more accessible. Such studies typically involve two or more, usually high-dimensional, omic datasets, e.g. trascriptomic, metabolomic, microbiomic data. An instance of such study is \cite{hyman2002} where they performed CGH analysis on cDNA microarrays in breast cancer and compared copy number and mRNA expression levels to infer the impact of genomic  changes  on  gene  expression. \cite{genomicYamanishi} successfully utilized this method to recognize the operons in \textit{Escherichia Coli} genome by comparing three datasets corresponding to functional, locational and expression relationships between the genes.  \cite{morley2004}, \cite{pollack2002}, \cite{snijders2017influence}, \cite{orsini2018early}, \cite{fang16}, \cite{rousu13}, \cite{seoane14}, \cite{baur15}, \cite{sarkar15}, and \cite{cichonska16} are few other notable relevant works.

In the next section we provide an overview of the common approaches, but we first compile the notation used throughout the paper in the subsection below.

\section{Notation}

Each view, i.e. the observation matrix on random vector $X_i(\omega): \Omega \rightarrow \mathbb{R}^{p_i}$, is denoted by $\bm{X}_i \in \mathbb{R}^{n \times p_i}$, $i = 1, \ldots, m$. $n$ is reserved to denote the sample size and $p_i$ to denote the length of each random vector $X_i, i = 1, \ldots, m$. Canonical directions are denoted by $\bm{z}_i \in \mathcal{B}^{p_i}$, or $\bm{z}_i \in \mathcal{S}^{p_i}$, and $\bm{Z}_i \in \mathcal{S}_d^{p_i}$, where $\mathcal{B} = \{\bm{x} \in \mathbb{R} | \| \bm{x}\|_2 \leq 1 \}$ and $\mathcal{S} = \{\bm{x} \in \mathbb{R} | \| \bm{x}\|_2 = 1 \}$. $l_x(\bm{z}) = \| \bm{z} \|_x: \mathbb{R}^{p} \rightarrow \mathbb{R}$ denotes any norm function, more specifically $l_{0/1}(\bm{z}) = \| \bm{z} \|_{0/1}$, and $\bm{\tau}^{(i)}$ refers to the $i-th$ non-zero element of the vector which is specifically used for the sparsity pattern vector. Sample covariance matrices corresponding to the $i$-th and $j$-th views is denoted by $\bm{C}_{ij}$. We drop the subscript when we only have two views. $max(x, 0)$ is also denoted by $[x]_+$. We also coin the term \textit{accessory variables} in Section \ref{subsec:directed} to refer to the variables towards which we direct estimated canonical directions, disregarding their causal roles as covariates or dependent variables. We also use ``program" to refer to ``optimization programs".

\section{An Overview of Approaches to the CCA Problem}
\label{sec:pDefinition}

This subsection covers a literature review of Canonical Correlation Analysis, common approaches, and their statistical assumptions and approximations. While linear approaches and especially their regularized extensions are the main focus of this paper, we have also provided an overview of non-linear approaches, e.g. kernelized model of \cite{lai00} and DeepCCA of \cite{andrew13}.

\subsection{CCA}
\label{subsec:cca}

Let $X(\omega): \Omega \rightarrow \mathbb{R}^p$ be a random vector with covariance matrix $\bm{\Sigma} \in \mathbb{R}^{p \times p}$. Further assume that $\mathbb{E}X = \mathbf{0}$. Now partition $X$ into $X_1 \in \mathbb{R}^{p_1}$ and $X_2 \in \mathbb{R}^{p_2}$.
The covariance matrix can be partitioned accordingly.
\begin{equation}
\label{eq:covariance}
\bm{\Sigma}
=
\begin{bmatrix}
\bm{\Sigma_{11}} & \bm{\Sigma_{12}}\\
\bm{\Sigma_{21}} & \bm{\Sigma_{22}}
\end{bmatrix}
\end{equation}

\textit{Canonical Correlation Analysis}, \cite{hotelling}, identifies two weight vectors $\bm{z}_1$ and $\bm{z}_2$ such that the Pearson correlation coefficient between the images $X_1\bm{z_1}$ and $X_2\bm{z_2}$ is maximized,

\begin{equation}
\label{eq:opt}
\begin{split}
    \rho(\bm{z}_1^*, \bm{z}_2^*) =& \max_{\bm{z}_1 \in \mathbb{R}^{p_1}, \bm{z}_2 \in \mathbb{R}^{p_2}} \frac{\mathbb{E}[(X_1\bm{z_1})^{\top}X_2\bm{z_2}]}{\mathbb{E}[(X_1\bm{z_1})^2]^{1/2}\mathbb{E}[(X_2\bm{z_2})^2]^{1/2}} \\=& \max_{\bm{z}_1 \in \mathbb{R}^{p_1}, \bm{z}_2 \in \mathbb{R}^{p_2}}\frac{\bm{z}_1^{\top} \bm{\Sigma}_{12} \bm{z}_2 }{\sqrt{\bm{z}_1^{\top} \bm{\Sigma}_{11} \bm{z}_1} \sqrt{\bm{z}_2^{\top} \bm{\Sigma}_{22}\bm{z}_2}}\\
    =& \max_{\substack{\bm{z}_1 \in \mathbb{R}^{p_1}, \bm{z}_2 \in \mathbb{R}^{p_2}\\ \bm{z}_1^T\bm{\Sigma}_{11}\bm{z}_1 = 1\\\bm{z}_2^T\bm{\Sigma}_{22}\bm{z}_2 = 1}} \bm{z}_1^T\bm{\Sigma}_{12}\bm{z}_2
\end{split}
\end{equation}

where the last line is due to scale-invariability of $\rho$.



The images $X_1\bm{z}_1$ and $X_2\bm{z}_2$ are called the \textit{canonical variables} and the weights $\bm{z}_1$ and $\bm{z}_2$ are the \textit{canonical loading vectors} or the \textit{canonical directions}. The loading vectors $(\bm{z}_1^{(1)}, \bm{z}_2^{(1)})$ obtained from optimizing Program \ref{eq:opt} reveal the first canonical correlation. $(\bm{z}_1^{(2)}, \bm{z}_2^{(2)})$ that maximize \ref{eq:opt} but with an added constraint that their corresponding images are respectively orthogonal to the first pair determine the second canonical correlation. This procedure is continued until no more pairs are found. The number $r \leq min\{p_1, p_2\}$ of pairs of canonical variables can be interpreted as the number of patterns in the correlation structure.



We estimate the population parameters by plugging in sample estimates of the expectations in Program \ref{eq:opt}. With $\bm{X}_1 \in \mathbb{R}^{n \times p_1}$ and $\bm{X}_2 \in \mathbb{R}^{n \times p_2}$ being the sample matrices corresponding to $X_1$ and $X_2$ respectively, $\bm{\Sigma}_{ij}, i,j \in \{1,2\}$ is estimated by the sample covariance matrices $\bm{C}_{ij} = \frac{1}{n}\bm{X}_i^{\top}\bm{X}_j, i,j \in \{1,2\}$.

Therefore the sample CCA optimization problem may be written as,

\begin{equation}
\label{eq:sampOpt}
\max_{\substack{\bm{z}_1 \in \mathbb{R}^{p_1}, \bm{z}_2 \in \mathbb{R}^{p_2}\\ \bm{z}_1^{\top}\bm{C}_{11}\bm{z}_1 = 1\\\bm{z}_2^{\top}\bm{C}_{22}\bm{z}_2 = 1}} \bm{z}_1^{\top}\bm{C}_{12}\bm{z}_2
\end{equation}

Generally, this optimization problem is solved using one of the three classes of techniques. \cite{hotelling} solves this problem using Lagrange multipliers to obtain the characteristic equation which is a \textit{standard eigenvalue problem}, 

\begin{equation}
\label{eq:sep}
\bm{C}_{22}^{-1}\bm{C}_{21}\bm{C}_{11}^{-1}\bm{C}_{12}^{-1}\bm{z}_2 = \rho^2 \bm{z}_2
\end{equation}

\cite{bach02} and \cite{hardoon2004canonical} form the following system of equations using the same Lagrange multiplier technique,

\begin{equation}
\label{gep}
\begin{pmatrix} 0 & \bm{C}_{12} \\ \bm{C}_{21} & 0 \end{pmatrix} \begin{pmatrix} \bm{z}_1 \\ \bm{z}_2 \end{pmatrix} = \rho \begin{pmatrix} \bm{C}_{11} & 0 \\ 0 & \bm{C}_{22} \end{pmatrix} \begin{pmatrix} \bm{z}_1 \\ \bm{z}_2 \end{pmatrix}
\end{equation}

Which can be regarded as a \textit{generalized eigenvalue problem} and the positive generalized eigenvalues as the squared canonical correlations.

\cite{healy57} and \cite{ewerbring89} used \textit{singular value decomposition} to find canonical correlations. In this approach, inverse square roots of the sample covariance matrices $\bm{C}_{11}^{-1/2}$ and $\bm{C}_{22}^{-1/2}$ are computed. Canonical loading vectors are computed using the following SVD,

\begin{equation}
\label{eq:svd}
\bm{C}_{11}^{-1/2} \bm{C}_{12} \bm{C}_{22}^{-1/2} = \bm{U}\bm{D}\bm{V}^{\top}
\end{equation}

Where $\bm{U}$ and $\bm{V}$ are orthonormal matrices and  the non-zero elements of the diagonal matrix $D$ correspond to the singular values which are equal to the canonical correlations. $\bm{z}_1^{(k)}$ and $\bm{z}_2^{(k)}$ are obtained using $\bm{C}_{11}^{-1/2}\bm{U}_{.k}$ and $\bm{C}_{22}^{-1/2}\bm{V}_{.k}$ respectively.

\subsection{Regularized CCA}

Techniques reviewed above are applicable in over-determined systems or low-dimensional regimes. However, in high-dimensional regimes where there are fewer observations than variables, $n \leq max\{p_1, p_2\}$, new approaches are needed to overcome the issues of singular covariance matrices and overfitting as well as lack of identifiability of original parameter. These approaches are also helpful in reducing the estimation variance, providing robustness to outliers, and, of special relevance to this paper, offering more interpretable models.

\subsubsection{Ridge Regularization}

So called \textit{canonical ridge} was proposed in \cite{vinod76} to address the problem of insufficient sample size. Here, the innvertibility of the sample covariance matrices $C_{11}$ and $C_{22}$ is improved by introducing ridge penalties, which comes at the cost of introducing two more hyper-parameters, $c_1, c_2 \geq 0$. Ultimately, the optimization constraints in Program \ref{eq:sampOpt} become

\begin{equation}
\label{eq:regularization}
\begin{split}
z_1^{\top}(C_{11} + c_1I)z_1 =& 1\\
z_2^{\top}(C_{22} + c_2I)z_2 =& 1
\end{split}
\end{equation}

Any of the three algorithms of Section \ref{subsec:cca} may be modified for solving this problem.

\subsubsection{Lasso Regularization}
\label{subsub:lasso}

LASSO or $L_1$ regularized CCA, which is one of the two main foci of this paper, is specifically useful when there are not nearly as many observations as covariates. In such high-dimensional settings ridge-regularized methods, although successfully reducing instability, lack interpretability and overfitting is still an issue. 
To this end, a school of methods exist which does both variable selection and estimation simultaneously or sequentially through sparsity inducing regularization. \cite{parkhomenkoGWSCCA}, \cite{parkhomenkoSCCA} , and \cite{witten:tibshirani:2009}
advise a simple soft-thresholding algorithm to enforce sparsity. They apply \textit{sparse CCA} methods to find meaningful associations between genomic datasets, be it RNA expression datasets, single-loci DNA modifications or regions of loss/gain within the genome. \cite{waaijenborg} incorporates a combination of $L_1$ and $L_2$ penalties into the CCA model to identify gene networks that are influenced by multiple genetic changes. \cite{hardoon11} offers a different formulation using convex least squares. In their approach the association between the linear combination of one view and the Gram matrix of the other view is computed. They demonstrate that in cases when the observations are very high-dimensional, their sparse CCA approach outperforms KCCA significantly.

The approaches to the $L_1$ regularized CCA proposed in the literature referenced above are almost identical, except for that of \cite{hardoon11}. Despite small differences, e.g. \cite{waaijenborg} uses elastic net which is a mixture of LASSO and ridge penalties, they all solve a regularized SVD using alternating maximization of slightly different optimization programs. \textit{Penalized Matrix Decomposition(PMD)} algorithm which was first introduced in \cite{wittentibhastie09}, then extended in \cite{witten:tibshirani:2009} estimates the sample covariance matrix $\bm{C}_{12}$ with closest rank-one matrix in a Frobenius norm sense under some constraints.

\begin{equation}
\label{eq:sCCA}
\begin{split}
(\bm{z}_1^*,\bm{z}_2^*) &= \argmin_{\substack{ \bm{z}_1 \in \mathcal{B}^{p_1}, \bm{z}_2 \in \mathcal{B}^{p_2}\\ \| \bm{z}_1\|_1 \leq c_1, \| \bm{z}_2\|_1 \leq c_2, \sigma \geq 0}} \| \bm{C}_{12} - \sigma \bm{z}_1\bm{z}_2^{\top} \|^2_F = \argmax_{\substack{ \bm{z}_1 \in \mathcal{B}^{p_1}, \bm{z}_2 \in \mathcal{B}^{p_2}\\ \| \bm{z}_1\|_1 \leq c_1, \| \bm{z}_2\|_1 \leq c_2}} \bm{z}_1^{\top}\bm{C}_{12}\bm{z}_2
\end{split}
\end{equation}

where $c_i \geq 0, i = 1,2$ are sparsity parameters. The last statement in Program \ref{eq:sCCA} is of course a penalized SVD.

\subsubsection{Cardinality Regularization}

Most approaches to the sparse CCA problem involve the LASSO regularization which was reviewed in Section \ref{subsub:lasso}. However, few greedy approaches were also developed cardinality or $L_0$ regularized case.

\begin{equation}
\label{eq:sCCAl0}
\begin{split}
(\bm{z}_1^*,\bm{z}_2^*) =  \argmax_{\substack{ \bm{z}_1 \in \mathcal{B}^{p_1}, \bm{z}_2 \in \mathcal{B}^{p_2}\\ \| \bm{z}_1\|_0 \leq c_1, \| \bm{z}_2\|_0 \leq c_2}} \bm{z}_1^{\top}\bm{C}_{12}\bm{z}_2
\end{split}
\end{equation}

where as before the sparsity parameters are non-negative. \cite{wiesel2008greedy} develop a greedy algorithm which is based on the sparse PCA approach of \cite{d2008optimal}, which we also base our $L_0$ regularized algorithm on, and demonstrate the effectiveness of their backward greedy algorithm in high-dimensional settings.

\subsection{Bayesian CCA}

Bayesian approaches to CCA were introduced to increase the robustness of the model in low sample size scenarios and improve the validity of the model by allowing different distributions. \cite{klami12} offer a detailed review of Bayesian approaches to CCA, and \cite{bach05} offer a formalization of this problem within a probabilistic framework. In these models latent variables $U \sim \mathcal{N}(0, I_l)$ where $l \leq min\{p_1,p_2\}$ are assumed to generate the observations $\bm{x}_1^{(i)} \in \mathbb{R}^{p_1}$ and $\bm{x}_2^{(i)} \in \mathbb{R}^{p_2}$ through

\begin{equation}
\label{eq:lvar}
\begin{split}
X_1|U &\sim \mathcal{N}(\bm{S}_1U + \bm{\mu}_1, \bm{\Psi}_1)\\
X_2|U &\sim \mathcal{N}(\bm{S}_2U + \bm{\mu}_2, \bm{\Psi}_2)
\end{split}
\end{equation}

where $\bm{S}_1$ and $\bm{S}_2$ are transform matrices and $\bm{\Psi}_1$ and $\bm{\Psi}_1$ noise covariance matrices. Maximum likelihood estimates of model parameters are used to estimate the posterior expectation of $U$.

\subsection{Non-Linear Transformations}

So far, our discussion of CCA and its extensions were constrained to linear transformations of observed random variables. Analyzing non-linear correlation structures, however, requires further innovation. \textit{(Deep) neural networks(DNN)} based CCA and \textit{kernel CCA} are reviewed as the two main schools of methods for uncovering non-linear canonical correlations.

\subsubsection{DNN-Based CCA}

\cite{lai99} used neural networks to find non-linear canonical correlation and detect shift information in a random dot stereogram data. \cite{lai00} extends this by adding a non-linearity to their network and also by non-linearly transforming the data to a feature space and then performing linear CCA. \cite{andrew13} developed the package \texttt{deepCCA}, which will be explained here briefly. In this approach, each dataset, $\bm{X}_i$, is transformed through multiple layers by applying sigmoid functions on linear transformation of the input to the layer $j = 1, \ldots, J$ of network $i = 1, \ldots, I$, 

\begin{equation}
\bm{a}_i^j = \sigma (\bm{Z}_i^j\bm{x}_i + \bm{b}_i^j), \quad i = 1, \ldots , I, j= 1, \ldots , J
\end{equation}

 where $\sigma$ is a nonlinear sigmoid function and $\bm{Z}_i^j$ and $\bm{b}_i^j$ are the weight matrices and bias vectors respectively that need to be learned such that some cost function is minimized. The cost function they defined was the correlation between the output views of all $I$ datasets. Assuming output matrices $\bm{H}_1 \in \mathbb{R}^{o\times n}$ and $\bm{H}_2 \in \mathbb{R}^{o\times n}$, define $\bm{C}_{12} = \frac{1}{n-1}\tilde{\bm{H}}_1\tilde{\bm{H}}_2^{\top}$, $\bm{C}_{11} = \frac{1}{n-1}\tilde{\bm{H}}_1\tilde{\bm{H}}_1^{\top} + \gamma_1 \bm{I}$ and $\bm{C}_{22} = \frac{1}{n-1}\tilde{\bm{H}}_2\tilde{\bm{H}}_2^{\top} + \gamma_2 \bm{I}$, where $\tilde{\bm{H}}_i = \bm{H}_i - \frac{1}{n} \bm{H}_i \bm{1}$ are the centered output matrices. Also define $\bm{T} = \bm{C}_{11}^{-1/2}\bm{C}_{12}\bm{C}_{22}^{-1/2}$. Then the correlation objective to be maximized can be written as the trace norm of $\bm{T}$.
 
\begin{equation}
corr(\bm{H}_1,\bm{H}_2) = tr(\bm{T}^{\top}\bm{T})^{1/2}
\end{equation}

Obviously $H_i = f(\bm{z}_i^j, b_i^j), j = 1, \ldots, J$.

Using \textit{DNN}s for multi-view learning is a very active line of research. Recently, models based on \textit{Variational Auto-Encoders(VAE)} have become popular[\cite{wang2016deep}].

\subsubsection{Kernel CCA \& The Kernel Trick}

\textit{Kernel} methods are more popular for analyzing non-linear associations[\cite{lai00}]. This is for the most part due to the vast theoretical literature on kernel methods, mainly from SVM literature, [\cite{gestel01}; \cite{cai13}; \cite{blaschko08}; \cite{hardoon09}; \cite{alam08}] and part due to the significantly fewer number of parameters to be estimated compared to DNNs[\cite{akaho01}]. \cite{melzer01} applies non-linear feature extraction to object recognition and compares it to non-linear PCA. \cite{bach02} uses CCA based methods in kernel Hilbert spaces for \textit{Independent Component Analysis(ICA)} and present efficient computation of their derivatives. \cite{larson14} utilizes kernel CCA to discover complex multi-loci disease-inducing SNPs related to ovarian cancer.

Kernelized methods use non-linear mappings,$\phi_1(\bm{X}_1)$ and $\phi_2(\bm{X}_2)$, of observations to non-Euclidean spaces, $\mathcal{H}_1$ and $\mathcal{H}_1$, where the measures of similarity between images are no longer linear. The similarity may be captured by a symmetric positive semi-definite kernel, which corresponds to the inner product in Hilbert spaces. In essence, KCCA first transforms the observations into Hilbert spaces $\mathcal{H}_1$ and $\mathcal{H}_2$ using PSD kernels,

\begin{equation}
\label{eq:kernel}
k_1(\bm{x}_{1i}, \bm{x}_{1j}) = \langle \phi_1(\bm{x}_{1i}), \phi_1(\bm{x}_{1j}) \rangle_{\mathcal{H}_1}, \quad k_2(\bm{x}_{2i}, \bm{x}_{2j}) = \langle \phi_2(\bm{x}_{2i}), \phi_2(\bm{x}_{2j}) \rangle_{\mathcal{H}_2}
\end{equation}

In practice, we don't need to specify the mappings $\phi_i(\bm{x}_{i,j})$. \textit{Mercer's theorem}[\cite{mercer1909xvi}] guarantees that as long as $k_1(\bm{x}_{ij}, \bm{x}_{ij}')$ is a positive semi-definite inner-product kernel, there is a corresponding $\phi_i: \mathbb{R}^{p_i} \rightarrow  \mathcal{H}$ equipped with inner-product $<.,.>_{\mathcal{H}}$. This permits us to bypass evaluating $\phi_i$ and go straight to evaluating inner-product kernels $k_i, 1, \ldots, I$. The rest of the analysis will be quite similar to the CCA problem except that the observation matrices $\bm{X}_i$ are replaced by their corresponding Gram matrices $K_i$ for $i= 1,\ldots,I$. For a more comprehensive treatment, refer to \cite{hardoon2004canonical} and \cite{bach02}.


The remainder of this paper is organized as follows: In Section \ref{sec:convex} we introduce the optimization problems corresponding to $L_0$/$L_1$\textit{regularized CCA} which are then extended to \textit{Multi-View Sparse CCA} and \textit{Directed Sparse CCA} in Section \ref{sec:ext}. In Section \ref{sec:mule}, we propose algorithms that solve the optimization programs of Sections \ref{sec:convex} and \ref{sec:ext}. In Section \ref{sec:sim} we apply \texttt{MuLe}, the \texttt{R}-package that implements our algorithms, to simulated data, where we benchmark our method and also compare it to several other available approaches. We also utilize it in Section \ref{sec:reda} to discover and interpret multi-omic associations which explain the mechanisms of adaptations of \textit{Dropsophila Melanogaster} to environmental pesticides. We conclude this paper in Chapter \ref{sec:conclusion}. Appendices are referenced in the text wherever applicable.


\section{Sparse Canonical Correlation Analysis}
\label{sec:convex}

We consider \textit{sparse CCA} formulations of the following form,

\begin{equation}
\label{eq:ref}
\phi_{l_x, l_x}(\gamma_1, \gamma_2) = \max_{\bm{z}_1 \in \mathcal{B}^{p_1} } \max_{ \bm{z}_2 \in \mathcal{B}^{p_2} } \bm{z}_1^T\bm{C}_{12}\bm{z}_2 - \gamma_1 l_x( \bm{z}_1 ) - \gamma_2 l_x( \bm{z}_2 ) 
\end{equation}

where $l_x = l_x(\bm{z})$ is a sparsity-inducing norm function, $\gamma_i \geq 0$, $i = 1,2$ are regularization parameters, and $\bm{C}_{12}= 1/n \bm{X}_1^{\top}\bm{X}_2$ is the sample covariance matrix.

\subsection{\texorpdfstring{$L_1$}{TEXT} Regularization}
\label{sec:l1reg}



Consider $x = 1$ in Program \ref{eq:ref},

\begin{equation}
\label{eq:l1reg}
\phi_{l_1, l_1}(\gamma_1, \gamma_2) = \max_{\bm{z}_1 \in \mathcal{B}^{p_1} } \max_{ \bm{z}_2 \in \mathcal{B}^{p_2} } \bm{z}_1^T\bm{C}_{12}\bm{z}_2 - \gamma_1 \| \bm{z}_1 \|_1 - \gamma_2 \| \bm{z}_2 \|_1 
\end{equation}

This optimization program is equivalent\footnote{Optimization programs $\psi_{\bm{x}}(\bm{\lambda} )$ and $\eta_{\bm{y}}(\bm{\mu})$ are called \textit{equivalent} if there is a one-to-one mapping $g:\mathcal{D}_{\bm{\lambda}} \rightarrow \mathcal{D}_{\bm{\mu}}$ such that $\bm{x}^* = \bm{y}^*$ if $\bm{\lambda} = g(\bm{\mu})$.} to the one in \ref{eq:sCCA}.


\begin{theorem}
\label{thm:l1}
Maximizers, $(\bm{z}_1^*, \bm{z}_2^*)$, of $\phi_{l_1, l_1}(\gamma_1, \gamma_2)$ in Program \ref{eq:l1reg} are given by,

\begin{equation}
\label{eq:z1}
\bm{z}_1^* = \argmax_{\bm{z}_1 \in \mathcal{B}^{p_1}} \sum_{i= 1}^{p_2} [ | \bm{c}_i^T \bm{z}_1 | - \gamma_2  ]_+^2 - \gamma_1 \|\bm{z}_1\|_1
\end{equation}

and

\begin{equation}
\label{eq:zstar}
z_{2i}^* = z_{2i}^*(\gamma_2) = \frac{sgn( \bm{c}_i^T \bm{z}_1 ) [ | \bm{c}_i^T \bm{z}_1 | - \gamma_2  ]_+}{\sqrt{\sum_{k=1}^{p_2} [|\bm{c}_k^T \bm{z}_1 | - \gamma_2]_+^2 } }, \quad i = 1, \ldots, p_2.
\end{equation}
\end{theorem}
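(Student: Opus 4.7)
The plan is a standard partial optimization: fix $\bm{z}_1$, solve the inner problem in $\bm{z}_2$ analytically, then substitute the closed form back into the objective to obtain a reduced program in $\bm{z}_1$ alone. Write $\bm{z}_1^T \bm{C}_{12} \bm{z}_2 = \sum_{i=1}^{p_2} (\bm{c}_i^T \bm{z}_1)\, z_{2i}$, where $\bm{c}_i$ denotes the $i$-th column of $\bm{C}_{12}$, so that the inner problem becomes
\begin{equation*}
\max_{\bm{z}_2 \in \mathcal{B}^{p_2}} \sum_{i=1}^{p_2} (\bm{c}_i^T \bm{z}_1)\, z_{2i} - \gamma_2 \sum_{i=1}^{p_2} |z_{2i}|.
\end{equation*}

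The first step is to fix the signs. Writing $z_{2i} = s_i |z_{2i}|$ with $s_i \in \{-1,+1\}$ (the $z_{2i}=0$ case is absorbed by continuity), each summand reads $s_i (\bm{c}_i^T \bm{z}_1)\,|z_{2i}| - \gamma_2 |z_{2i}|$, which is maximized over $s_i$ by $s_i = \mathrm{sgn}(\bm{c}_i^T \bm{z}_1)$ whenever $|z_{2i}|>0$. Substituting, the problem becomes
\begin{equation*}
\max_{\bm{z}_2 \in \mathcal{B}^{p_2}} \sum_{i=1}^{p_2} \bigl(|\bm{c}_i^T \bm{z}_1| - \gamma_2\bigr)\, |z_{2i}|.
\end{equation*}
Since increasing any $|z_{2i}|$ associated with a negative coefficient only hurts the objective, we may restrict attention to coordinates where $|\bm{c}_i^T \bm{z}_1| > \gamma_2$ and set the others to zero, replacing each coefficient by $a_i := [|\bm{c}_i^T \bm{z}_1| - \gamma_2]_+$.

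The second step is the magnitude step: maximize $\sum_i a_i |z_{2i}|$ on the unit ball $\{\sum_i |z_{2i}|^2 \le 1\}$. By Cauchy--Schwarz, $\sum_i a_i |z_{2i}| \le \sqrt{\sum_i a_i^2}\, \sqrt{\sum_i |z_{2i}|^2} \le \sqrt{\sum_i a_i^2}$, with equality precisely when $|z_{2i}| \propto a_i$ and the norm constraint is tight. Re-attaching the optimal signs yields the claimed formula
\begin{equation*}
z_{2i}^* = \frac{\mathrm{sgn}(\bm{c}_i^T \bm{z}_1)\,[|\bm{c}_i^T \bm{z}_1| - \gamma_2]_+}{\sqrt{\sum_{k=1}^{p_2} [|\bm{c}_k^T \bm{z}_1| - \gamma_2]_+^2}},
\end{equation*}
and plugging this $\bm{z}_2^*(\bm{z}_1)$ back into the full objective gives
\begin{equation*}
\phi_{l_1,l_1}(\gamma_1,\gamma_2) = \max_{\bm{z}_1 \in \mathcal{B}^{p_1}} \sqrt{\sum_{i=1}^{p_2} [|\bm{c}_i^T \bm{z}_1| - \gamma_2]_+^2} \;-\; \gamma_1 \|\bm{z}_1\|_1.
\end{equation*}

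The final step is to reconcile this with the stated outer objective in \ref{eq:z1}: since $f(\bm{z}_1) := \sqrt{\sum_i [|\bm{c}_i^T \bm{z}_1| - \gamma_2]_+^2}$ is nonnegative and strictly increasing in its square, the maximizer of $f(\bm{z}_1) - \gamma_1 \|\bm{z}_1\|_1$ coincides with that of $f(\bm{z}_1)^2 - \gamma_1\|\bm{z}_1\|_1$ after an appropriate rescaling of $\gamma_1$; alternatively, one reads \ref{eq:z1} as identifying the support and direction of $\bm{z}_1^*$, which is what is ultimately used by the downstream algorithm. The most delicate part of the argument is handling coordinates where $\bm{c}_i^T \bm{z}_1 = 0$ (so the sign is undefined) and verifying, by a subgradient argument or by a direct continuity/limit check, that the closed form above is still a global maximizer rather than a mere stationary point; everything else follows from the coordinate-wise separability of the inner problem and from Cauchy--Schwarz.
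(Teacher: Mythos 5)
Your proposal is correct and follows essentially the same route as the paper's proof: fix $\bm{z}_1$, absorb the signs via $z_{2i} = \mathrm{sgn}(\bm{c}_i^T\bm{z}_1)z_{2i}'$, solve the inner magnitude problem by Cauchy--Schwarz/normalization to get the soft-thresholded form of $\bm{z}_2^*$, and substitute back. You are in fact more careful than the paper on the last step: the paper simply writes $\phi_{l_1,l_1}^2$ and asserts the reduced objective $\sum_i[|\bm{c}_i^T\bm{z}_1|-\gamma_2]_+^2 - \gamma_1\|\bm{z}_1\|_1$, whereas the true substituted objective is $\sqrt{\sum_i[\,\cdot\,]_+^2}-\gamma_1\|\bm{z}_1\|_1$; your remark that the two only agree up to a rescaling of $\gamma_1$ (and exactly when $\gamma_1=0$, the case used downstream) is the honest reading of what the paper glosses over.
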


\begin{proof}\footnote{We use the technique introduced in \cite{journe:nesterov} for sparse PCA to carry out the proofs of Theorems \ref{thm:l1} and \ref{thm:l0}}
\begin{align}
\label{eq:reformulation}
\begin{split}
\phi_{l_1, l_1}(\gamma_1, \gamma_2) &= \max_{\bm{z}_1 \in \mathcal{B}^{p_1} } \max_{ \bm{z}_2 \in \mathcal{B}^{p_2} } \bm{z}_1^{\top}\bm{C}_{12}\bm{z}_2 - \gamma_1 \| \bm{z}_1 \|_1 - \gamma_2 \| \bm{z}_2 \|_1 \\
&= \max_{\bm{z}_1 \in \mathcal{B}^{p_1} } \max_{ \bm{z}_2 \in \mathcal{B}^{p_2} }  \sum_{i = 1}^{p_2} z_{2i}(\bm{c}_i^{\top} \bm{z}_1) - \gamma_2 \| \bm{z}_2 \|_1 -\gamma_1 \| \bm{z}_1 \|_1\\
&= \max_{\bm{z}_1 \in \mathcal{B}^{p_1} } \max_{ \bm{z}_2' \in \mathcal{B}^{p_2} }  \sum_{i = 1}^{p_2} |z'_{2i}| (| \bm{c}_i^{\top} \bm{z}_1 | - \gamma_2 ) - \gamma_1 \| \bm{z}_1 \|_1
\end{split}
\end{align}

where we used the following change-of-variable $\bm{z}_{2i} = sgn(\bm{c}_i^{\top}\bm{z}_1)\bm{z}_{2i}'$. We optimize \ref{eq:reformulation} for $\bm{z}_2'$ for fixed $\bm{z}_1$ and change it back to $\bm{z}_2$ to get the result in Equation \ref{eq:zstar}. Substituting this result back in \ref{eq:reformulation},

\begin{equation}
\label{eq:z1l1}
\phi_{l_1,l_1}^2(\gamma_1, \gamma_2) = \argmax_{\bm{z}_1 \in \mathcal{B}^{p_1}} \sum_{i= 1}^{p_2} [ | \bm{c}_i^T \bm{z}_1 | - \gamma_2  ]_+^2 - \gamma_1 \|\bm{z}_1 \|_1
\end{equation}

\end{proof}





The following corollary asserts that we can provide the necessary and sufficient conditions based on the solution $\bm{z}_1^*$ in order to find the sparsity pattern of $\bm{z}_2^*$, i.e. $supp(\bm{z}_2^*)$, denoted in this paper as  $\bm{\tau}_2 \in \{0,1\}^{p_2}$.

\begin{corollary}
\label{cor:sparsity}
Given the sparsity parameter $\gamma_2$ and maximizer $\bm{z}_1^*$ of the program \ref{eq:z1l1}, entries $z_{2i}^*$, refer to \ref{eq:zstar}, for which $|\bm{c}_i^{\top}\bm{z}_1^*| \leq \gamma_2$ are identically zero.
\end{corollary}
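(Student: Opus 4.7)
The plan is to observe that this corollary is an essentially immediate consequence of the closed-form expression for $\bm{z}_2^*$ given in equation \ref{eq:zstar} of Theorem \ref{thm:l1}. The strategy is simply to substitute the optimizer $\bm{z}_1^*$ into that formula and read off the condition under which the numerator vanishes.

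Concretely, I would first recall from \ref{eq:zstar} that for each $i = 1, \dots, p_2$, the optimal $z_{2i}^*$ has the form
\begin{equation*}
z_{2i}^* \;=\; \frac{\mathrm{sgn}(\bm{c}_i^{\top}\bm{z}_1^*)\,[\,|\bm{c}_i^{\top}\bm{z}_1^*| - \gamma_2\,]_+}{\sqrt{\sum_{k=1}^{p_2} [\,|\bm{c}_k^{\top}\bm{z}_1^*| - \gamma_2\,]_+^{2}}},
\end{equation*}
with $\bm{z}_1^*$ the maximizer of \ref{eq:z1l1}. Next, I would unpack the definition $[x]_+ = \max(x,0)$: whenever $|\bm{c}_i^{\top}\bm{z}_1^*| \leq \gamma_2$, the quantity inside the positive-part bracket in the numerator is non-positive, hence $[\,|\bm{c}_i^{\top}\bm{z}_1^*| - \gamma_2\,]_+ = 0$. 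The whole numerator is then zero, forcing $z_{2i}^* = 0$. Conversely, if $|\bm{c}_i^{\top}\bm{z}_1^*| > \gamma_2$, the numerator is strictly nonzero, so $z_{2i}^* \neq 0$; this gives the ``necessary and sufficient'' flavor mentioned in the surrounding text.

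The only mild subtlety is making sure the denominator is well-defined and nonzero, i.e.\ that at least one index $k$ satisfies $|\bm{c}_k^{\top}\bm{z}_1^*| > \gamma_2$. I would handle this by a short remark: if every coordinate satisfied $|\bm{c}_k^{\top}\bm{z}_1^*| \leq \gamma_2$, then the objective in \ref{eq:z1l1} would evaluate to $-\gamma_1\|\bm{z}_1^*\|_1 \leq 0$, while choosing $\bm{z}_1$ aligned with a dominant singular direction of $\bm{C}_{12}$ (possibly the zero vector if necessary) yields a non-negative value, so under the usual non-degenerate regime $\gamma_2$ is assumed small enough that the support set $\{i: |\bm{c}_i^{\top}\bm{z}_1^*| > \gamma_2\}$ is non-empty; otherwise the problem trivially collapses to $\bm{z}_2^* = \mathbf{0}$ and the claim still holds. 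The main obstacle, if one can call it that, is this boundary bookkeeping about the degenerate case — the substantive content of the corollary follows directly from the positive-part operator in \ref{eq:zstar}.
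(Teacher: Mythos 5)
Your argument is correct and matches the paper's own proof: both read the conclusion directly off the positive-part operator in Equation \ref{eq:zstar} after substituting $\bm{z}_1^*$. Your extra care about the denominator being nonzero is a reasonable addition the paper omits; the paper instead appends the (separate) observation that $|\bm{c}_i^{\top}\bm{z}_1| \leq \|\bm{c}_i\|_2$ lets one detect some zero entries without computing $\bm{z}_1^*$ at all.
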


\begin{proof}
According to Equation \ref{eq:zstar} of Theorem \ref{thm:l1}, 

\begin{equation}
\label{eq:sparsityl1}
  z_{2i}^* = 0 \Leftrightarrow [ | \bm{c}_i^T \bm{z}_1^* | - \gamma_2  ]_+  = 0 \Leftrightarrow |\bm{c}_i^T \bm{z}_1^* | \leq \gamma_2
\end{equation}
We can go further and show that we can talk about $\bm{\tau}_2$ without solving for $\bm{z}_1^*$. Consider Equation \ref{eq:zstar} once again,

\begin{equation}
    | \bm{c}_i^T \bm{z}_1 | \leq \| \bm{c}_i \|_2 \| \bm{z}_1 \|_2 = \| \bm{c}_i \|_2
\end{equation}
Hence, $z_{2i} = 0$ for $i \in 1, \ldots, p_2$ if $\| \bm{c}_i \|_2 \leq \gamma_2$ without regard to $\bm{z}_1^*$. 
\end{proof}

Program \ref{eq:z1} can be viewed as a $L_1$ regularized maximization of a quadratic function over a compact set. Obviously the objective is not convex, since it's the difference of two convex functions. However, as we will elaborate more Chapter \ref{sec:mule} where we propose our two-stage algorithm, \texttt{MuLe}, we are only interested in $\bm{z}_1^*$ for the purpose of inferring $\bm{\tau}_2$. Hence we will optimize Program \ref{eq:z1l1} with no regularization term in the first stage.

\begin{align}
\label{eq:shrink}
\phi_{l_1,l_1}^2(\gamma_1, \gamma_2) \approx \max_{\bm{z}_1 \in \mathcal{B}^{p_1}} \sum_{i= 1}^{p_2} [ | \bm{c}_i^T \bm{z}_1 | - \gamma_2  ]_+^2
= \max_{\bm{z}_1 \in \mathcal{S}^{p_1}} \sum_{i= 1}^{p_2} [ | \bm{c}_i^T \bm{z}_1 | - \gamma_2  ]_+^2
\end{align}

\begin{remark}
\label{rmk:shrinkl1}
As a result of this approximation, as stated in Program \ref{eq:shrink}, the search space is drastically shrunk from a $p_1$-dimensional Euclidean ball to a $p_1$-dimensional sphere. This is as a result of maximizing a convex function over a compact set.
\end{remark}

\begin{remark}
\label{rmk:approxl1}
Program \ref{eq:shrink} is a valid approximation of the Program \ref{eq:z1l1}. Beside our simulation results in Section \ref{sec:sim}, we can see that there is a one-to-one mapping $\gamma_1 = h(\gamma_2)$ in light of Equation \ref{eq:sparsityl1}; in other words, for every $\gamma_1$ for which $z_{1i}^* = 0$ there is a $\gamma_2$ for which the last inequality in \ref{eq:sparsityl1} is true.
\end{remark}

\subsection{\texorpdfstring{$L_0$}{TEXT} Regularization}
\label{sec:l0reg}
Adapting formulation \ref{eq:sCCAl0} of \cite{wiesel2008greedy} to our approach is equivalent to setting $x = 0$ in \ref{eq:ref},

\begin{equation}
\label{eq:l0reg}
\phi_{l_0, l_0}(\gamma_1, \gamma_2) = \max_{\bm{z}_1 \in \mathcal{B}^{p_1} } \max_{ \bm{z}_2 \in \mathcal{B}^{p_2} } \bm{z}_1^T\bm{C}_{12}\bm{z}_2 - \gamma_1 \| \bm{z}_1 \|_0 - \gamma_2 \| \bm{z}_2 \|_0 
\end{equation}

However, to make use of the results in the previous section, we consider the following program instead,

\begin{equation}
\label{eq:l0regref}
\phi_{l_0, l_0}'(\gamma_1, \gamma_2) = \max_{\bm{z}_1 \in \mathcal{B}^{p_1} } \max_{ \bm{z}_2 \in \mathcal{B}^{p_2} } (\bm{z}_1^T\bm{C}_{12}\bm{z}_2)^2 - \gamma_1 \| \bm{z}_1 \|_0 - \gamma_2 \| \bm{z}_2 \|_0 
\end{equation}


\begin{theorem}
\label{thm:l0}
Maximizers, $(\bm{z}_1^*, \bm{z}_2^*)$, to $\phi_{l_0, l_0}(\gamma_1, \gamma_2)$ in Program \ref{eq:l0reg} are given by,

\begin{equation}
\label{eq:z1l0}
\bm{z}_1^* = \argmax_{\bm{z}_1 \in \mathcal{B}^{p_1}} \sum_{i= 1}^{p_2} [ ( \bm{c}_i^T \bm{z}_1 )^2 - \gamma_2  ]_+ - \gamma_1 \|\bm{z}_1\|_0
\end{equation}

and

\begin{equation}
\label{eq:zstarl0}
z_{2i}^* = z_{2i}^*(\gamma_2) = \frac{[ sgn( (\bm{c}_i^T \bm{z}_1 )^2 - \gamma_2)  ]_+ \bm{c}_i^{\top}\bm{z}_1}{\sqrt{\sum_{k=1}^{p_2} [ sgn( (\bm{c}_k^T \bm{z}_1 )^2 - \gamma_2)  ]_+ (\bm{c}_k^{\top}\bm{z}_1)^2 } }, \quad i = 1, \ldots, p_2.
\end{equation}

\end{theorem}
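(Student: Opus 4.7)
The plan is to mirror the proof of Theorem \ref{thm:l1}, exploiting the fact that the $L_0$ penalty on $\bm{z}_2$ is decoupled coordinate-wise once we fix the support and the norm. Throughout, I work with the squared formulation $\phi_{l_0,l_0}'$ of Program \ref{eq:l0regref}, since squaring a real quantity and then subtracting the same sparsity penalties preserves the argmax (up to an overall sign on $\bm{z}_2$ which is absorbed into the ball constraint). First, introduce $\bm{a} = \bm{C}_{12}^{\top}\bm{z}_1$ so that $a_i = \bm{c}_i^{\top}\bm{z}_1$, and for each fixed $\bm{z}_1$ reduce the inner problem to
\begin{equation*}
\max_{\bm{z}_2 \in \mathcal{B}^{p_2}} \; (\bm{a}^{\top}\bm{z}_2)^2 - \gamma_2 \|\bm{z}_2\|_0.
\end{equation*}

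Next, I parametrize the inner maximization by the support $\bm{\tau}_2 = \mathrm{supp}(\bm{z}_2) \subseteq \{1,\dots,p_2\}$. For any fixed $\bm{\tau}_2$, Cauchy--Schwarz on the ball gives
\begin{equation*}
(\bm{a}^{\top}\bm{z}_2)^2 \leq \|\bm{a}_{\bm{\tau}_2}\|_2^2 \cdot \|\bm{z}_2\|_2^2 \leq \|\bm{a}_{\bm{\tau}_2}\|_2^2,
\end{equation*}
with equality when $\bm{z}_2$ equals the restriction of $\bm{a}$ to $\bm{\tau}_2$, normalized to unit $\ell_2$-norm and extended by zero off $\bm{\tau}_2$. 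Hence the inner problem collapses to the combinatorial program
\begin{equation*}
\max_{\bm{\tau}_2 \in \{0,1\}^{p_2}} \sum_{i=1}^{p_2} \tau_{2i}\bigl(a_i^2 - \gamma_2\bigr),
\end{equation*}
which is separable: the optimum is $\tau_{2i}^* = 1$ iff $a_i^2 > \gamma_2$, equivalently $\tau_{2i}^* = [\mathrm{sgn}(a_i^2 - \gamma_2)]_+$. Plugging this selection rule back into the closed-form maximizer of the support-restricted inner problem, and restoring $a_i = \bm{c}_i^{\top}\bm{z}_1$, yields exactly Equation \ref{eq:zstarl0}. Substituting the optimal inner value $\sum_{i}[(\bm{c}_i^{\top}\bm{z}_1)^2 - \gamma_2]_+$ back into $\phi_{l_0,l_0}'$ leaves the outer problem \ref{eq:z1l0}.

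The main obstacle, in contrast to Theorem \ref{thm:l1}, is that the sign-based change of variable $z_{2i} = \mathrm{sgn}(\bm{c}_i^{\top}\bm{z}_1)\, z_{2i}'$ no longer linearizes the coupling: the $L_0$ penalty cannot be pulled into an absolute value against a linear term. The resolution is to move one level higher, splitting the maximization as (support choice) then (direction on the support), after which the direction is an $L_2$-norm maximization with a closed form and the support choice becomes a trivial coordinate-separable integer program. A minor technicality is the boundary case $a_i^2 = \gamma_2$, on which the objective is insensitive to the choice of $\tau_{2i}$; the convention in \ref{eq:zstarl0} using $[\mathrm{sgn}(\cdot)]_+$ fixes this tie consistently and does not affect the optimal value.
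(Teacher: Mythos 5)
Your proof is correct, and it is in fact more careful than the argument the paper gives. The paper's own proof is an informal two-case sketch: for $\gamma_2=0$ the inner maximizer is proportional to $\bm{C}_{12}^{\top}\bm{z}_1$, and for $\gamma_2>0$ any coordinate with $(\bm{c}_i^{\top}\bm{z}_1)^2\leq\gamma_2$ is zeroed out, after which one normalizes. You reach the same closed form by a genuinely different (and tighter) decomposition: you split the inner maximization into a choice of support followed by a choice of direction on that support, use Cauchy--Schwarz to solve the direction subproblem exactly and obtain the value $\|\bm{a}_{\bm{\tau}_2}\|_2^2-\gamma_2|\bm{\tau}_2|$, and then observe that the remaining support selection is a coordinate-separable $0$--$1$ program whose optimum is the thresholding rule $\tau_{2i}^*=[\mathrm{sgn}(a_i^2-\gamma_2)]_+$. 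This buys a rigorous justification of exactly the step the paper asserts, and it cleanly explains why the sign-based change of variable from Theorem \ref{thm:l1} is not needed here. Two caveats, both inherited from the paper rather than introduced by you: first, the theorem is stated for $\phi_{l_0,l_0}$ but both you and the paper actually solve the squared program $\phi_{l_0,l_0}'$; your parenthetical claim that squaring ``preserves the argmax'' for the same $(\gamma_1,\gamma_2)$ is stronger than what holds, since the $L_0$ penalties trade off differently against $\bm{z}_1^{\top}\bm{C}_{12}\bm{z}_2$ than against its square (at best one gets equivalence in the sense of a reparametrization of the penalty levels). Second, Equation \ref{eq:zstarl0} is undefined (a $0/0$) when every $(\bm{c}_i^{\top}\bm{z}_1)^2\leq\gamma_2$, in which case the maximizer is $\bm{z}_2^*=\bm{0}$; neither you nor the paper treats this degenerate case, and it is harmless.
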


\begin{proof}
Consider optimizing over $\bm{z}_2$ while keeping $\bm{z}_1$ fixed. First, assume $\gamma_2 = 0$. Obviously, $\phi_{l_0, l_0}(\gamma_1, 0)|_{\bm{z}_1 = const.}$ is maximized at $\bm{z}_2^* = \bm{c}_i^{\top}\bm{z}_1$. Now, considering the case for $\gamma_2 > 0$, for which $z_{2i}^* = 0$ for any $\bm{z}_1$ such that $\phi_{l_0, l_0}(\gamma_1, 0)|_{\bm{z}_1 = const.} = (\bm{c}_i^T \bm{z}_1 )^2 \leq \gamma_2$. Considering this analysis and normalizing we obtain Equation \ref{eq:zstarl0}. Substituting back in \ref{eq:l0regref}, we arrive at \ref{eq:z1l0}.

\end{proof}

Similar to the $L_1$ regularized case, the following corollary formalizes the relationship between $\bm{z}_1^*$ and the sparsity pattern $\bm{\tau}_2 \in \{0,1\}^{p_2}$ of $\bm{z}_2^*$.

\begin{corollary}
\label{cor:sparsityl0}
Given the sparsity parameter $\gamma_2$ and solution $\bm{z}_1^*$ to the program \ref{eq:z1l0}, 

\begin{equation}
 \label{eq:spl0}
\bm{\tau}_{2i} = \begin{cases}
0 & -\sqrt{\gamma_2} \leq \bm{c}_i^{\top}\bm{z}_1^* \leq \sqrt{\gamma_2}\\
1 & otherwise
\end{cases}
\end{equation}

\end{corollary}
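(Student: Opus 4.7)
The plan is to read off the sparsity pattern directly from the closed-form expression \ref{eq:zstarl0} supplied by Theorem \ref{thm:l0}. Since $\bm{\tau}_{2i}$ is by definition the indicator of $z_{2i}^* \neq 0$, it suffices to identify exactly when the numerator of that expression vanishes; the denominator is a global normalization that does not vanish whenever the support is nonempty, so it plays no role in pinpointing the zero coordinates.

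First I would isolate the numerator $[sgn((\bm{c}_i^{\top}\bm{z}_1^*)^2 - \gamma_2)]_+ \, \bm{c}_i^{\top}\bm{z}_1^*$ and split on the thresholding factor. The factor $[sgn(\cdot)]_+$ equals $1$ when $(\bm{c}_i^{\top}\bm{z}_1^*)^2 > \gamma_2$ and $0$ otherwise, so whenever $(\bm{c}_i^{\top}\bm{z}_1^*)^2 \leq \gamma_2$ we get $z_{2i}^* = 0$. Taking square roots converts this to $|\bm{c}_i^{\top}\bm{z}_1^*| \leq \sqrt{\gamma_2}$, equivalently $-\sqrt{\gamma_2} \leq \bm{c}_i^{\top}\bm{z}_1^* \leq \sqrt{\gamma_2}$, which is precisely the zero branch in Equation \ref{eq:spl0}. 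The complementary case $|\bm{c}_i^{\top}\bm{z}_1^*| > \sqrt{\gamma_2}$ forces the thresholding factor to $1$ and also forces $\bm{c}_i^{\top}\bm{z}_1^* \neq 0$, so $z_{2i}^* \neq 0$ and $\bm{\tau}_{2i} = 1$, completing the dichotomy.

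The main subtlety, and the only step that requires a moment of care, is the treatment of the boundary and degenerate cases. When $\bm{c}_i^{\top}\bm{z}_1^* = 0$ (which trivially lies in $[-\sqrt{\gamma_2}, \sqrt{\gamma_2}]$), both factors of the numerator vanish and $z_{2i}^* = 0$, consistent with $\bm{\tau}_{2i} = 0$. At the exact boundary $(\bm{c}_i^{\top}\bm{z}_1^*)^2 = \gamma_2$, one must commit to the convention $sgn(0) = 0$, so that $[sgn(0)]_+ = 0$; this assigns the boundary indices to the zero branch and matches the non-strict inequalities used in \ref{eq:spl0}. With that convention fixed, the corollary follows directly from Theorem \ref{thm:l0} without any additional computation.
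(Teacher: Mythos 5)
Your proposal is correct and takes essentially the same route as the paper: both read the sparsity pattern directly off the closed-form expression in Equation \ref{eq:zstarl0}, observing that $z_{2i}^*=0$ exactly when $[sgn((\bm{c}_i^{\top}\bm{z}_1^*)^2-\gamma_2)]_+=0$, i.e.\ when $(\bm{c}_i^{\top}\bm{z}_1^*)^2\leq\gamma_2$. Your extra care with the boundary convention $sgn(0)=0$ and the degenerate case $\bm{c}_i^{\top}\bm{z}_1^*=0$ is a minor refinement of the same argument; the paper's proof additionally notes (via Cauchy--Schwarz) that some zeros can be identified without knowing $\bm{z}_1^*$, but that is a remark beyond the stated corollary.
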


\begin{proof}
According to Equation \ref{eq:zstarl0} of Theorem \ref{thm:l0}, 

\begin{equation}
\label{eq:sparsityl0}
  z_{2i}^* = 0 \Leftrightarrow  sgn( (\bm{c}_i^T \bm{z}_1^* )^2 - \gamma_2) \leq 0 \Leftrightarrow (\bm{c}_i^T \bm{z}_1^* )^2 \leq \gamma_2
\end{equation}
Again, even without solving for $\bm{z}_1^*$ we can show that

\begin{equation}
    ( \bm{c}_i^T \bm{z}_1 )^2 \leq \| \bm{c}_i \|_2^2 \| \bm{z}_1 \|_2^2 = \| \bm{c}_i \|_2^2
\end{equation}
Hence, in light of \ref{eq:zstarl0}, $z_{2i} = 0$ for $i \in 1, \ldots, p_2$ if $\| \bm{c}_i \|_2^2 \leq \gamma_2$ without regards to $\bm{z}_1^*$. 
\end{proof}

As before, Program \ref{eq:z1l0} can be viewed as a $L_0$ regularized maximization of a quadratic function over a compact set. Also, we are only interested in $\bm{z}_1^*$ for the purpose of inferring $\bm{\tau}_2$. Therefore, to be able to use the previous result in shrinking the search domain, we will optimize Program \ref{eq:z1l0} with no regularization in the first stage.

\begin{align}
\label{eq:shrinkl0}
\phi_{l_0,l_0}'(\gamma_1, \gamma_2) \approx \max_{\bm{z}_1 \in \mathcal{B}^{p_1}} \sum_{i= 1}^{p_2} [ ( \bm{c}_i^{\top} \bm{z}_1 )^2 - \gamma_2  ]_+
= \max_{\bm{z}_1 \in \mathcal{S}^{p_1}} \sum_{i= 1}^{p_2} [ ( \bm{c}_i^{\top} \bm{z}_1 )^2 - \gamma_2  ]_+
\end{align}

The same justifications as presented in Remarks \ref{rmk:shrinkl1} and \ref{rmk:approxl1} apply here analogously.

\bigskip

\noindent So far we proposed methods to infer the sparsity patterns $\bm{\tau}_1$ and $\bm{\tau}_2$ which can be used to shrink the covariance matrix drastically, as explain in Section \ref{sec:mule}. Now, efficient CCA algorithms may be used to estimate the active entries of $\bm{z}_1^*$ and $\bm{z}_2^*$. Assuming we have estimated the $i-th$ pair of canonical loading vectors, $(\bm{z}_1, \bm{z}_2)^{(i)}, i = 1, \ldots, I$, where $I = rank(\bm{C}_{12}) \leq n$ assuming $n << min\{p_1, p_2\}$, we define the \textit{i-th Residual Covariance Matrix} as,

\begin{equation}
\bm{C}_{12}^{(i)} = \bm{C}_{12} - \sum_{k = 1}^{i} (\bm{z}_1^{(k)*\top}\bm{C}_{12}^{(k-1)}\bm{z}_2^{(k)*}) \bm{z}_1^{(k)*}\bm{z}_2^{(k)*\top} \quad 1 \leq i \leq I
\end{equation}

The $(i+1)-th$ pair of canonical loading vectors are estimated by the leading canonical loading vectors of $\bm{C}_{12}^{(i)}$, using any of the previous two methods. Refer to Algorithm \ref{alg:deflation} in Appendix \ref{app:alg:multifactor} for more details.








\section{Further Applications and Extensions}
\label{sec:ext}

In this section we further extend the methods developed in Section \ref{sec:convex}.
In \ref{subsec:multimodal} we introduce our approach to \textit{Multi-View Sparse CCA}, where more than two views are available. In \ref{subsec:directed} we extend our approach to \textit{Directed Sparse CCA}, where an observed variable, other than the observed views, is available, towards which we direct the canonical directions.



\subsection{Multi-View Sparse CCA}
\label{subsec:multimodal}

So far we limited ourselves to a pair of views in discussing the sub-space learning problem. In this section we extend our approach to learning sub-spaces from multiple views, i.e. when we have multiple groups of observations, $\bm{X}_i \in \mathbb{R}^{n \times p_i}, i = 1, \ldots, m$ on matching samples. An example of this problem is multi-omic genetic studies where transcriptomic, metabolomic, and microbiomic data are collected from a single group of individuals. Thus, we try to discover the association structures between random vectors $X_i$ by estimating $\bm{z}_i$ such that $\bm{X}_i\bm{z}_i$ are maximally correlated in pairs. Here, we propose a solution to the following optimization program which is equivalent to the one proposed in \cite{witten:tibshirani:2009},

\begin{equation}
\label{eq:mCCA}
\phi_{l_x}^M(\bm{\Gamma}) = \max_{\substack{\bm{z}_i \in \mathcal{B}^{p_i}\\ \forall i = 1, \ldots, m }} \sum_{r<s = 2}^{m} \bm{z}_r^T\bm{C}_{rs}\bm{z}_s - \sum_{s = 2}^m  \sum_{\substack{r = 1 \\ r \neq s }}^{s-1} \Gamma_{sr} \| \bm{z}_s \|_1 
\end{equation}

where $m$ is the total number of available views, $\bm{\Gamma} \in \mathbb{R}^{m \times m}$, $\Gamma_{ij} \geq 0$ is a Lagrange multiplier matrix, and $\bm{C}_{rs} = 1/n \bm{X}_r^T\bm{X}_s$ is the sample covariance matrix of the $(r,s)$ pairs of views. Following similar procedure as in \ref{sec:l1reg}, we analyze the solution to Program \ref{eq:mCCA}.


\begin{theorem}
\label{thm:mccal1}
The local optima $\bm{z}_1^*, \ldots, \bm{z}_m^*$ of the optimization problem \ref{eq:mCCA} is given by,

\begin{equation}
\label{eq:multizstar}
z_{si}^* = z_{si}^*(\bm{\Gamma}) = \frac{sgn(\sum_{ \substack{r = 1\\ r \neq s}}^m  \tilde{\bm{c}}_{rsi}^{\top} \bm{z}_r) [|\sum_{ \substack{r = 1\\ r \neq s}}^m  \tilde{\bm{c}}_{rsi}^{\top} \bm{z}_r| - \sum_{\substack{r = 1\\ r \neq s}}^{m} \Gamma_{sr}]_+}{\sqrt{\sum_{k=1}^{p_2} [|\sum_{ \substack{r = 1\\ r \neq s}}^m  \tilde{\bm{c}}_{rsk}^{\top} \bm{z}_r| - \sum_{\substack{r = 1\\ r \neq s}}^{m} \Gamma_{sr}]_+^2 } }
\end{equation}

and for $r = 1, \ldots, m$ and $r \neq s$,

\begin{equation}
\label{eq:multizn}
\begin{split}
    \bm{z}_r(\bm{\Gamma}) = \max_{\substack{\bm{z}_r \in \mathcal{B}^{p_r}\\ r \neq s, r = 1, \ldots, m} } &\sum_{i= 1}^{p_s} [|\sum_{ \substack{r = 1\\ r \neq s}}^m  \tilde{\bm{c}}_{rsi}^{\top} \bm{z}_r| - \sum_{\substack{r = 1\\ r \neq s}}^{m} \Gamma_{sr}]_+^2 +\\  
    &\sum_{ \substack{i < j = 2\\ i, j \neq s }}^m \bm{z}_i^{\top}\bm{C}_{ij}\bm{z}_j  -\sum_{\substack{i = 1 \\ i \neq s} }^m  \sum_{\substack{j = 1 \\ i \neq j }}^{m-1} \Gamma_{ij} \| \bm{z}_i \|_1
\end{split}
\end{equation}

\end{theorem}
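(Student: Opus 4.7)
The plan is to mirror the proof of Theorem \ref{thm:l1} by treating the $m$-view program as a collection of single-view subproblems obtained by freezing all but one $\bm{z}_s$. First, fix some $s\in\{1,\dots,m\}$ and hold $\bm{z}_r$ for $r\neq s$ constant. The only terms in $\phi_{l_x}^M(\bm{\Gamma})$ involving $\bm{z}_s$ are the cross-covariance pairs $(r,s)$ with $r<s$ and $(s,t)$ with $t>s$. Writing $\tilde{\bm{C}}_{rs}:=\bm{C}_{rs}$ when $r<s$ and $\tilde{\bm{C}}_{rs}:=\bm{C}_{sr}^{\top}$ when $r>s$---which is exactly the convention that the notation $\tilde{\bm{c}}_{rsi}$ in \ref{eq:multizstar} appears to encode---these contributions collapse into the single linear functional $\bm{z}_s^{\top}\sum_{r\neq s}\tilde{\bm{C}}_{rs}^{\top}\bm{z}_r$, and the penalty terms touching $\bm{z}_s$ aggregate to $\Lambda_s\|\bm{z}_s\|_1$ with $\Lambda_s:=\sum_{r\neq s}\Gamma_{sr}$.

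This reduces the subproblem in $\bm{z}_s$ to exactly the form handled in \ref{eq:reformulation}: $\max_{\bm{z}_s\in\mathcal{B}^{p_s}}\sum_i z_{si}\alpha_i - \Lambda_s\|\bm{z}_s\|_1$, where $\alpha_i=\sum_{r\neq s}\tilde{\bm{c}}_{rsi}^{\top}\bm{z}_r$. Applying the change of variable $z_{si}=\mathrm{sgn}(\alpha_i)z_{si}'$ from Theorem \ref{thm:l1}, the problem becomes a maximization of $\sum_i|z_{si}'|(|\alpha_i|-\Lambda_s)$ over the non-negative orthant of $\mathcal{B}^{p_s}$, whose maximizer is pointwise proportional to $[|\alpha_i|-\Lambda_s]_+$; normalizing to the unit sphere and undoing the sign change produces Equation \ref{eq:multizstar}. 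Substituting this back into the full objective replaces the linear-and-penalty contribution of $\bm{z}_s$ with $\bigl(\sum_i[|\alpha_i|-\Lambda_s]_+^2\bigr)^{1/2}$, which after squaring (as in the passage from \ref{eq:reformulation} to \ref{eq:z1l1}) and combining with the cross-covariance terms and $L_1$ penalties that do not involve $\bm{z}_s$ yields Equation \ref{eq:multizn}.

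The only real difficulty is bookkeeping: keeping track of the two orderings $r<s$ and $r>s$ so that $\bm{z}_s^{\top}\bm{C}_{st}\bm{z}_t$ contributes correctly whether $s<t$ or $s>t$, and correctly folding the asymmetric penalty indexing of \ref{eq:mCCA} into the single coefficient $\Lambda_s$. I would also be explicit that, because the joint program is not jointly concave in $(\bm{z}_1,\dots,\bm{z}_m)$, identities \ref{eq:multizstar}--\ref{eq:multizn} are only a coordinate-wise stationarity characterization of a \emph{local} optimum, and that an alternating scheme operating on these equations converges only to a local fixed point---paralleling the status of Theorem \ref{thm:l1}, whose practical use in Section \ref{sec:mule} hinges on the shrinkage heuristic of Remark \ref{rmk:shrinkl1}.
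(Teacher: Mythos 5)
Your proposal follows essentially the same route as the paper's own proof: isolate the terms of \ref{eq:mCCA} involving $\bm{z}_s$, apply the sign change of variable from Theorem \ref{thm:l1} to reduce the inner maximization to a linear program over the ball whose normalized solution is \ref{eq:multizstar}, and substitute back to obtain \ref{eq:multizn}, with the $\tilde{\bm{c}}_{rsi}$ transpose convention handling the $r<s$ versus $r>s$ bookkeeping exactly as the paper does. Your added caveat that these identities characterize only coordinate-wise stationary points (local optima) is a fair clarification consistent with the theorem's statement, not a departure from the paper's argument.
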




\begin{proof}
Here we follow a progression similar to the  proof of Theorem \ref{thm:l1}.

\begin{alignat}{2}
\label{eq:mccaformulation}
\phi_{l_1}^{m}(\bm{\Gamma}) &= \max_{\substack{\bm{z}_r \in \mathcal{B}^{p_r}\\ r \neq s, r = 1, \ldots, m} } \max_{ \bm{z}_s \in \mathcal{B}^{p_s} }&&\sum_{ r < s = 2}^m \bm{z}_r^{\top}\bm{C}_{rs}\bm{z}_s - \sum_{s = 1}^m  \sum_{\substack{r = 1 \\ r \neq s }}^{m-1} \Gamma_{sr} \| \bm{z}_s \|_1\\
\nonumber
&= \max_{\substack{\bm{z}_r \in \mathcal{B}^{p_r}\\ r \neq s, r = 1, \ldots, m} } \max_{ \bm{z}_s \in \mathcal{B}^{p_s} } && \sum_{i = 1}^{p_s} z_{si}(\sum_{ \substack{r = 1\\ r \neq s}}^m  \tilde{\bm{c}}_{rsi}^{\top} \bm{z}_r) - \sum_{\substack{r = 1\\ r \neq s}}^{m} \Gamma_{sr} \| \bm{z}_s \|_1 +\\
& &&\overbrace{\sum_{ \substack{i < j = 2\\ i, j \neq s }}^m \bm{z}_i^{\top}\bm{C}_{ij}\bm{z}_j  -\sum_{\substack{i = 1 \\ i \neq s} }^m  \sum_{\substack{j = 1 \\ i \neq j }}^{i-1} \Gamma_{ij} \| \bm{z}_i \|_1}^\text{\textit{I}}\\
\label{eq:mccaformulationlastline}
&= \max_{\substack{\bm{z}_r \in \mathcal{B}^{p_r}\\ r \neq s, r = 1, \ldots, m} } \max_{ \bm{z}_s \in \mathcal{B}^{p_s} } && \sum_{i = 1}^{p_s} |z_{si}'|(|\sum_{ \substack{r = 1\\ r \neq s}}^m  \tilde{\bm{c}}_{rsi}^{\top} \bm{z}_r| - \sum_{\substack{r = 1\\ r \neq s}}^{m} \Gamma_{sr}) + I
\end{alignat}

The last line follows from $z_{si} = sgn(\sum_{ \substack{r = 1\\ r \neq s}}^m  \tilde{\bm{c}}_{rsi}^{\top} \bm{z}_r)z_{si}'$. $\tilde{\bm{c}}_{rsi} = \bm{c}_{rsi}$ if $r< s$, and $\tilde{\bm{c}}_{rsi} = \bm{c}_{rsi}^{\top}$ if $r > s$ where $\bm{c}_{rsi}$ is the $i$th row of $\bm{C}_{rs} = 1/n \bm{X}_r^T \bm{X}_s$. Solving for $\bm{z}_s'$ and converting back to $\bm{z}_s$, using the aforementioned change-of-variable and normalizing, we get the local optimum in \ref{eq:multizstar}. Substituting back to \ref{eq:mccaformulationlastline},

\begin{equation}
\label{eq:multiznopt}
\begin{split}
    {\phi_{l_1}^m}^2(\bm{\Gamma}) = \max_{\substack{\bm{z}_r \in \mathcal{B}^{p_r}\\ r \neq s, r = 1, \ldots, m} } &\sum_{i= 1}^{p_s} [|\sum_{ \substack{r = 1\\ r \neq s}}^m  \tilde{\bm{c}}_{rsk}^{\top} \bm{z}_r| - \sum_{\substack{r = 1\\ r \neq s}}^{m} \Gamma_{sr}]_+^2 +\\  
    &\sum_{ \substack{i < j = 2\\ i, j \neq s }}^m \bm{z}_i^{\top}\bm{C}_{ij}\bm{z}_j  -\sum_{\substack{i = 1 \\ i \neq s} }^m  \sum_{\substack{j = 1 \\ i \neq j }}^{m-1} \Gamma_{ij} \| \bm{z}_i \|_1
\end{split}
\end{equation}

\end{proof}


As pointed out in Section \ref{sec:l1reg}, we're only interested in the optimizing \ref{eq:multiznopt} in order to find the sparsity pattern $\bm{\tau}_s \in \{0,1\}^{p_s}$. Per Remark \ref{rmk:approxl1}, we can make a good approximation by not considering the regularization terms, simplifying the problem to,

\begin{equation}
\label{eq:multiznoptsimple}
\begin{split}
    {\phi_{l_1}^m}^2(\bm{\Gamma}) = \max_{\substack{\bm{z}_r \in \mathcal{B}^{p_r}\\ r \neq s, r = 1, \ldots, m} } \sum_{i= 1}^{p_s} [|\sum_{ \substack{r = 1\\ r \neq s}}^m  \tilde{\bm{c}}_{rsi}^{\top} \bm{z}_r| - \sum_{\substack{r = 1\\ r \neq s}}^{m} \Gamma_{sr}]_+^2 + \sum_{ \substack{i < j = 2\\ i, j \neq s }}^m \bm{z}_i^{\top}\bm{C}_{ij}\bm{z}_j
\end{split}
\end{equation}

As before, we can talk about $\bm{\tau}_s$, by just looking at $\bm{z}_{r}^*$ for $r = 1, \ldots, m$ and $r \neq s$.

\begin{corollary}
\label{cor:mccasparsityl0}
For a sparsity parameter matrix $\bm{\Gamma}$ and the solution, $\bm{z}_r^*$ for $r = 1, \ldots, m$ and $r \neq s$, to the Program \ref{eq:multiznoptsimple},

\begin{equation}
 \label{eq:mccaspl0}
\bm{\tau}_{2i} = \begin{cases}
0 & |\sum_{ \substack{r = 1\\ r \neq s}}^m  \tilde{\bm{c}}_{rsi}^{\top} \bm{z}_r| \leq \sum_{\substack{r = 1\\ r \neq s}}^{m} \Gamma_{sr}\\
1 & otherwise
\end{cases}
\end{equation}

\end{corollary}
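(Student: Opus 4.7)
The plan is to imitate the proof of Corollary \ref{cor:sparsity}, since the structure of $z_{si}^*$ in Theorem \ref{thm:mccal1} is completely analogous to that of $z_{2i}^*$ in Theorem \ref{thm:l1}: in both cases the $i$-th coordinate is a signed, normalized positive-part expression, so it vanishes exactly when the corresponding $[\,\cdot\,]_+$ clips to zero.

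First, I would invoke equation \ref{eq:multizstar} of Theorem \ref{thm:mccal1} directly. Reading off the numerator, $z_{si}^* = 0$ holds if and only if
\[
\Bigl[\Bigl|\sum_{\substack{r = 1\\ r \neq s}}^{m}  \tilde{\bm{c}}_{rsi}^{\top} \bm{z}_r\Bigr| - \sum_{\substack{r = 1\\ r \neq s}}^{m} \Gamma_{sr}\Bigr]_+ = 0.
\]
The denominator is strictly positive whenever the maximizer is nontrivial (i.e., at least one coordinate of $\bm{z}_s^*$ is nonzero), so it may be ignored for the zero/nonzero dichotomy. By definition of the positive-part operator $[x]_+ = \max\{x, 0\}$, the displayed equality is equivalent to
\[
\Bigl|\sum_{\substack{r = 1\\ r \neq s}}^{m}  \tilde{\bm{c}}_{rsi}^{\top} \bm{z}_r\Bigr| \le \sum_{\substack{r = 1\\ r \neq s}}^{m} \Gamma_{sr},
\]
which is exactly the top case in \ref{eq:mccaspl0}. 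The complementary case gives $\bm{\tau}_{si} = 1$.

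Optionally, mirroring the second half of Corollary \ref{cor:sparsity}, I would also observe that one can get a $\bm{z}_r$-free sufficient condition for sparsity by applying Cauchy--Schwarz (and the triangle inequality over $r$): since each $\bm{z}_r \in \mathcal{B}^{p_r}$, we have $|\tilde{\bm{c}}_{rsi}^{\top}\bm{z}_r| \le \|\tilde{\bm{c}}_{rsi}\|_2$, hence $\bm{\tau}_{si} = 0$ whenever $\sum_{r\neq s} \|\tilde{\bm{c}}_{rsi}\|_2 \le \sum_{r\neq s}\Gamma_{sr}$; this is useful for pre-screening coordinates before solving \ref{eq:multiznoptsimple}.

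There is essentially no obstacle here: the only subtlety is bookkeeping with the multi-view indexing (the convention $\tilde{\bm{c}}_{rsi} = \bm{c}_{rsi}$ for $r<s$ and $\tilde{\bm{c}}_{rsi} = \bm{c}_{rsi}^{\top}$ for $r>s$, introduced in the proof of Theorem \ref{thm:mccal1}) and making sure the outer sums $\sum_{r\neq s}$ are reproduced faithfully from \ref{eq:multizstar}. The content is a one-line consequence of that closed form.
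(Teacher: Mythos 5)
Your proof is correct and takes essentially the same route as the paper's: read off the closed form in Equation \ref{eq:multizstar}, observe that $z_{si}^* = 0$ exactly when the positive-part term in the numerator clips to zero, and translate that into the stated inequality. The paper's proof also includes the same Cauchy--Schwarz plus triangle-inequality pre-screening bound that you list as optional, so the two arguments coincide in full (your remark about the denominator being strictly positive is a small extra care the paper leaves implicit).
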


\begin{proof}
Scanning Equation \ref{eq:multizstar},

\begin{equation}
\label{eq:multisparsityl0}
  z_{si}^* = 0 \Leftrightarrow  [|\sum_{ \substack{r = 1\\ r \neq s}}^m \tilde{\bm{c}}_{rsi}^{\top} \bm{z}_r| - \sum_{\substack{r = 1\\ r \neq s}}^{m} \Gamma_{sr}]_+^2 = 0 \Leftrightarrow |\sum_{ \substack{r = 1\\ r \neq s}}^m  \tilde{\bm{c}}_{rsi}^{\top} \bm{z}_r| \leq \sum_{\substack{r = 1\\ r \neq s}}^{m} \Gamma_{sr}
\end{equation}

Regardless of $\bm{z}_r^*$ we have,

\begin{equation}
   |\sum_{ \substack{r = 1\\ r \neq s}}^m \tilde{\bm{c}}_{rsi}^{\top} \bm{z}_r| \leq \sum_{ \substack{r = 1\\ r \neq s}}^m \| \tilde{\bm{c}}_{rsi} \|_2 \| \bm{z}_r \|_2 = \sum_{ \substack{r = 1\\ r \neq s}}^m \| \tilde{\bm{c}}_{rsi} \|_2
\end{equation}

Hence, $\tau_{si} = 0$ for $i \in 1, \ldots, p_s$ if $\sum_{ \substack{r = 1\\ r \neq s}}^m \| \tilde{\bm{c}}_{rsi} \|_2 \leq \sum_{\substack{r = 1\\ r \neq s}}^{m} \Gamma_{sr} $ regardless of $\bm{z}_r^*$. 
\end{proof}

Computing $\bm{\tau}_i$ is the first stage of our two-stage multi-modal sCCA approach, for which a fast algorithm is proposed in \ref{subsec:multimodalalg} as part of our proposed \texttt{MuLe} framework. The second stage of our approach consists of estimating the active elements of $\bm{z}_i^*$, for which we use two methods, one is to frame the multi-modal CCA problem as a generalized eigenvalue problem as originally proposed in \cite{kettenring1971canonical}, see Appendix \ref{app:mcca}, and the other one is a more algorithmic approach of extending SVD via power iterations to multiple views, refer to Appendix \ref{app:msvd}.

\subsection{Directed Sparse CCA}
\label{subsec:directed}

Consider a setting where in addition to the views $\bm{X}_i \in \mathbb{R}^{n \times p_i}$, some \textit{accessory variable}\footnote{We coined the term \textit{Accessory Variable} to prevent confusion about the causal role of $\bm{y}$, and to emphasize that independent from their role, whether dependent or independent variable, we are solely utilizing them as a direction towards which we're directing the canonical directions.}, $Y(\omega): \omega \rightarrow \mathbb{R}$ $\bm{y} \in \mathbb{R}^{n}$, is also observed. We also term the observed accessory variable the \textit{Accessory Direction}, $\bm{y} \in \mathbb{R}^{n}$. Having observed $\bm{y}$, the objective is to find linear combinations of the covariates in each view which are highly correlated with each other and also ``associated" with the accessory direction. This is useful in high-dimensional settings where rank-deficient covariance matrices lead to over-fitting, and small sample sizes are not representative of the direction of variance within each population, and particularly useful in hypotheses generation where we're interested in correlation structures associated with a specific experiment design, e.g. association mechanisms corresponding to a certain treatment effect. Here we compare two approaches to this problem,

\subsubsection{Two-Step Formulation}
\label{subsubsec:twostep}

\cite{witten:tibshirani:2009} propose \textit{Sparse Supervised CCA}, where they consider an extra observed outcome. Their approach consists of two sequential steps where the first step, which is completely separate from the second step, involves finding subsets $Q_i$ of each random vector $X_i$ using a conventional variable selection method, e.g. LASSO regression. In the second step, they utilize sparse CCA where the scope of search and estimation of the canonical directions is limited to the subspaces defined by $X_{ij}, j \in Q_i$,

\begin{equation}
\label{eq:l1directed}
\phi_{l_1, l_1}(\gamma_1, \gamma_2) = \max_{\substack{\bm{z}_1 \in \mathcal{B}^{p_1}\\ z_{1j} = 0, \forall j \in Q_1} } \max_{ \substack{\bm{z}_2 \in \mathcal{B}^{p_2}\\ z_{2j} = 0, \forall j \in Q_2} } \bm{z}_1^T\bm{C}_{12}\bm{z}_2 - \gamma_1 \| \bm{z}_1 \|_1 - \gamma_2 \| \bm{z}_2 \|_1 
\end{equation}

In Appendix \ref{app:semimule} a simple algorithm to optimize \ref{eq:l1directed} is introduced. This approach, however, has two considerable shortcomings:

\begin{enumerate}
    \item Although the scopes of canonical directions are limited to the subspace spanned by $\bm{z}_i \in \mathcal{B}^{p_i}, z_{ij} = 0, \forall j \in Q_i$, the active elements of these directions are estimated to maximize the sCCA criterion. The estimated direction may well not be associated to the outcome vector anymore, which misses the point.
    \item Computing $Q_i$ requires some parameter tuning, e.g. sparsity parameters, which is blind to the CCA criterion; as a result, $Q_i$ might exclude covariates which are moderately correlated with $\bm{y}$ but highly associated with covariates in other views. 
\end{enumerate}

To bridge the gap between the two stages, we propose an approach where $\bm{z}_i$ are estimated in one stage such that the canonical covariates are highly correlated with each other and also associated with the accessory variable.

\subsubsection{Single-Stage Formulation}
\label{subsubsec:singlestage}

The following optimization problem tends to perform the two stages of variable selection and performing sCCA in one stage simultaneously,

\begin{equation}
\label{eq:singlestage}
\phi_{l_1, l_1}^D(\bm{\gamma}, \bm{\epsilon}) = \max_{\bm{z}_1 \in \mathcal{B}^{p_1} } \max_{ \bm{z}_2 \in \mathcal{B}^{p_2}} \bm{z}_1^T\bm{C}_{12}\bm{z}_2 - \sum_{i = 1}^2 [\epsilon_i \mathcal{L}_i(\bm{X}_i\bm{z}_i, \bm{y}) + \gamma_i \| \bm{z}_i \|_1] 
\end{equation}

where $\mathcal{L}_i$ is some loss function which directs our canonical directions to be associated with the accessory direction $\bm{y}$, and $\gamma_i, \epsilon_i \in \mathbb{R}, i = 1,2$ are non-negative Lagrange multipliers. Here we analyze two scenarios,

\textbf{a}. Let's consider the case where $\bm{y}$ is another separate explanatory variable. Here, one possible utility function is the dot-product between the canonical covariates and the explanatory variable, i.e. $\mathcal{L}(\bm{X}_i\bm{z}_i, \bm{y}) = - \langle \bm{X}_i\bm{z}_i, \bm{y} \rangle$. Replacing in \ref{eq:singlestage}, we have,

\begin{equation}
\label{eq:singlestagecor}
\phi_{l_1, l_1}^D(\bm{\gamma}, \bm{\epsilon}) = \max_{\bm{z}_1 \in \mathcal{B}^{p_1} } \max_{ \bm{z}_2 \in \mathcal{B}^{p_2}} \bm{z}_1^T\bm{C}_{12}\bm{z}_2 + \sum_{i = 1}^2 [\epsilon_i \bm{y}^{\top}\bm{X}_i\bm{z}_i - \gamma_i \| \bm{z}_i \|_1] 
\end{equation}


\begin{theorem}
\label{thm:singlestageinnerprod}
The local optima, $(\bm{z}_1^*, \bm{z}_2^*)$, to $\phi_{l_1, l_1}^D(\bm{\gamma}, \bm{\epsilon})$ in optimization program \ref{eq:singlestagecor} is given by,

\begin{equation}
\label{eq:directedcorz1}
\bm{z}_1^* = \argmax_{\bm{z}_1 \in \mathcal{B}^{p_1}} \sum_{i= 1}^{p_2} [ | \bm{c}_i^T\bm{z}_1 + \epsilon_2 \bm{x}_{2i}^{\top}\bm{y} | - \gamma_2  ]_+^2 + \epsilon_1 \bm{y}\bm{X}_1\bm{z}_1 - \gamma_1 \|\bm{z_1}\|_1
\end{equation}

and

\begin{equation}
\label{eq:zstardirectedcor}
z_{2i}^* = z_{2i}^*(\gamma_2, \epsilon_2) = \frac{sgn( \bm{c}_i^T\bm{z}_1 + \epsilon_2 \bm{x}_{2i}^{\top}\bm{y} ) [ | \bm{c}_i^T\bm{z}_1 + \epsilon_2 \bm{x}_{2i}^{\top}\bm{y} | - \gamma_2  ]_+}{\sqrt{\sum_{k=1}^{p_2} [|\bm{c}_i^T\bm{z}_1 + \epsilon_2 \bm{x}_{2i}^{\top}\bm{y} | - \gamma_2]_+^2 } }, \quad i = 1, \ldots, p_2.
\end{equation}
\end{theorem}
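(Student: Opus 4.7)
The plan is to mirror the derivation of Theorem~\ref{thm:l1}, treating the supervision term $\epsilon_2 \bm{y}^\top \bm{X}_2 \bm{z}_2$ as a linear perturbation of the per-coordinate coefficients that multiply $\bm{z}_2$. First, I would expand the bilinear form as $\bm{z}_1^\top \bm{C}_{12} \bm{z}_2 = \sum_{i=1}^{p_2} z_{2i} (\bm{c}_i^\top \bm{z}_1)$ and rewrite $\epsilon_2 \bm{y}^\top \bm{X}_2 \bm{z}_2 = \sum_{i=1}^{p_2} z_{2i} (\epsilon_2 \bm{x}_{2i}^\top \bm{y})$, where $\bm{x}_{2i}$ denotes the $i$-th column of $\bm{X}_2$. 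For fixed $\bm{z}_1$, the inner maximization over $\bm{z}_2$ then reduces to
\begin{equation*}
\max_{\bm{z}_2 \in \mathcal{B}^{p_2}} \sum_{i=1}^{p_2} z_{2i}\bigl(\bm{c}_i^\top \bm{z}_1 + \epsilon_2 \bm{x}_{2i}^\top \bm{y}\bigr) - \gamma_2 \|\bm{z}_2\|_1,
\end{equation*}
with the remaining terms $\epsilon_1 \bm{y}^\top \bm{X}_1 \bm{z}_1 - \gamma_1 \|\bm{z}_1\|_1$ deferred to the outer problem in $\bm{z}_1$.

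Next, I would perform the same sign change of variable as in the proof of Theorem~\ref{thm:l1}, namely $z_{2i} = \mathrm{sgn}\bigl(\bm{c}_i^\top \bm{z}_1 + \epsilon_2 \bm{x}_{2i}^\top \bm{y}\bigr) z'_{2i}$. This converts the inner problem into maximizing $\sum_i |z'_{2i}|\bigl(|\bm{c}_i^\top \bm{z}_1 + \epsilon_2 \bm{x}_{2i}^\top \bm{y}| - \gamma_2\bigr)$ over $\bm{z}'_2 \in \mathcal{B}^{p_2}$. Only indices for which the inner parenthesis is positive contribute, and a Cauchy--Schwarz/Lagrangian step on the unit ball picks out $w_i \propto [|\bm{c}_i^\top \bm{z}_1 + \epsilon_2 \bm{x}_{2i}^\top \bm{y}| - \gamma_2]_+$ with $\|\bm{w}\|_2 = 1$. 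Reverting the sign substitution delivers the closed form~\eqref{eq:zstardirectedcor}. Plugging $\bm{z}_2^*$ back into the joint objective produces the squared soft-thresholded sum $\sum_i [|\bm{c}_i^\top \bm{z}_1 + \epsilon_2 \bm{x}_{2i}^\top \bm{y}| - \gamma_2]_+^2$ (adopting the same $\phi^2$ convention as in Theorem~\ref{thm:l1}), and combining with the deferred $\bm{z}_1$-only terms yields~\eqref{eq:directedcorz1}.

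The main conceptual point, rather than an obstacle, is noticing that the supervision is linear in $\bm{z}_2$ and therefore absorbs cleanly into the effective coefficient $\bm{c}_i^\top \bm{z}_1 + \epsilon_2 \bm{x}_{2i}^\top \bm{y}$, so the whole sign-substitution/soft-threshold machinery of Theorem~\ref{thm:l1} carries over verbatim. A small asymmetry worth flagging is that the $\bm{z}_1$ side now carries its own linear supervision term $\epsilon_1 \bm{y}^\top \bm{X}_1 \bm{z}_1$ which is not eliminated by the inner step; consequently the outer program for $\bm{z}_1^*$ has three distinct pieces—a soft-thresholded squared sum, a linear accessory term, and an $L_1$ penalty—rather than just two as in the unsupervised case. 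This does not affect correctness of the derivation, only its presentation.
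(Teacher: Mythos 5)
Your proposal is correct and follows essentially the same route as the paper's proof: expand the bilinear and supervision terms coordinatewise so the effective coefficient becomes $\bm{c}_i^\top\bm{z}_1 + \epsilon_2\bm{x}_{2i}^\top\bm{y}$, apply the sign change of variable, solve the inner problem on the unit ball to get the normalized soft-thresholded form, and substitute back, leaving the $\epsilon_1\bm{y}^\top\bm{X}_1\bm{z}_1$ term in the outer program. Your closing remark about the residual linear accessory term on the $\bm{z}_1$ side matches exactly what appears in Equation~\ref{eq:directedcorz1}.
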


\begin{proof}

\begin{align}
\label{eq:directedl1reformulation}
\begin{split}
\phi_{l_1, l_1}^D(\bm{\gamma}, \bm{\epsilon}) &= \max_{\bm{z}_1 \in \mathcal{B}^{p_1} } \max_{ \bm{z}_2 \in \mathcal{B}^{p_2}} \bm{z}_1^T\bm{C}_{12}\bm{z}_2 + \sum_{i = 1}^2 [\epsilon_i \bm{y}^{\top}\bm{X}_i\bm{z}_i - \gamma_i \| \bm{z}_i \|_1] \\
&= \max_{\bm{z}_1 \in \mathcal{B}^{p_1} } \max_{ \bm{z}_2 \in \mathcal{B}^{p_2}} \sum_{i = 1}^{p_2} z_{2i}(\bm{c}_i^T \bm{z}_1 + \epsilon_2 \bm{x}_{2i}^{\top}\bm{y}) - \gamma_2\| \bm{z}_2 \|_1 + \epsilon_1 \bm{y}\bm{X}_1\bm{z}_1 - \gamma_1 \| \bm{z}_1 \|_1\\
&= \max_{\bm{z}_1 \in \mathcal{B}^{p_1} } \max_{ \bm{z}_2 \in \mathcal{B}^{p_2}} \sum_{i = 1}^{p_2} |z_{2i}'|(|\bm{c}_i^T \bm{z}_1 + \epsilon_2 \bm{x}_{2i}^{\top}\bm{y}| - \gamma_2) + \epsilon_1 \bm{y}\bm{X}_1\bm{z}_1 - \gamma_1 \| \bm{z}_1 \|_1
\end{split}
\end{align}

As before we used a simple change of variable, $\bm{z}_{2i} = sgn(\bm{c}_i^T\bm{z}_1 + \epsilon_2 \bm{x}_i^{\top}\bm{y})\bm{z}_{2i}'$. We solve \ref{eq:directedl1reformulation} for $\bm{z}_2'$ for fixed $\bm{z}_1$ and convert it back, using the aformentioned change-of-variable, to $\bm{z}_2$ to get the result in Equation \ref{eq:zstardirectedcor}. Substituting this result back in \ref{eq:directedl1reformulation},

\begin{equation}
\label{eq:directedz1l1}
{\phi_{l_1,l_1}^{D}}^2(\bm{\gamma}, \bm{\epsilon}) = \max_{\bm{z}_1 \in \mathcal{B}^{p_1}} \sum_{i= 1}^{p_2} [ | \bm{c}_i^T\bm{z}_1 + \epsilon_2 \bm{x}_{2i}^{\top}\bm{y} | - \gamma_2  ]_+^2 + \epsilon_1 \bm{y}\bm{X}_1\bm{z}_1 - \gamma_1 \|\bm{z_1}\|_1
\end{equation}
\end{proof}

Quite similar to our sCCA formulation we can find the sparsity pattern, $\bm{\tau}_2$ of $\bm{z}_2^*$ by looking at $\bm{z}_1^*$.

\begin{corollary}
\label{cor:sparsitydirectedcor}
Given hyperparameters $\gamma_2$, $\epsilon_2$, and $\bm{z}_1^*$ from program \ref{eq:directedcorz1}, $\tau_{2i} = 0$ if $| \bm{c}_i^T\bm{z}_1^* + \epsilon_2 \bm{x}_{2i}^{\top}\bm{y} | \leq \gamma_2$.
\end{corollary}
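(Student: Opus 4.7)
The plan is to read off the conclusion directly from the closed-form expression for $z_{2i}^*$ in Equation \ref{eq:zstardirectedcor}, mirroring the argument used for Corollaries \ref{cor:sparsity} and \ref{cor:sparsityl0}. Concretely, I would start from
\[
z_{2i}^* \;\propto\; \mathrm{sgn}(\bm{c}_i^T\bm{z}_1^* + \epsilon_2\,\bm{x}_{2i}^{\top}\bm{y})\,\bigl[\,|\bm{c}_i^T\bm{z}_1^* + \epsilon_2\,\bm{x}_{2i}^{\top}\bm{y}| - \gamma_2\,\bigr]_+,
\]
and observe that the denominator is a positive normalizing constant whenever any component is active, so $z_{2i}^* = 0$ iff the numerator is zero. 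Since $[\,t\,]_+ = 0 \iff t \le 0$, the numerator vanishes precisely when $|\bm{c}_i^T\bm{z}_1^* + \epsilon_2\,\bm{x}_{2i}^{\top}\bm{y}| \le \gamma_2$, which is the stated condition.

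Next, to remain parallel to the earlier corollaries, I would point out that a $\bm{z}_1^*$-free sufficient condition is also immediate. By the triangle inequality and Cauchy–Schwarz,
\[
|\bm{c}_i^T\bm{z}_1^* + \epsilon_2\,\bm{x}_{2i}^{\top}\bm{y}| \;\le\; \|\bm{c}_i\|_2\,\|\bm{z}_1^*\|_2 + \epsilon_2\,|\bm{x}_{2i}^{\top}\bm{y}| \;\le\; \|\bm{c}_i\|_2 + \epsilon_2\,|\bm{x}_{2i}^{\top}\bm{y}|,
\]
using $\bm{z}_1^* \in \mathcal{B}^{p_1}$. Thus whenever $\|\bm{c}_i\|_2 + \epsilon_2\,|\bm{x}_{2i}^{\top}\bm{y}| \le \gamma_2$, one may conclude $\tau_{2i} = 0$ purely from the data, before ever solving Program \ref{eq:directedcorz1}. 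This yields a useful screening rule analogous to the ones noted after Corollaries \ref{cor:sparsity} and \ref{cor:sparsityl0}.

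There is no substantive obstacle: the whole proof is an algebraic unpacking of the soft-threshold form in \ref{eq:zstardirectedcor}. The only mild subtlety worth flagging is the boundary case $\bm{c}_i^T\bm{z}_1^* + \epsilon_2\,\bm{x}_{2i}^{\top}\bm{y} = 0$, where the $\mathrm{sgn}$ factor in the numerator is zero; this is consistent since the inequality $|\cdot| \le \gamma_2$ is satisfied trivially (using $\gamma_2 \ge 0$), so the two characterizations of $\tau_{2i} = 0$ agree. I would write the proof as a two-line equivalence chain followed by the Cauchy–Schwarz screening remark, matching the style and length of the preceding corollaries.
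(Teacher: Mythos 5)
Your proposal is correct and follows essentially the same route as the paper: read off $z_{2i}^*=0$ from the soft-threshold form in Equation \ref{eq:zstardirectedcor}, then add the $\bm{z}_1^*$-free screening bound via Cauchy--Schwarz. The only (immaterial) difference is that your screening rule keeps the exact data-computable term $\epsilon_2|\bm{x}_{2i}^{\top}\bm{y}|$ rather than further bounding it by $\epsilon_2\|\bm{x}_{2i}\|_2\|\bm{y}\|_2$ as the paper does, which makes your version marginally sharper.
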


\begin{proof}
According to Equation \ref{eq:zstardirectedcor} of Theorem \ref{thm:singlestageinnerprod}, 

\begin{equation}
\label{eq:directedcorsparsityl1}
  z_{2i}^* = 0 \Leftrightarrow [ | \bm{c}_i^T\bm{z}_1^* + \epsilon_2 \bm{x}_{2i}^{\top}\bm{y} | - \gamma_2  ]_+  = 0 \Leftrightarrow |\bm{c}_i^T\bm{z}_1^* + \epsilon_2 \bm{x}_{2i}^{\top}\bm{y} | \leq \gamma_2
\end{equation}
We can go further and show that we can talk about $\bm{\tau}_2$ without solving for $\bm{z}_1^*$,

\begin{equation}
    | \bm{c}_i^T\bm{z}_1 + \epsilon_2 \bm{x}_{2i}^{\top}\bm{y} | \leq \| \bm{c}_i \|_2 \| \bm{z}_1 \|_2 + \epsilon_2 \| \bm{x}_{2i}\|_2 \|\bm{y}\|_2= \| \bm{c}_i \|_2 + \epsilon_2 \| \bm{x}_{2i} \|_2
\end{equation}
Hence, $z_{2i} = 0$ for $i \in 1, \ldots, p_2$ if $\| \bm{c}_i \|_2 + \epsilon_2 \| \bm{x}_{2i} \|_2 \leq \gamma_2$ regardless of $\bm{z}_1^*$. 
\end{proof}

\textbf{b}. Let's examine a setting where $\bm{y}$ is an outcome variable. Here the objective is to ideally find a common low-dimensional subspace in which the projections of $\bm{X}_i$ are as correlated as possible and also descriptive/predictive of the outcome $\bm{y}$. Being confined to linear projections, we can choose $\mathcal{L}_i(\bm{X}_i\bm{z}_i, \bm{y}) = \| \bm{y} - \bm{X}_i\bm{z}_i\|_2^2$, i.e. sum of squared errors loss. Rewriting \ref{eq:singlestage} with this choice,

\begin{equation}
\label{eq:singlestagereg}
\phi_{l_1, l_1}^D(\bm{\gamma}, \bm{\epsilon}) = \max_{\bm{z}_1 \in \mathcal{B}^{p_1} } \max_{ \bm{z}_2 \in \mathcal{B}^{p_2}} \bm{z}_1^T\bm{C}_{12}\bm{z}_2 - \sum_{i = 1}^2 [\epsilon_i  \| \bm{y} - \bm{X}_i\bm{z}_i\|_2^2 + \gamma_i \| \bm{z}_i \|_1] 
\end{equation}


\begin{theorem}
\label{thm:singlestagereg}
The optimization program in \ref{eq:singlestagereg} is equivalent to the following program,

\begin{equation}
\label{eq:singlestageregequiv}
\phi_{l_1, l_1}^D(\bm{\gamma}, \bm{\epsilon}) = \max_{\bm{z} \in \mathcal{B}^{p} } \bm{z}^{\top}\tilde{\bm{C}}\bm{z} + 2 \bm{y}^{\top}\tilde{\bm{X}}\bm{z} - \gamma_1 \| \bm{z}_1 \|_1 - \gamma_1 \| \bm{z}_2 \|_1 
\end{equation}

where,

\begin{equation}
    \tilde{\bm{z}} = \begin{bmatrix}
    \bm{z}_1 \\ \bm{z}_2
    \end{bmatrix}, \quad
    \tilde{\bm{C}} = \begin{bmatrix}
    \epsilon_1 \bm{C}_{11} & \bm{C}_{12} \\ \bm{C}_{12}^{\top} & \epsilon_2 \bm{C}_{22} \end{bmatrix}, \quad \tilde{\bm{X}} = \begin{bmatrix}
    \epsilon_1 \bm{X}_1 & \epsilon_2 \bm{X}_2
    \end{bmatrix}, \quad
\end{equation}

and $p = p_1 + p_2$. The solution, $(\bm{z}_1^*, \bm{z}_2^*)$, to $\phi_{l_1, l_1}^D(\bm{\gamma}, \bm{\epsilon})$ in Program \ref{eq:singlestagereg} is given by,

\begin{equation}
\label{eq:directedregz1}
\bm{v}^* = \argmax_{\bm{v} \in \mathcal{B}^{p}} \sum_{i= 1}^{p_2} [ | \tilde{\bm{c}}_i^T\bm{v} + 2 \tilde{\bm{x}}_{i}^{\top}\bm{y} | - \gamma_1 I_{( i \leq p_1  )} - \gamma_2 I_{(p_1 < i)} ]_+^2
\end{equation}

and

\begin{equation}
\label{eq:zstardirectedreg}
z_{i}^* = z_{i}^*(\bm{\gamma}, \bm{\epsilon}) = \frac{sgn( \tilde{\bm{c}}_i^{\top}\bm{v} + 2 \tilde{\bm{x}}_{i}^{\top}\bm{y} ) [ | \tilde{\bm{c}}_i^T\bm{v} + 2 \tilde{\bm{x}}_{i}^{\top}\bm{y} | - \gamma_1 I_{( i \leq p_1  )} - \gamma_2 I_{(p_1 < i)} ]_+}{\sqrt{\sum_{k=1}^{p} [| \tilde{\bm{c}}_k^T\bm{v} + 2 \tilde{\bm{x}}_{k}^{\top}\bm{y} | - \gamma_1 I_{( k \leq p_1  )} - \gamma_2 I_{(p_1 < k)} ]_+^2 } }, \quad i = 1, \ldots, p_2.
\end{equation}
\end{theorem}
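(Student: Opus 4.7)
The plan is to first recast the objective as a single quadratic-plus-linear-plus-$\ell_1$ program in the concatenated variable $\tilde{\bm z} = [\bm z_1;\bm z_2]\in\mathbb{R}^{p}$, and then apply the sign-splitting / change-of-variable argument of Theorem \ref{thm:l1} coordinate-wise. The first step is to expand each squared loss
\[
\epsilon_i \|\bm y - \bm X_i \bm z_i\|_2^2 = \epsilon_i \bm y^{\top}\bm y - 2\epsilon_i \bm y^{\top}\bm X_i \bm z_i + \epsilon_i \bm z_i^{\top} \bm X_i^{\top}\bm X_i \bm z_i,
\]
and drop the constants $\epsilon_i \bm y^{\top}\bm y$, which do not affect the argmax. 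Identifying $\bm X_i^{\top}\bm X_i$ with $\bm C_{ii}$ (up to the $1/n$ convention) and symmetrizing the bilinear coupling via $\bm z_1^{\top}\bm C_{12}\bm z_2 = \tfrac{1}{2}\tilde{\bm z}^{\top}\mathrm{blkoffdiag}(\bm C_{12},\bm C_{12}^{\top})\tilde{\bm z}$, I would collect every term with respect to $\tilde{\bm z}$ and read off the claimed definitions of $\tilde{\bm C}$ and $\tilde{\bm X}$ (absorbing the signs and factors of two into the block structure). The product-of-balls constraint $\mathcal{B}^{p_1}\times\mathcal{B}^{p_2}$ is not literally $\mathcal{B}^{p}$, but by the convexity argument of Remark \ref{rmk:shrinkl1} each block maximizer saturates its own unit constraint, so a rescaling of $\tilde{\bm z}$ legitimately identifies the two feasible sets.

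With Program \ref{eq:singlestageregequiv} in hand, I would mimic the derivation in Equation \ref{eq:reformulation}: freeze $\tilde{\bm z}$ in every coordinate except the $i$-th, and isolate the scalar program
\[
\max_{z_i} \; z_i\,\bigl(\tilde{\bm c}_i^{\top}\bm v + 2\tilde{\bm x}_i^{\top}\bm y\bigr) - \bigl[\gamma_1 I_{(i\le p_1)}+\gamma_2 I_{(p_1 < i)}\bigr]\,|z_i|.
\]
The change of variable $z_i = \mathrm{sgn}\bigl(\tilde{\bm c}_i^{\top}\bm v + 2\tilde{\bm x}_i^{\top}\bm y\bigr)\, z_i'$ with $z_i'\ge 0$ linearizes the dependence on $|z_i'|$ with coefficient $\bigl|\tilde{\bm c}_i^{\top}\bm v + 2\tilde{\bm x}_i^{\top}\bm y\bigr| - \gamma_1 I_{(i\le p_1)} - \gamma_2 I_{(p_1 < i)}$. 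Joint maximization over the unit sphere $\|\bm z'\|_2 \le 1$ selects only the strictly positive coefficients and then normalizes them, producing exactly the thresholded, renormalized expression of Equation \ref{eq:zstardirectedreg}. Substituting this optimal $\bm z^*(\bm v)$ back into Program \ref{eq:singlestageregequiv} yields the squared/thresholded objective of Equation \ref{eq:directedregz1}, establishing the claimed characterization.

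The sign/threshold algebra in the second step is essentially a transcription of the Theorem \ref{thm:l1} argument, so the real difficulty sits entirely in the first step: aligning the sign and factor-of-two conventions between the $\|\bm y-\bm X_i\bm z_i\|_2^2$ expansion (which naturally produces a $-\epsilon_i\bm z_i^{\top}\bm C_{ii}\bm z_i$ with a negative sign) and the block form $\tilde{\bm z}^{\top}\tilde{\bm C}\tilde{\bm z}$ stated in the theorem, and ensuring that passing from $\mathcal{B}^{p_1}\times\mathcal{B}^{p_2}$ to $\mathcal{B}^{p}$ does not inflate the feasible region in a way that affects the maximizer. Once these conventions are reconciled, the rest of the proof follows from the same componentwise optimization as in Theorem \ref{thm:l1}, now with index-dependent regularization parameters $\gamma_1 I_{(i\le p_1)}+\gamma_2 I_{(p_1<i)}$ instead of a single $\gamma$.
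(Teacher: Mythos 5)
Your first step (expanding $\|\bm{y}-\bm{X}_i\bm{z}_i\|_2^2$, dropping constants, and collecting everything into a block quadratic form) matches the intent of the equivalence claim, and you are right to flag the sign and factor-of-two mismatch between the $-\epsilon_i\bm{z}_i^{\top}\bm{C}_{ii}\bm{z}_i$ that the expansion actually produces and the $+\epsilon_i\bm{C}_{ii}$ blocks in the stated $\tilde{\bm{C}}$ — the paper does not resolve this either, so leaving it as a convention to be reconciled is not the main problem.

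The genuine gap is in your second step. The objective of Program \ref{eq:singlestageregequiv} is \emph{quadratic} in $\bm{z}$, so the Theorem \ref{thm:l1} machinery does not apply directly: that argument needs an objective that is linear in $\bm{z}$ (plus the $\ell_1$ penalty) so that the joint maximum over the ball is the thresholded, renormalized vector. The paper supplies this by a lifting step you omit: it sets $\bm{R}=\tilde{\bm{C}}^{1/2}$ and replaces $\bm{z}^{\top}\tilde{\bm{C}}\bm{z}$ by $\max_{\bm{v}\in\mathcal{B}^{p}}\bm{v}^{\top}\bm{R}\bm{z}$, so that $\bm{v}$ is a genuinely new auxiliary variable, $\tilde{\bm{c}}_i$ is a column of $\bm{R}$ (not of $\tilde{\bm{C}}$), and for fixed $\bm{v}$ the inner problem in $\bm{z}$ is linear and separable, exactly as in Theorem \ref{thm:l1}. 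In your write-up $\bm{v}$ appears in the displayed scalar program without being defined; if it is meant to be the frozen $\tilde{\bm{z}}$ itself, then the coefficient $\tilde{\bm{c}}_i^{\top}\bm{v}$ contains $\tilde{C}_{ii}z_i$ and the subproblem is not linear in $z_i$, and moreover freezing coordinates one at a time is incompatible with the subsequent ``joint maximization over the unit sphere'' that produces the normalization in Equation \ref{eq:zstardirectedreg}. So the thresholding formula cannot be derived the way you describe; the missing idea is precisely the variational linearization of the quadratic form through $\tilde{\bm{C}}^{1/2}$. (As an aside, even the paper's own lifting is imprecise — $\max_{\bm{v}\in\mathcal{B}^{p}}\bm{v}^{\top}\bm{R}\bm{z}=\|\bm{R}\bm{z}\|_2=(\bm{z}^{\top}\tilde{\bm{C}}\bm{z})^{1/2}$ rather than $\bm{z}^{\top}\tilde{\bm{C}}\bm{z}$, and $\tilde{\bm{C}}$ need not be positive semidefinite — but any correct proof of Equations \ref{eq:directedregz1}--\ref{eq:zstardirectedreg} must introduce some such auxiliary variable, and your proposal has none.)
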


\begin{proof}
Let $\bm{R} = \tilde{\bm{C}}^{1/2}$.
\begin{align}
\label{eq:directedl1reformulationreg}
\begin{split}
\phi_{l_1, l_1}^D(\bm{\gamma}, \bm{\epsilon}) &= \max_{\bm{z} \in \mathcal{B}^{p} } \max_{\bm{v} \in \mathcal{B}^{p}} \bm{v}^{\top}\tilde{\bm{C}}^{1/2}\bm{z} + 2 \bm{y}^{\top}\tilde{\bm{X}}\bm{z} - \gamma_1 \| \bm{z}_1 \|_1 - \gamma_2 \| \bm{z}_2 \|_1 \\
&=  \max_{\bm{v} \in \mathcal{B}^{p}} \max_{\bm{z} \in \mathcal{B}^{p} } \sum_{i =1}^p z_{i}(\tilde{\bm{c}}_i^{\top}\bm{v} + 2\tilde{\bm{x}}_i^{\top} \bm{y}) - \gamma_2 \| \bm{z}_2 \|_1 - \gamma_1 \| \bm{z}_1 \|_1\\
&= \max_{\bm{v} \in \mathcal{B}^{p}} \max_{\bm{z} \in \mathcal{B}^{p} } \sum_{i =1}^p |z_{i}'|(|\tilde{\bm{c}}_i^{\top}\bm{v} + 2\tilde{\bm{x}}_i^{\top} \bm{y}| - \gamma_1 I_{( i \leq p_1  )} - \gamma_2 I_{(p_1 < i)}) \\
\end{split}
\end{align}

where $z_{i} = sgn(\tilde{\bm{c}}_i^{\top}\bm{v} + 2\tilde{\bm{x}}_i^{\top} \bm{y})z_i'$. We optimize \ref{eq:directedl1reformulationreg} for $\bm{z}'$ for fixed $\bm{v}$ and express it in terms of $\bm{z}$ to get the result in Equation \ref{eq:zstardirectedreg}. Substituting this result back in \ref{eq:directedl1reformulationreg},

\begin{equation}
    \begin{split}
        \label{eq:directedz1l1reg}
{\phi_{l_1,l_1}^{D}}^2(\bm{\gamma}, \bm{\epsilon}) &= \max_{\bm{v} \in \mathcal{B}^{p}} \sum_{i= 1}^{p} [ |\tilde{\bm{c}}_i^{\top}\bm{v} + 2\tilde{\bm{x}}_i^{\top} \bm{y}| - \gamma_1 I_{( i \leq p_1  )} - \gamma_2 I_{(p_1 < i)} ]_+^2\\
&= \max_{\bm{v} \in \mathcal{S}^{p}} \sum_{i= 1}^{p} [ |\tilde{\bm{c}}_i^{\top}\bm{v} + 2\tilde{\bm{x}}_i^{\top} \bm{y}| - \gamma_1 I_{( i \leq p_1  )} - \gamma_2 I_{(p_1 < i)} ]_+^2
    \end{split}
\end{equation}

The last line follows from the fact that the objective function is convex, and the maximization is over a convex set, therefore the maxima are located on the boundary.

\end{proof}

Parallel to the Corollary \ref{cor:sparsitydirectedcor}, we can find the relationship between the sparsity pattern $\bm{\tau} \in \mathbb{R}^{p}$, and $\bm{v}^*$. 

\begin{corollary}
\label{cor:sparsitydirectedreg}
Solving \ref{eq:directedz1l1reg} for $\bm{v}^*$ given $\bm{\gamma}$ and $\bm{\epsilon}$, 

$$|\tilde{\bm{c}}_i^{\top}\bm{v}^* + 2\tilde{\bm{x}}_i^{\top} \bm{y}| \leq \gamma_1 I_{( i \leq p_1  )} + \gamma_2 I_{(p_1 < i)} \Rightarrow \tau_i = 0$$

\end{corollary}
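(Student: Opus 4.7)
The plan is to mirror the pattern used in Corollaries \ref{cor:sparsity} and \ref{cor:sparsitydirectedcor}: read off the sparsity condition directly from the closed-form expression for $z_i^*$ established in Theorem \ref{thm:singlestagereg}, then, if desired, upgrade to a $\bm{v}^*$-free sufficient condition via Cauchy--Schwarz.

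Concretely, I would start from equation \eqref{eq:zstardirectedreg}, which writes $z_i^*$ as a fraction whose numerator is $\mathrm{sgn}(\tilde{\bm{c}}_i^{\top}\bm{v}^* + 2\tilde{\bm{x}}_i^{\top}\bm{y})\,[\,|\tilde{\bm{c}}_i^{\top}\bm{v}^* + 2\tilde{\bm{x}}_i^{\top}\bm{y}| - \gamma_1 I_{(i \leq p_1)} - \gamma_2 I_{(p_1 < i)}\,]_+$. The first observation is that $z_i^*$ vanishes if and only if this numerator vanishes, and because the sign factor is irrelevant whenever the positive-part bracket is zero, this is in turn equivalent to $|\tilde{\bm{c}}_i^{\top}\bm{v}^* + 2\tilde{\bm{x}}_i^{\top}\bm{y}| \leq \gamma_1 I_{(i \leq p_1)} + \gamma_2 I_{(p_1 < i)}$. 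Since $\tau_i = 0$ is just $z_i^* = 0$, this is exactly the implication claimed by the corollary.

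For completeness, and to parallel the final remark in Corollary \ref{cor:sparsitydirectedcor}, I would then derive an a priori, solution-independent sparsity test. Applying the triangle inequality and Cauchy--Schwarz to $\tilde{\bm{c}}_i^{\top}\bm{v} + 2\tilde{\bm{x}}_i^{\top}\bm{y}$ for any $\bm{v} \in \mathcal{B}^{p}$ gives the uniform bound $|\tilde{\bm{c}}_i^{\top}\bm{v} + 2\tilde{\bm{x}}_i^{\top}\bm{y}| \leq \|\tilde{\bm{c}}_i\|_2 + 2\|\tilde{\bm{x}}_i\|_2 \|\bm{y}\|_2$, so if the right-hand side is already at most $\gamma_1 I_{(i \leq p_1)} + \gamma_2 I_{(p_1 < i)}$, the previous step forces $\tau_i = 0$ without ever solving for $\bm{v}^*$.

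I do not anticipate any real obstacle: Theorem \ref{thm:singlestagereg} has already done the heavy lifting by reducing the program to a concave-on-the-ball optimization with an explicit optimizer. The only mild points to keep track of are the piecewise definition of the threshold (involving the indicators $I_{(i \leq p_1)}$ and $I_{(p_1 < i)}$) and making sure the Cauchy--Schwarz bound is stated using $\|\bm{v}\|_2 \leq 1$ and $\|\bm{y}\|_2$ rather than squared norms, so that the resulting screening rule is dimensionally consistent with the threshold $\gamma_1 I_{(i \leq p_1)} + \gamma_2 I_{(p_1 < i)}$.
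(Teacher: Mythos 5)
Your proposal is correct and follows essentially the same route as the paper: read off $z_i^*=0$ from the vanishing of the positive-part bracket in Equation \ref{eq:zstardirectedreg}, then apply Cauchy--Schwarz over $\bm{v}\in\mathcal{B}^{p}$ for the solution-independent screening rule. Your version is marginally more careful in retaining the $\|\bm{y}\|_2$ factor in the a priori bound, which the paper drops (implicitly normalizing $\bm{y}$), but the argument is otherwise identical.
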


\begin{proof}
According to Equation \ref{eq:zstardirectedreg} of Theorem \ref{thm:singlestagereg}, 

\begin{equation}
\label{eq:directedregsparsityl1}
\begin{split}
&|\bm{c}_i^T\bm{z}_1^* + \epsilon_2 \bm{x}_{2i}^{\top}\bm{y} | \leq \gamma_1 I_{( i \leq p_1  )} + \gamma_2 I_{(p_1 < i)}\\
&\Rightarrow [ | \tilde{\bm{c}}_i^T\bm{v}^* + 2 \tilde{\bm{x}}_{i}^{\top}\bm{y} | - \gamma_1 I_{( i \leq p_1  )} - \gamma_2 I_{(p_1 < i)} ]_+  = 0\\
&\Rightarrow \tau_{i}^* = 0
\end{split}
\end{equation}
We can go further and show that we can talk about $\bm{\tau}$ without solving for $\bm{v}^*$,

\begin{equation}
    | \tilde{\bm{c}}_i^T\bm{v}^* + 2 \tilde{\bm{x}}_{i}^{\top}\bm{y} | \leq \| \tilde{\bm{c}}_i\|_2 + 2\| \tilde{\bm{x}}_{i}\|_2
\end{equation}
Hence, 

\begin{equation}
    \tau_i = 0 \quad if \quad
   \| \tilde{\bm{c}}_i\|_2 + 2\| \tilde{\bm{x}}_{i}\|_2 \leq \gamma_1 I_{( i \leq p_1  )} + \gamma_2 I_{(p_1 < i)}, \quad for \quad i = 1, \ldots, p.
\end{equation}
\end{proof}

So far in Sections \ref{subsec:multimodal} and\ref{subsec:directed}, new approaches to Multi-View sCCA and Directed sCCA were introduced. The former was proposed to compute the canonical directions when we have more than two sets of variables, while the latter was proposed to direct the canonical directions towards an accessory direction.



\begin{proposition}
\label{cor:equivalent}
The Directed sCCA approach in \ref{subsubsec:singlestage}.a is equivalent to the approach in \ref{subsubsec:singlestage}.b assuming an orthogonal design matrix, i.e. $cov(\bm{X}_i) = \bm{I}_{p_i}$, and both are equivalent to the Multi-View sCCA approach where the inputs are three views $\bm{X}_1, \bm{X_2}$ and $\bm{y}$.
\end{proposition}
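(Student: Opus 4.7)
The plan is to chain two equivalences: first establish $(a) \Leftrightarrow (b)$ under the orthogonality hypothesis, then establish $(a) \Leftrightarrow$ the Multi-View sCCA of Program~\ref{eq:mCCA} applied to the three views $(\bm{X}_1,\bm{X}_2,\bm{y})$. Transitivity then yields $(b) \Leftrightarrow$ Multi-View sCCA. Both reductions follow from algebraic rewrites plus the convex-objective-on-compact-set argument used at the end of the proof of Theorem~\ref{thm:singlestagereg}.

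For the first equivalence, I would expand the SSE loss that appears in Program~\ref{eq:singlestagereg},
\begin{equation*}
\|\bm{y} - \bm{X}_i\bm{z}_i\|_2^2 \;=\; \|\bm{y}\|_2^2 \;-\; 2\,\bm{y}^{\top}\bm{X}_i\bm{z}_i \;+\; n\, \bm{z}_i^{\top}\bm{C}_{ii}\bm{z}_i.
\end{equation*}
Under $\bm{C}_{ii} = \bm{I}_{p_i}$ the last term becomes $n\|\bm{z}_i\|_2^2$. The constant $\|\bm{y}\|_2^2$ is irrelevant to the maximization, and the sphere-reduction argument (convex objective on a compact set attains its max on the boundary of the unit ball) forces $\|\bm{z}_i\|_2 = 1$ at any optimum, making $n\|\bm{z}_i\|_2^2$ constant too. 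Dropping these constants, Program~\ref{eq:singlestagereg} collapses to Program~\ref{eq:singlestagecor} via the one-to-one reparametrization $\epsilon_i^{(a)} = 2\epsilon_i^{(b)}$, $\gamma_i^{(a)} = \gamma_i^{(b)}$, yielding equivalence in the sense of the footnote following Program~\ref{eq:l1reg}.

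For the second equivalence, I would treat $\bm{y}\in\mathbb{R}^n$ as a third view $\bm{X}_3 := \bm{y}$ with $p_3 = 1$, so that $\bm{z}_3 \in \mathcal{B}^{1} = [-1,1]$ is a scalar (which requires no sparsification). Program~\ref{eq:mCCA} at $m=3$ becomes
\begin{equation*}
\max_{\bm{z}_1,\,\bm{z}_2,\,z_3}\; \bm{z}_1^{\top}\bm{C}_{12}\bm{z}_2 \;+\; z_3\, \bm{z}_1^{\top}\bm{C}_{13} \;+\; z_3\, \bm{z}_2^{\top}\bm{C}_{23} \;-\; \gamma_1^{M}\|\bm{z}_1\|_1 \;-\; \gamma_2^{M}\|\bm{z}_2\|_1,
\end{equation*}
where $\gamma_i^{M}$ aggregates the relevant entries of $\bm{\Gamma}$. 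Since the objective is linear in $z_3$ for fixed $(\bm{z}_1,\bm{z}_2)$, its maximizer satisfies $|z_3|=1$; by sign symmetry (absorb the sign of $z_3$ into, say, $\bm{z}_1$) we may set $z_3 = +1$. Substituting $\bm{C}_{13}=\tfrac{1}{n}\bm{X}_1^{\top}\bm{y}$ and $\bm{C}_{23}=\tfrac{1}{n}\bm{X}_2^{\top}\bm{y}$ then reproduces Program~\ref{eq:singlestagecor} exactly, under the identification $\epsilon_i^{(a)} = 1/n$ and $\gamma_i^{(a)} = \gamma_i^{M}$.

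The main obstacle is the sphere-reduction in Step~1: the SSE-based objective in Program~\ref{eq:singlestagereg} is not jointly convex in $(\bm{z}_1,\bm{z}_2)$ because of the bilinear cross term $\bm{z}_1^{\top}\bm{C}_{12}\bm{z}_2$. I would therefore apply convexity blockwise — fix one of $\bm{z}_1,\bm{z}_2$ and push the other to its sphere, then alternate — which is exactly the argument already invoked in the last display of the proof of Theorem~\ref{thm:singlestagereg}. Once $\|\bm{z}_i\|_2=1$ is secured, the remaining parameter matching and the sign-absorption step for $z_3$ are routine algebraic checks.
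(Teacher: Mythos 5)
Your proposal follows essentially the same route as the paper: expand the squared-error loss, use $\bm{C}_{ii}=\bm{I}_{p_i}$ to reduce the quadratic term to $\|\bm{z}_i\|_2^2$, drop constants to recover the inner-product objective of Program \ref{eq:singlestagecor}, and then treat $\bm{y}$ as a third view with $p_3=1$ so that $z_3$ is a scalar maximized at $|z_3|=1$. You are in fact more careful than the paper on two points it glosses over: the factor of $2$ in the reparametrization of $\epsilon_i$ (the paper's Equation \ref{eq:equiv1} silently sets it to $1$), and the sign of $z_3$ (the paper asserts $\bm{z}_3^*=1$ without the sign-absorption step).

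The one step I would push back on is the "main obstacle" you identify and your proposed fix. To make $n\|\bm{z}_i\|_2^2$ a constant you need $\|\bm{z}_i\|_2=1$ at the optimum of Program \ref{eq:singlestagereg}, and you argue this by blockwise convexity. But after expanding the loss, the block objective in $\bm{z}_i$ (for the other block fixed) is linear in $\bm{z}_i$ \emph{minus} $\epsilon_i n\|\bm{z}_i\|_2^2$ \emph{minus} $\gamma_i\|\bm{z}_i\|_1$, i.e.\ concave, not convex; a concave function on the ball can attain its maximum in the interior (e.g.\ maximize $-\epsilon(y-xz)^2$ over $z\in[-1,1]$ with $|y/x|<1$), so the boundary argument from the end of Theorem \ref{thm:singlestagereg} does not transfer. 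To be fair, the paper's own Equation \ref{eq:equiv1} simply writes
\begin{equation*}
\min_{\bm{z}\in\mathcal{B}^p}\ \bm{y}^{\top}\bm{y}-2\bm{y}^{\top}\bm{X}\bm{z}+\bm{z}^{\top}\bm{X}^{\top}\bm{X}\bm{z}\;=\;\max_{\bm{z}\in\mathcal{B}^p}\ \bm{y}^{\top}\bm{X}\bm{z},
\end{equation*}
i.e.\ it drops the $\bm{z}^{\top}\bm{X}^{\top}\bm{X}\bm{z}=\|\bm{z}\|_2^2$ term with no justification at all, so the gap is inherited from the paper rather than introduced by you. A clean repair is to state the equivalence with the constraint set taken as the sphere $\mathcal{S}^{p_i}$ (consistent with how the canonical directions are normalized throughout the paper and in Equation \ref{eq:zstardirectedcor}), on which $\|\bm{z}_i\|_2^2$ is genuinely constant; everything else in your argument, including the three-view reduction, then goes through.
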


\begin{proof}
Assuming an orthogonal design, 

\begin{equation}
    \label{eq:equiv1}
    \min_{\bm{z} \in \mathcal{B}^p} \| \bm{y} - \bm{X}\bm{z} \|_2^2 = \min_{\bm{z} \in \mathcal{B}^p} \bm{y}^{\top}\bm{y} - 2\bm{y}^{\top}\bm{X}\bm{z} + \bm{z}^{\top}\bm{X}^{\top}\bm{X}\bm{z} = \max_{\bm{z} \in \mathcal{B}^p} \bm{y}^{\top}\bm{X}\bm{z} = \max_{\bm{z} \in \mathcal{B}^p} \langle \bm{y} , \bm{X}\bm{z} \rangle
\end{equation}

Hence programs \ref{eq:singlestagecor} and \ref{eq:singlestagereg} are equivalent. Now considering the multi-view approach for this problem,

\begin{equation}
\label{eq:equiv3}
\begin{split}
    \phi_{l_x}^M(\bm{\Gamma}) &= \max_{\substack{\bm{z}_i \in \mathcal{B}^{p_i}\\ \forall i = 1, \ldots, 3 }} \sum_{r<s = 2}^{3} \bm{z}_r^T\bm{C}_{rs}\bm{z}_s - \sum_{s = 1}^3  \sum_{\substack{r = 1 \\ r \neq s }}^{2} \Gamma_{sr} \| \bm{z}_s \|_1 \\
    &= \max_{\substack{\bm{z}_i \in \mathcal{B}^{p_i}\\ \forall i = 1, 2, 3 }} \bm{z}_1^T\bm{X}_{1}^{\top}\bm{X}_2\bm{z}_2 + \bm{z}_1^T\bm{X}_{1}^{\top}\bm{y}\bm{z}_3 +\bm{z}_2^T\bm{X}_{2}^{\top}\bm{y}\bm{z}_3 - \sum_{s = 1}^3  \sum_{\substack{r = 1 \\ r \neq s }}^{2} \Gamma_{sr} \| \bm{z}_s \|_1 \\
    &= \max_{\substack{\bm{z}_i \in \mathcal{B}^{p_i}\\ \forall i = 1, 2}} \bm{z}_1^T\bm{X}_{1}^{\top}\bm{X}_2\bm{z}_2 + \bm{z}_1^T\bm{X}_{1}^{\top}\bm{y} +\bm{z}_2^T\bm{X}_{2}^{\top}\bm{y} - \Gamma_{12}\| \bm{z}_1 \|_1 -  \Gamma_{21}\| \bm{z}_2 \|_1\\
\end{split}
\end{equation}
where the last line follows from the fact that $p_3 = 1$, so $\bm{z}_3^* = 1$. Equation \ref{eq:equiv3} is identical to \ref{eq:singlestagecor} for $\epsilon_1 = \epsilon_2 = 1$.

\end{proof}

\section{\texttt{MuLe}}
\label{sec:mule}

In this section we propose algorithms to solve the optimization programs introduced in Sections \ref{sec:convex} and \ref{sec:ext}. We also address the problem of initialization and hyper-parameter tuning. Our proposed algorithms are generally two-stage algorithms; in the first stage we find the sparsity patterns, $\bm{\tau}_i \in \{ 0,1 \}^{p_i}, \quad i = 1, \ldots, m$, of the optimal canonical directions via concave minimization programs introduced before, and in the second stage we shrink the covariance matrices using the sparsity patterns, $[\bm{C}_{ij}']_{rs} = [\bm{C}_{ij}]_{\tau_i^{(r)} \tau_j^{(s)}} $, where $\tau_i^{(r)}$ is the $r-th$ non-zero element of $\bm{\tau}_i$ or $r-th$ active element of $\bm{z}_i^*$, and solve the CCA problem using any \textit{Generalized Rayleigh Quotient} maximizer.

\begin{remark}
\label{rmk:seqtau}
In order to compute $\bm{\tau}_i$ for $i = 1, \ldots, m$, we start by computing $\bm{\tau}_m$, using which we shrink $\bm{C}_{im} \quad \forall i \neq m$ to $[\bm{C}_{im}']_{rs} = [\bm{C}_{im}]_{r\tau_m^{(s)}}$. This in turn shrinks the search space on $\bm{z}_m$ when computing $\bm{\tau}_i$, $i \neq m$. We perform the same shrinkage sequentially as we move down towards $\bm{\tau}_1$, shrinking the search space significantly each time. This sequential shrinkage, not only decreases computational cost drastically, it is also very useful in specifically very high-dimensional settings, since as with each shrinkage, we are directing successive solutions away from the normal cones of the preceding one. This might explain superior stability of our algorithm demonstrated in Section \ref{sec:sim}.
\end{remark}

Collecting from previous sections, the main differentiating characteristic of our approach is that we cast the problem of finding the sparsity patterns of the canonical directions as a maximization of a convex objective over a convex set, which is equivalent to the following \textit{Concave Minimization} problem,

\begin{equation}
    \phi^* = \max_{\bm{z} \in \mathbb{R}^p} f(\bm{z}) = \min_{\bm{z} \in \mathbb{R}^p}  {-f(\bm{z})}
\end{equation}

where $f:\mathbb{R}^p \rightarrow \mathbb{R}$ is a convex function. Consult \cite{mangasarian1996machine} and \cite{benson1995concave} for an in-depth treatment of this class of programs. \cite{journe:nesterov} propose a simple gradient ascent algorithm for this problem, for which they provide step-size convergence results. Considering these results as well as its empirical performance in terms of convergence and small memory foot-ptint, we also decided to use the following first-order method, 

\vspace{\baselineskip}

\begin{algorithm}[H]
\KwData{$\bm{z}_0 \in \mathcal{Q}$}
\KwResult{$\bm{z}^*  = \argmax_{\bm{z} \in \mathcal{Q}} f(\bm{z}) $}
$k \leftarrow 0$\\
\While{convergence criterion is not met}{
	$\bm{z}_{k+1} \leftarrow \argmax_{x \in \mathcal{Q}} (f(z_k) + (x - z_k)^Tf'(z_k)) $ \\
	$k \leftarrow k +1$
}
\caption{A first-order optimization method.}
\label{alg:1st}
\end{algorithm}

\vspace{\baselineskip}

What follows in this section, is the application of Algorithm \ref{alg:1st} to the programs proposed so far in this paper.


\subsection{\texorpdfstring{$l_1$}{TEXT}-Regularized Algorithm}
\label{subsec:l1regalg}

Applying algorithm \ref{alg:1st} to the problem in Program \ref{eq:shrink}.

\vspace{\baselineskip}

\begin{algorithm}[H] 
 \KwData{Sample Covariance Matrix $\bm{C}_{12}$\\  \quad \qquad    $l_1$-penalty parameter $\gamma_2$\\ \quad \qquad Initial value $\bm{z}_1 \in \mathcal{S}^{p_1}$}
 \KwResult{ $\bm{\tau}_2$, optimal sparsity pattern for $\bm{z}_2^*$}
 initialization\;
 \While{ convergence criterion is not met }{
	$\bm{z}_1 \leftarrow \sum_{i = 1}^{p_2} [| \bm{c}_i^{\top}\bm{z}_1 | - \gamma_2]_+ sgn(\bm{c}_i^{\top}\bm{z}_1)\bm{c}_i$\\
    $\bm{z}_1 \leftarrow \frac{\bm{z}_1 }{\| \bm{z}_1 \|_2}$}
 
 Output $\bm{\tau}_2 \in \{0,1\}^{p_2}$ where $\tau_{2i} = 0 $ if $|\bm{c}_i^{\top}\bm{z}_1^*| \leq \gamma_2$ and 1 otherwise.\\
 
 \caption{ \texttt{MuLe} algorithm for optimizing Program \ref{eq:shrink} }
 \label{alg:2nd}
\end{algorithm}

\vspace{\baselineskip}

Once the sparsity pattern $\bm{\tau}_2$ is found, we shrink the covariance matrix to $\bm{C}_{12}' \in \mathbb{R}^{p_1 \times | \bm{\tau}_2|}$, as prescribed at the beginning of this section, and apply Algorithm \ref{alg:1st} to ${\bm{C}_{12}'}^{\top}$ to find $\bm{\tau}_1$. Now we shrink the sample covariance matrix once more to $\bm{C}_{12}^{''} \in \mathbb{R}^{|\bm{\tau_1}| \times |\bm{\tau}_2|}$. For large enough sparsity parameters, this matrix is no more rank-deficient, and we can use conventional SVD or CCA methods to fill in the active elements of $\bm{z}_i$, i.e. solve for the leading singular vectors or canonical covariates of this much smaller matrix.

\subsection{\texorpdfstring{$l_0$}{TEXT}-Regularized Algorithm}
\label{subsec:l0regalg}

Now, we use Algorithm \ref{alg:1st} to optimize Program \ref{eq:shrinkl0}.

\vspace{\baselineskip}

\begin{algorithm}[H] 
 \KwData{Sample Covariance Matrix $\bm{C}_{12}$\\  \quad \qquad    $l_1$-penalty parameter $\gamma_2$\\ \quad \qquad Initial value $\bm{z}_1 \in \mathcal{S}^{p_1}$}
 \KwResult{ $\bm{\tau}_2$, optimal sparsity pattern for $\bm{z}_2^*$}
 initialization\;
 \While{ convergence criterion is not met }{
	$\bm{z}_1 \leftarrow \sum_{i= 1}^{p_2} [ ( \bm{c}_i^{\top} \bm{z}_1 )^2 - \gamma_2  ]_+ \bm{c}_i^{\top}\bm{z}_1\bm{c}_i$\\
    $\bm{z}_1 \leftarrow \frac{\bm{z}_1 }{\| \bm{z}_1 \|_2}$}
 
 Output $\bm{\tau}_2 \in \{0,1\}^{p_2}$ where $\tau_{2i} = 0 $ if $(\bm{c}_i^{\top}\bm{z}_1^*)^2 \leq \gamma_2$ and 1 otherwise.\\
 
 \caption{ \texttt{MuLe} algorithm for optimizing Program \ref{eq:shrinkl0} }
 \label{alg:2ndl0}
\end{algorithm}

\vspace{\baselineskip}

Similar to \ref{subsec:l1regalg}, we perform successive shrinkage and find $\bm{\tau}_1$ in the nest step by applying Algorithm \ref{alg:2ndl0} on the shrunk matrix ${\bm{C}_{12}'}^{\top}$.

\subsection{Algorithm Complexity}
Perhaps the most appealing characteristic of our proposed algorithm is its significantly lower time complexity compared to other state of the art algorithms. Here we analyze the time complexity of \texttt{MuLe} and compare it to the most common algorithm for \textit{sCCA} which is the alternating first order optimization, e.g. \cite{waaijenborg}, \cite{parkhomenkoSCCA}, \cite{witten:tibshirani:2009}, for which we use the umbrella term \texttt{sSVD} here. Following the set-up thus far, assume we have observed $\bm{X}_1 \in \mathbb{R}^{n \times p_1}$ and $\bm{X}_2 \in \mathbb{R}^{n \times p_2}$ and we wish to recover sparse canonical loading vectors $\bm{z}_1 \in \mathbb{R}^{p_1}$ and $\bm{z}_2 \in \mathbb{R}^{p_2}$. 
In order to create more intuition about the speed-up consider a hypothetical algorithm which uses power method to solve a SVD problem and finally simply uses hard-thresholding to create sparse loading vectors. We will call this algorithm \texttt{pSVDht}. Also consider another hypothetical algorithm called \texttt{sSVDht} which performs the alternating maximization and similarly induces sparsity by hard-thresholding.

\begin{proposition}\label{proposition:complexity}
Time complexity of each iteration of \texttt{MuLe} is smaller than that of \textit{pSVDht} if $n < min\{p_1, p_2\}$ and $p_1 \sim p_2$.
\end{proposition}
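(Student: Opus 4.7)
The plan is to compare per-iteration flop counts for the two algorithms under the natural, most efficient implementation of each, and then specialize under the asserted regime $n<\min\{p_1,p_2\}$ with $p_1\asymp p_2$.

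First I would dissect one iteration of Algorithm \ref{alg:2nd}. The inner update $\bm{z}_1\leftarrow\sum_{i=1}^{p_2}[|\bm{c}_i^{\top}\bm{z}_1|-\gamma_2]_+\,\mathrm{sgn}(\bm{c}_i^{\top}\bm{z}_1)\,\bm{c}_i$ reads as two matrix--vector multiplies sandwiching a coordinate-wise soft-thresholding: set $\bm{a}=\bm{C}_{12}^{\top}\bm{z}_1\in\mathbb{R}^{p_2}$, apply the soft-threshold $\tilde{a}_i=[|a_i|-\gamma_2]_+\mathrm{sgn}(a_i)$, and form $\bm{z}_1\leftarrow\bm{C}_{12}\tilde{\bm{a}}$ followed by an $\ell_2$ normalization. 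The critical observation is that $\bm{C}_{12}=\tfrac{1}{n}\bm{X}_1^{\top}\bm{X}_2$ need never be assembled: computing $\bm{a}=\tfrac{1}{n}\bm{X}_2^{\top}(\bm{X}_1\bm{z}_1)$ costs $O(np_1)+O(np_2)$, and likewise $\bm{z}_1\leftarrow\tfrac{1}{n}\bm{X}_1^{\top}(\bm{X}_2\tilde{\bm{a}})$ costs $O(np_2)+O(np_1)$. The soft-thresholding step and the normalization are $O(p_2)$ and $O(p_1)$ respectively, so the per-iteration cost of \texttt{MuLe} is $O\bigl(n(p_1+p_2)\bigr)$.

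Next I would analyze \texttt{pSVDht}. A power-method iteration on the precomputed sample covariance matrix $\bm{C}_{12}\in\mathbb{R}^{p_1\times p_2}$ is the canonical choice for directly producing the leading singular pair of $\bm{C}_{12}$ prior to hard-thresholding, and it amounts to the two matrix--vector multiplies $\bm{u}\leftarrow\bm{C}_{12}\bm{v}$ and $\bm{v}\leftarrow\bm{C}_{12}^{\top}\bm{u}$. Performed against the stored $\bm{C}_{12}$ (as is standard once one has committed to working in the covariance representation that SVD-based sCCA algorithms operate on), each such multiply is $\Theta(p_1 p_2)$, so one full power iteration is $\Theta(p_1 p_2)$; the subsequent hard-threshold is lower order. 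I would explicitly flag this modeling choice because, if one instead applied the factored trick $\bm{C}_{12}\bm{v}=\tfrac{1}{n}\bm{X}_1^{\top}(\bm{X}_2\bm{v})$ used for \texttt{MuLe}, the costs would match and the claim would collapse; the point of the proposition is that power-method SVD operates naturally on $\bm{C}_{12}$ itself, and the concave-minimization form of \texttt{MuLe}'s update is precisely what exposes the factored shortcut.

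Finally, combining the two counts and invoking the hypotheses: with $p_1\asymp p_2$ we have $n(p_1+p_2)=\Theta(np_1)$ and $p_1p_2=\Theta(p_1^2)$, and the assumption $n<\min\{p_1,p_2\}\le p_1$ yields $\Theta(np_1)<\Theta(p_1^2)$, which is the claimed inequality. The main obstacle is not algebraic but definitional: carefully justifying which implementation of \texttt{pSVDht} constitutes the fair ``per-iteration'' baseline, and explaining why the factored-form speedup is intrinsic to the gradient step of Algorithm \ref{alg:2nd} rather than an orthogonal improvement that could equally be grafted onto the competitor.
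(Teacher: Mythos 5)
Your argument is internally consistent, but it is a genuinely different route from the paper's, and the difference matters. The paper's proof never invokes the factored matrix--vector product $\bm{C}_{12}\bm{v}=\tfrac{1}{n}\bm{X}_1^{\top}(\bm{X}_2\bm{v})$: it charges \emph{both} algorithms $\Theta(p_1p_2)$ per multiply against the assembled covariance matrix, so the two first passes cost the same. The entire savings is instead located in the successive shrinkage between passes --- once $\bm{\tau}_2$ is found, $\bm{X}_2$ is reduced to $n_2'\sim n$ active columns, so MuLe's second pass (finding $\bm{\tau}_1$) runs at $O(p_1 n_2')$ per iteration rather than $O(p_1p_2)$, giving $2p_1p_2+2p_1n$ versus \texttt{pSVDht}'s $4p_1p_2$ for the two passes combined; the hypothesis $n<\min\{p_1,p_2\}$ enters exactly there. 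Your mechanism (factored matvecs for MuLe, assembled matvecs for the competitor) does produce the bound $n(p_1+p_2)<p_1p_2$ under the stated hypotheses, but it rests on an implementation asymmetry that you yourself concede makes the claim collapse if the competitor is granted the same trick --- and your appeal to what is ``natural'' for a power-method SVD is not a substitute for an argument, since the factored form is equally available to any iteration that only ever touches $\bm{C}_{12}$ through matrix--vector products (which \texttt{pSVDht} does). The paper's version is immune to that particular objection, though it carries its own asymmetry (MuLe is allowed to shrink between passes while \texttt{pSVDht} is not, which is arguably the substantive algorithmic difference being advertised). If you keep your route, you should either also credit the shrinkage (which is where the paper actually gets its $n$ dependence) or state the assembled-matrix cost model for \texttt{pSVDht} as an explicit hypothesis rather than a plausibility claim.
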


\textbf{Proof.} The proof of Proposition \ref{proposition:complexity} is presented in Appendix \ref{app:proposition:complexity}.

\begin{proposition}
\label{proposition:pmd}
The time complexity of each $(z_1,z_2)$ update of the \texttt{MuLe} algorithm, i.e. Algorithm \ref{alg:2nd}, is significantly lower  than that of the \texttt{sSVD} algorithm, \cite{witten:tibshirani:2009} Algorithm 3. 
\end{proposition}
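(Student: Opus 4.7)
The plan is to prove the proposition by writing each algorithm's per-iteration update as a composition of elementary linear-algebra primitives and then tallying operation counts, so that the comparison reduces to counting matrix-vector products and thresholding sweeps.

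First, I would rewrite the Algorithm~\ref{alg:2nd} update in matrix form as $\bm{z}_1 \leftarrow \bm{C}_{12}\,S(\bm{C}_{12}^{\top}\bm{z}_1,\gamma_2)$ followed by $\ell_2$-normalization, where $S(\cdot,\gamma)$ denotes component-wise soft-thresholding. The dominant cost is exactly two matrix-vector products with $\bm{C}_{12}\in\mathbb{R}^{p_1\times p_2}$, i.e.\ $2 p_1 p_2$ flops, plus $O(p_2)$ flops for thresholding and $O(p_1)$ for normalization. Moreover, because the soft-thresholded vector $\bm{v} = S(\bm{C}_{12}^{\top}\bm{z}_1,\gamma_2)$ is supported only on the current candidate active set $\bm{\tau}_2^{(k)}$, the second product can be performed in $O(p_1\,|\bm{\tau}_2^{(k)}|)$ flops, yielding an effective per-iteration cost of $O(p_1 p_2 + p_1|\bm{\tau}_2^{(k)}|)$ that shrinks as the support stabilizes.

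Next, I would unpack one full $(\bm{z}_1,\bm{z}_2)$ cycle of Algorithm~3 in \cite{witten:tibshirani:2009}. Each of its two half-cycles performs (i) a matrix-vector product with $\bm{C}_{12}$ or its transpose ($p_1 p_2$ flops), (ii) a binary search over the soft-thresholding parameter $\Delta$ chosen to make the $\ell_1$-ball constraint $\|\bm{z}_i\|_1\leq c_i$ active, where each bisection step evaluates $\|S(\cdot,\Delta)\|_1$ at cost $O(p_i)$ and a tolerance of $\epsilon$ demands $O(\log(1/\epsilon))$ steps, and (iii) a normalization. Summed over the two half-cycles this gives $2p_1 p_2 + O\!\bigl((p_1+p_2)\log(1/\epsilon)\bigr)$ flops per full update.

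Forming the ratio makes the claim transparent: both algorithms pay the same leading-order $O(p_1 p_2)$ cost for the covariance-vector products, but MuLe pays it with a single elementwise soft-threshold at the fixed Lagrangian parameter $\gamma_2$, whereas sSVD pays it twice \emph{and} additionally pays the bisection overhead $O((p_1+p_2)\log(1/\epsilon))$ needed to meet the hard $\ell_1$ constraint; moreover the second MuLe product accelerates by the factor $|\bm{\tau}_2^{(k)}|/p_2$ once the support has settled, whereas the sSVD thresholding is always dense until convergence. The main obstacle is making the binary-search contribution quantitative, since the tolerance $\epsilon$ is implementation-specific; I would handle this either by citing the default tolerance in the \texttt{PMA} package or by treating $\epsilon$ as a parameter and concluding that the MuLe iteration is strictly cheaper for every non-trivial choice. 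As a sanity check I would also invoke Proposition~\ref{proposition:complexity}: sSVD strictly dominates the hypothetical \texttt{pSVDht} baseline by the additional binary-search factor, while MuLe is strictly cheaper than \texttt{pSVDht} by Proposition~\ref{proposition:complexity}, so transitivity yields the claim.
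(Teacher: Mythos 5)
Your flop count is a sensible and in some ways more careful accounting than the paper's, but as written it does not deliver the stated conclusion, and it diverges from the paper's argument at exactly the decisive point. Under your model both per-update costs share the same leading term $O(p_1p_2)$ (the covariance--vector products), and the only difference you exhibit is a lower-order additive correction: one extra dense thresholding sweep plus the bisection overhead $O((p_1+p_2)\log(1/\epsilon))$. For large $p_1,p_2$ these are dominated by $p_1p_2$, so your own numbers show the two updates are asymptotically equivalent at leading order --- which contradicts the claim of a \emph{significantly} lower complexity rather than establishing it. The paper closes this gap by a different (and much more aggressive) costing: it charges the inner subproblem of Algorithm 3 of \cite{witten:tibshirani:2009} --- finding $\Delta_i$ such that $\|z_i\|_1=c_i$ holds exactly --- at exponential cost $O(2^{p_1})$ and $O(2^{p_2})$, and dismisses the binary-search shortcut on the grounds that it carries no convergence guarantee (noting that the \texttt{PMA} implementation caps its iteration count at a small constant). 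Whether or not one finds that charge persuasive (the map $\Delta\mapsto\|S(\cdot,\Delta)\|_1$ is piecewise linear and monotone, so the exact $\Delta$ is computable in $O(p\log p)$ by sorting), it is this exponential term, not the matrix--vector counts, that carries the paper's conclusion. Your proof must either adopt that costing or argue explicitly why the bisection overhead together with its lack of a convergence guarantee constitutes a ``significant'' gap; as it stands it proves only ``somewhat cheaper by lower-order terms.''

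The transitivity fallback also does not quite go through as stated. Proposition \ref{proposition:complexity} compares the \emph{two-pass totals} of \texttt{MuLe} and \texttt{pSVDht} --- the savings there come from shrinking $\bm{X}_2$ to $n_2'\sim n$ columns between the two passes --- and it holds only under the hypotheses $n<\min\{p_1,p_2\}$ and $p_1\sim p_2$. It therefore yields a strict but not order-of-magnitude inequality, and chaining it with ``\texttt{sSVD} costs at least \texttt{pSVDht} plus the $\Delta$-search'' again produces only a lower-order advantage unless the $\Delta$-search is charged at the paper's exponential rate.
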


\textbf{Proof.} A simple proof is provided in Appendix \ref{app:proposition:pmd}.

\subsection{Sparse Multi-View CCA Algorithm}
\label{subsec:multimodalalg}

Our sparse multi-view formulation offered in Program \ref{eq:multiznoptsimple} scales linearly with the number of views, which along with the immense shrinkage of the search domain as a result of our concave minimization program results in considerable reduction in convergence time. Below is our proposed gradient ascent algorithm for finding $\bm{\tau}_i \in \{1,2 \}^{p_i}$, $i = 1, \ldots, m$.

\vspace{\baselineskip}
\begin{algorithm}[H] 
 \KwData{Sample Covariance Matrices $\bm{C}_{rs}, \quad 1 \leq r < s \leq m$\\  \quad \qquad    Sparsity parameter matrix $\bm{\Gamma} \in [0,1]^{m \times m}$\\ \quad \qquad Initial values $\bm{z}_r \in \mathcal{S}^{p_r}, \quad 1 \leq r \leq m$}
 \KwResult{ $\bm{\tau}_s$, optimal sparsity pattern for $\bm{z}_s$}
 initialization\;
 \While{ convergence criterion is not met }{
 \For{$r = 1, \ldots, m$, $r \neq s$}{
	$\bm{z}_r \leftarrow \sum_{i= 1}^{p_s} [ | \sum_{ \substack{r = 1\\ r \neq s}}^m  \tilde{\bm{c}}_{rsi}^{\top} \bm{z}_r | - \sum_{\substack{r = 1\\ r \neq s}}^{m} \Gamma_{sr} ]_+ sgn( \sum_{ \substack{r = 1\\ r \neq s}}^m  \tilde{\bm{c}}_{rsi}^{\top} \bm{z}_r) \tilde{\bm{c}}_{rsi} +  \sum_{\substack{l = 1\\ l \neq r,s}}^{m} \tilde{\bm{C}}_{rl}\bm{z}_l $\\
    $\bm{z}_r \leftarrow \frac{\bm{z}_r }{\| \bm{z}_r \|_2}$
    }
 }
 
 Output $\bm{\tau}_s \in \{0,1\}^{p_s}$,  where $\tau_{si} = 0 $ if $| \sum_{ \substack{r = 1\\ r \neq s}}^m  \tilde{\bm{c}}_{rsi}^{\top} \bm{z}_r | \leq \sum_{\substack{r = 1\\ r \neq s}}^{m} \Gamma_{sr}$ and 1 otherwise.\\
 
 \caption{\texttt{MuLe} algorithm for optimizing Program \ref{eq:multiznoptsimple}}
 \label{alg:3rd}
\end{algorithm}
\vspace{\baselineskip}

Once $\bm{\tau}_s$ is computed we can use successive shrinkage to shrink $\tilde{\bm{C}}_{rs}$, $r = 1, \ldots, m$, $r \neq s$, per instructions provided in Remark \ref{rmk:seqtau}, to $\tilde{\bm{C}}_{rs}' \in \mathbb{R}^{p_r \times |\bm{\tau}_s|}$. We compute the rest of the sparsity patterns by repeating Algorithm \ref{alg:3rd} together with successive shrinkage.

Finally we shrink all covariance matrices to $\bm{C}_{rs}'' \in \mathbb{R}^{|\bm{\tau}_r| \times |\bm{\tau}_s|}$ using computed sparsity patterns. The second stage of our algorithm, as before, involves estimating the active elements of $\bm{z}_i^*$; for which we propose two algorithms, the \texttt{mCCA} algorithm, see Appendix \ref{app:mcca}, and the \texttt{mSVD} algorithm, see Appendix \ref{app:msvd}.

\subsection{Single Stage Sparse Directed CCA Algorithm}
\label{subsec:singlestageDirectedMule}
We proposed three approaches in \ref{subsec:directed} for \textit{Directed sCCA} problem; one two-stage, where we first perform variable selection and then perform sCCA on the covariance matrix of the selected variables, and two single-stage methods, where we direct the canonical covariates to align with certain outcome of subspace. For our proposed two-stage algorithm refer to the Appendix \ref{app:semimule}. Here we elaborate on our single-stage algorithms, starting with \ref{subsubsec:singlestage}.a, we apply our gradient ascent algorithm to Program \ref{eq:directedz1l1}. Once again we optimize it with no regards to the regularization term in the first stage.

\vspace{\baselineskip}

\begin{algorithm}[H] 
 \KwData{Sample Covariance Matrix $\bm{C}_{12}$\\  \quad \qquad    $l_1$ regularization parameter $\gamma_2$\\ \quad \qquad
 Alignment hyperparameters $(\epsilon_1, \epsilon_2) $\\ \quad \qquad
 Initial value $\bm{z}_1 \in \mathcal{S}^{p_1}$}
 \KwResult{ $\bm{\tau}_2$, optimal sparsity pattern for $\bm{z}_2^*$}
 initialization\;
 \While{ convergence criterion is not met }{
	$\bm{z}_1 \leftarrow \sum_{i = 1}^{p_2} [| \bm{c}_i^{\top}\bm{z}_1 + \epsilon_2\bm{x}_{2i}^{\top}\bm{y}| - \gamma_2]_+ sgn(\bm{c}_i^{\top}\bm{z}_1 + \epsilon_2\bm{x}_{2i}^{\top}\bm{y})\bm{c}_i + \epsilon_1 \bm{X}_1^{\top}\bm{y}$\\
    $\bm{z}_1 \leftarrow \frac{\bm{z}_1 }{\| \bm{z}_1 \|_2}$}
 
 Output $\bm{\tau}_2 \in \{0,1\}^{p_2}$ where $\tau_{2i} = 0 $ if $|\bm{c}_i^T\bm{z}_1^* + \epsilon_2 \bm{x}_{2i}^{\top}\bm{y} | \leq \gamma_2$ and 1 otherwise.\\
 
 \caption{ \texttt{MuLe} algorithm for optimizing Program \ref{eq:directedz1l1} }
 \label{alg:4th}
\end{algorithm}

\vspace{\baselineskip}

As before, to compute $\bm{\tau}_1$, we use successive shrinkage, and in the second stage we use conventional SVD or CCA to estimate the active entries. Regarding \ref{subsubsec:singlestage}.b, rather than an algorithm solving Program \ref{eq:directedz1l1reg}, we propose a simpler Algorithm which is identical to Algorithm \ref{alg:4th}, except that we $\bm{X}_i^{\top}\bm{y}$ with $\bm{\beta}_i$ for $i = 1,2$, similarly $\bm{x}_{ij}^{\top}\bm{y}$ with $\beta_{ij}$, which is the vector of coefficient estimates from regressing $\bm{y}$ on $\bm{X}_i$.

\subsection{Initialization \& Hyperparameter Tuning}
\label{subsec:initpartune}

\subsubsection{Initialization}
Concerning the initialization, we follow the suggestion of \cite{journe:nesterov} and choose an initial value $\bm{z}_{1,init}$ for which our algorithm is guaranteed to yield a sparsity pattern with at least one non-zero element. This initial value is chosen parallel to the column with the largest $L_2$ norm.

\begin{equation}
{\bm{z}_{1}}_{init} = \frac{\bm{c}_{i^*}}{\| \bm{c}_{i^*}\|_2}, \quad i^* = \argmax_{i\in \{1, \ldots, p_1\}} \| \bm{c}_{i}\|_2
\end{equation}

Where $\bm{c}_{i}$ is the $i$-th column of $\bm{C}_{12}$. Similarly, ${\bm{z}_{2}}_{init} = \bm{c}_{i^*}'/\| \bm{c}_{i^*}'\|_2$, where $\bm{c}_{i^*}'$ is the column of the transpose of the shrunk covariance matrix.

\subsubsection{Hyperparameter Tuning}

Algorithms \ref{alg:2nd}-\ref{alg:4th} involve choosing hyperparameters $\bm{\gamma}$ and $\bm{\epsilon}$. Here we propose two algorithm for choosing the optimal sparsity parameters, $\gamma_i$; they are easily extendable to tuning alignment parameters $\epsilon_i$. But we first need to choose a performance criteria in order to compare different choices of parameters. \cite{witten:tibshirani:2009} choose penalty parameters which best estimate entries that were randomly removed from the covariance matrix, while some choose them by comparing the Frobenius norms of the reconstructed covariance matrices subtracted from the original matrix. These choices are effectively imposed due to solving a penalized SVD instead of the sCCA problem. However, since we solve the CCA problem in the second stage of our algorithm, we use the canonical correlation, $\rho_{\gamma_1, \gamma_2}(\bm{X}_1^{\top}\bm{z}_1, \bm{X}_2^{\top}\bm{z}_2)$, as our measure, which serves our objective more properly.

Algorithm \ref{alg:hyperparametercv} performs hyperparameter tuning using the $k$-fold cross-validation method, which is widely common in sCCA literature. 

\vspace{\baselineskip}
\begin{algorithm}[H]
 \KwData{Sample matrices $\bm{X}_i \in \mathbb{R}^{n \times p_i}$, $i = 1,2$ \\  \quad \qquad    Sparsity parameters $\gamma_i$, $i = 1,2$\\ \quad \qquad Initial values $\bm{z}_i \in \mathcal{S}^{p_i}$, $i = 1,2$ \\  \quad \qquad Number of folds $K$}
 \KwResult{ $\rho_{CV}(\gamma_1,\gamma_2)$ the average cross-validated canonical correlation}

Let $\bm{X}_{ik}, \bm{X}_{i/k}, i = 1,2, j = 1, \ldots, K$ be the validation and training sets corresponding to the $k$-th fold, respectively.\\
 \For{k = 1, \ldots, K}{
 Compute $({\bm{z}_1^*}^{(k)}, {\bm{z}_2^*}^{(k)})$ on $\bm{X}_{1/k}, \bm{X}_{2/k}$ via proposed methods in \ref{subsec:l1regalg} or \ref{subsec:l0regalg} with sparsity hyperparameters $(\gamma_{1}, \gamma_{2})$\\
 $\rho^{(k)}(\gamma_1, \gamma_2) = corr(\bm{X}_{1k}{\bm{z}_1^*}^{(k)}, \bm{X}_{2k}{\bm{z}_2^*}^{(k)})$
 }
 $\rho_{CV}(\gamma_1, \gamma_2) = 1/K\sum_{k = 1}^K \rho^{(k)}(\gamma_1, \gamma_2)$ \\
 
\caption{Hyperparameter Tuning via $k$-Fold Cross-Validation}
\label{alg:hyperparametercv}
\end{algorithm}
\vspace{\baselineskip}

This approach has a significant shortcoming, specially in high-dimensional settings, though. The issue is that once the sparsity parameter is small enough, the fitted models return high correlation values, close to one, which makes the choice of best parameters inaccurate. To cope with this problem, we propose a second algorithm which performs a permutation test, where the null hypothesis is that the views $\bm{X}_i$ are independent. In order to reject the null, the canonical correlation computed from the matched samples must be significantly higher than the average canonical correlation computed from the permuted samples. To this end, we propose Algorithm \ref{alg:hyperparameterpermutation}. Given a grid of hyperparameters, the tuple which minimizes the $p$-value is chosen.



Algorithm \ref{alg:hyperparametercv} performs hyperparameter tuning using the $k$-fold cross-validation method, which is widely common in sCCA literature. 

\vspace{\baselineskip}
\begin{algorithm}[H]
 \KwData{Sample matrices $\bm{X}_i \in \mathbb{R}^{n \times p_i}$, $i = 1,2$ \\  \quad \qquad     Sparsity parameters $\gamma_i$, $i = 1,2$\\ \quad \qquad Initial values $\bm{z}_i \in \mathcal{S}^{p_i}$, $i = 1,2$ \\  
 \quad \qquad Number of permutations $P$}
 \KwResult{ $p_{\gamma_1, \gamma_2}$ the evidence against the null hypothesis that the canonical correlation is not lower when $X_i$ are independent.}

Compute $({\bm{z}_1^*}, {\bm{z}_2^*})$ on $\bm{X}_{1}, \bm{X}_{2}$ via proposed methods in \ref{subsec:l1regalg} or \ref{subsec:l0regalg} with sparsity hyperparameters $(\gamma_{1}, \gamma_{2})$\\
$\rho(\gamma_1, \gamma_2) = corr(\bm{X}_{1}{\bm{z}_1^*}, \bm{X}_{2}{\bm{z}_2^*})$\\

 \For{p = 1, \ldots, P}{
 Let $\bm{X}_{1}^{(p)}$ be a row-wise permutation of $\bm{X}_1$\\
 Compute $({\bm{z}_1^*}^{(p)}, {\bm{z}_2^*}^{(p)})$ on $\bm{X}_{1}^{(p)}, \bm{X}_{2}$ via proposed methods in \ref{subsec:l1regalg} or \ref{subsec:l0regalg} with sparsity hyperparameters $(\gamma_{1}, \gamma_{2})$\\
 $\rho_{perm}^{(p)}(\gamma_1, \gamma_2) = corr(\bm{X}_{1}^{(p)}{\bm{z}_1^*}^{(p)}, \bm{X}_{2}{\bm{z}_2^*}^{(p)})$
 }
 $p_{\gamma_1, \gamma_2} = 1/P \sum_{p = 1}^P I(\rho_{perm}^{(p)} > \rho)$ \\
 
\caption{Hyperparameter Tuning via Permutation Test}
\label{alg:hyperparameterpermutation}
\end{algorithm}
\vspace{\baselineskip}

\section{Experiments}
\label{sec:sim}

In this section we compare and evaluate our proposed algorithm \texttt{MuLe} along with few other sparse CCA algorithms. To perform an inclusive comparison, we tried to choose representatives from different approaches. As argued in \ref{subsub:lasso}, optimization problems introduced in \cite{witten:tibshirani:2009}, \cite{parkhomenkoSCCA}, \cite{waaijenborg} are equivalent. The methods used here for comparison are the \textit{Penalized Matrix Decomposition} proposed in \cite{witten:tibshirani:2009} which is implemented in the \texttt{PMA} package, and also a ridge regularized CCA, noted here as \texttt{RCCA}. In order to benchmark \texttt{MuLe} comprehensively, simple \texttt{SVD} and \texttt{SVDthr}, which is simply soft-thresholded SVD, are also included. Note that as mentioned before almost all sparse CCA algorithms try to solve a penalized singular value decomposition problem, whereas we solve a CCA problem in the second stage. In \ref{subsec:rankOne} and \ref{subsec:sim:randomdata}
we first establish the accuracy of our algorithm, then we compare compute and compare few characteristic curves regarding stability of our algorithm. We also compare out Multi-View Sparse CCA algorithm with other popular algorithm, the results of which is included in Appendix \ref{app:subsec:multiview}.

\subsection{A Rank-One Sparse CCA Model} 
\label{subsec:rankOne}
Consider a CCA problem where $\bm{X}_1$ and $\bm{X}_2$ are generated using the following rank-one model,

\begin{equation}
\bm{X}_1  = (\bm{z}_1 + \bm{\epsilon}_1)\bm{u}^{\top}, \quad \bm{X}_2  = (\bm{z}_2 + \bm{\epsilon}_2)\bm{u}^{\top}
\end{equation}

where $\bm{z}_1 \in \mathbb{R}^{500}$ and $\bm{z}_2 \in \mathbb{R}^{400}$ have the following sparsity patterns,

\begin{equation}
\begin{split}
\bm{z}_{1} &= \bigg[\underbrace{1, \ldots ,1}_{25}\quad \underbrace{-1, \ldots ,-1}_{25}\quad \underbrace{0, \ldots, 0}_{450}\bigg]\\
\bm{z}_{2} &= \bigg[\underbrace{1, \ldots ,1}_{25}\quad \underbrace{-1, \ldots ,-1}_{25} \quad \underbrace{0, \ldots, 0}_{350}\bigg]
\end{split}
\end{equation}

$\bm{\epsilon}_1 \in \mathbb{R}^{400}$ and $\bm{\epsilon}_2 \in \mathbb{R}^{500}$ are added Gaussian noise.

\begin{equation}
\begin{split}
\bm{\epsilon}_1 &\sim \mathcal{N}(0, \sigma^2), \forall i = 1, \ldots 500,\\
\bm{\epsilon}_2 &\sim \mathcal{N}(0, \sigma^2), \forall i = 1, \ldots 400,
\end{split}
\end{equation}

and

\begin{equation}
\bm{u}_i \sim \mathcal{N}(0,1), \forall i = 1, \ldots, 50.
\end{equation}

Figure \ref{fig:rankOne} compares \texttt{MuLe}'s performance to the methods mentioned above. The noise amplitude, $\sigma$ was set to $0.2$, in order to more significantly differentiate between the methods. It is evident that \texttt{MuLe} successfully identified the underlying sparse model since both the sparsity pattern and the value of the coefficients were estimated quite accurately, while \texttt{PMA} failed to estimate the coefficient sizes accurately. Note here that, our simple cross-validation parameter tuning resulted in accurate identification of the canonical directions while using the same procedure on \texttt{PMA} resulted in cardinalities far from the specified model. Hence, the sparsity parameters for the latter method were chosen by trial-and-error to match model's sparsity pattern.

\begin{figure}
\includegraphics[width=1\textwidth]{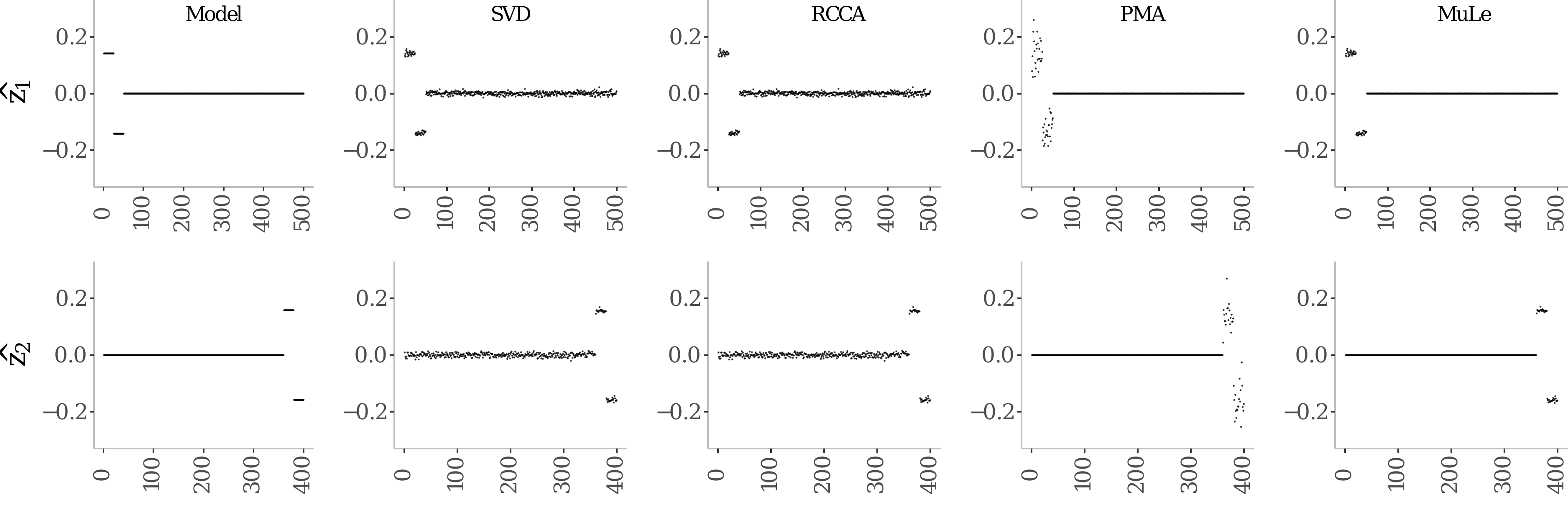}
\caption{Comparing performance of different sCCA approaches in recovering the sparsity pattern and estimating active elements of the canonical directions. The \textit{Model} or ``true" canonical directions are plotted in the leftmost plot.}
\label{fig:rankOne}
\end{figure}

Under the same setting, but varying level of noise $\sigma$, we compute the cosine of the angle between the estimated, $\hat{\bm{z}}_i$, and true, $\bm{z}_i$, canonical directions, $cos(\theta_i) = |\langle \bm{z}_i, \hat{\bm{z}}_i \rangle|$ for $i = 1, 2$ via the methods utilized in Figure \ref{fig:rankOne}. We plotted the results in Figure \ref{fig:snr} for both canonical directions; according to which, \texttt{MuLe} outperforms other methods, especially the alternating method of \cite{witten:tibshirani:2009}, throughout the range of noise amplitude. \texttt{PMA} uniquely shows a lot of volatility in its solution. The built-in parameter tuning also misspecified the correct sparsity parameters, but providing correct hyperparameters manually also did not help much. Actually, our test shows that a simple thresholding algorithm like \texttt{SVDthr} outperforms \texttt{PMA} both in terms of support recovery and direction estimation.


\begin{figure}
\includegraphics[width = 1\textwidth]{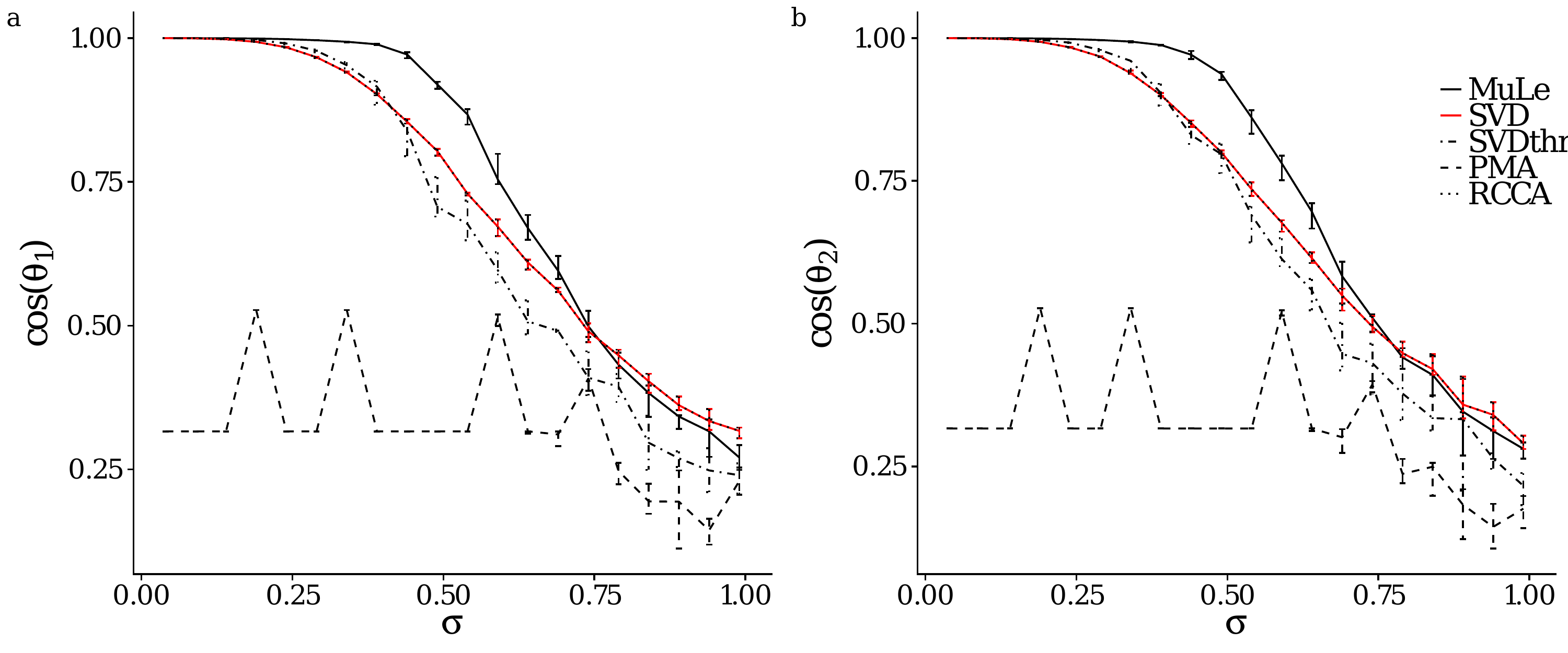}
\caption{The cosine of the angle between the estimated and true canonical directions, $cos(\theta_i) = |\langle \hat{\bm{z}}_i, \bm{z}_i \rangle|$ computed for both datasets.} 
\label{fig:snr}
\end{figure}

But perhaps the most important piece of information one looks for in high-dimensional multi-view studies is the interpretability of the estimated canonical directions. Therefore, ultimately the decisive criteria in choosing the best approach is determined by how well they uncover the ``true" underlying sparsity pattern or simply put, how accurately a model performs variable selection. To this end, variable selection accuracy of each method is plotted against the noise amplitude in Fig. \ref{fig:card} as the fraction of the support of $\bm{z}_i$, $i \in \{1,2\}$ discovered, here denoted as $\eta_i$, vs. the noise amplitude, $\sigma$. As before \texttt{MuLe} performs significantly better than other methods throughout the noise amplitude range. 

\subsection{Solution Stability on Data Without Underlying Sparse CCA Model}
\label{subsec:sim:randomdata}

In the following simulations, $\bm{X}_1$ and $\bm{X}_2$ are generated by sampling from $\mathcal{N}(\bm{0}_{p_i}, \bm{I}_{p_i}), i \in \{1,2\}$. The main purpose of this section is to demonstrate the stability of the solution paths while comparing the quality of the solutions of different algorithms as a function of the cardinality of the canonical loadings. The motivation behind this simulation is that the solution of a stable algorithm must grow more similar to the non-sparse CCA solution. Therefore, while setting the sparsity parameter equal to zero for one canonical direction, for an array of sparsity parameters we compute the correlation of the estimated direction with the corresponding direction from the CCA solution, as well as the estimated canonical correlation for the same setting.

The results of the aforementioned simulation is presented in Figure \ref{fig:card}. According to our results \texttt{MuLe} is consistently more correlated with the CCA solution and for $(\gamma_1, \gamma_2) = (0,0)$, it solves the CCA problem whereas \texttt{PMA} by far does not show the same solution stability. Were columns of $\bm{X}_i$ more correlated, \texttt{PMA} and \texttt{SVDThr} would have resulted in even worse solutions.

\begin{figure}
\includegraphics[width = 1\textwidth]{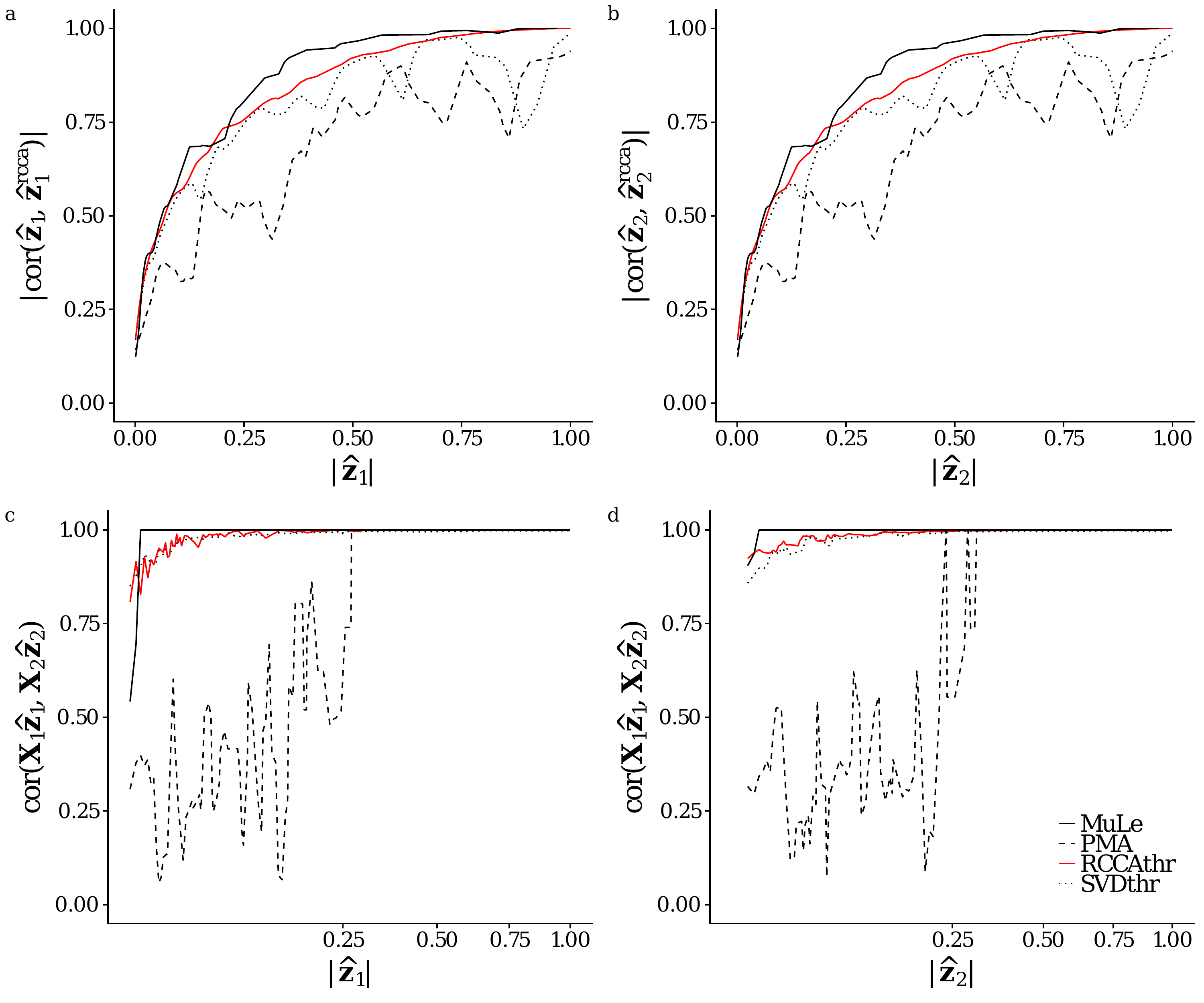}
\caption{The correlation between the estimated sparse canonical direction and the direction obtained from CCA. (a,b) and the estimated canonical correlation as a function of the cardinality of the estimated direction. (c,d)}
\label{fig:card}
\end{figure}





In the next section we utilize \texttt{MuLe} to discover correlation structures in a genomic setting.


\section{Fruitfly Pesticide Exposure Multi-Omics}
\label{sec:reda}

One of the drivers for the development of our method was the rise of multi-omics analysis in functional genomics, pharmacology, toxicology, and a host of related disciplines. Briefly, multiple ``omic" modalities, such as transcriptomics, metabolomics, metagenomics, and many other possibilities, are executed on matched (or otherwise related) samples. An increasingly common use in toxicology is the use of transcriptomics and metabolomics to identify, in a single experiment, the genetic and metabolic networks that drive resilience or susceptibility to exposure to a compound[\cite{campos2018omics}]. We analyzed recently generated transcriptomics, metabolomics, and 16S DNA metabarcoding data generated on isogenic Drosophila (described in Brown et al. 2019, in preparation). In this experiment, fruit flies are separated into treatment and control groups, where treated animals are exposed to the herbicide Atrazine, one of the most common pollutants in US drinking water. Dosage was calculated as 10 times the maximum allowable concentration in US drinking water -- a level frequently achieved in surface waters (streams and rivers) and rural wells. 

Data was collected after 72 hours, and little to no lethality was observed. Specifically, male and female exposed flies were collected, whereafter mRNA, small molecular metabolites, and 16S rDNA (via fecal collection and PCR amplification of the V3/V4 region) was collected. RNA-seq and 16S libraries were sequenced on an \textit{Illumina MiSeq}, and polar and non-polar metabolites were assayed by direct injection tandem mass spectrometry on a \textit{Thermo Fisher Orbitrap Q Exactive}. Here, we compare 16S, rDNA and metabolites using \texttt{MuLe}, to identify small molecules associated with microbial communities in the fly gut microbiome. 

This is an intriguing question, as understanding how herbicide exposure remodels the gut microbiome, and, in turn, how this remodeling alters the metabolic landscape to which the host is ultimately exposed is a foundational challenge in toxicology. All dietary co-lateral exposures are "filtered through the lens" of the gut microbiome -- compounds that are rapidly metabolized by either the host system or the gut are experienced, effectively, at lower concentrations; the microbiome plays an important role in toxicodynamics.

We utilized the multi-view sparse CCA module of \texttt{MuLe} to find three-way associations in our study. Hyper-parameter was performed using our permutation test of Algorithm \ref{alg:hyperparameterpermutation} modified to lean towards more sparse models. Our analysis, see Figures \ref{fig:rnaFecalColonBiplot} and \ref{fig:fecalColonHclust}, revealed three principle axes of variation. The first groups host genes for primary and secondary metabolism, cell proliferation, and reproduction along with host metabolites related to antioxidant response. Intriguingly, all metabolites in this axis of variation derive from the linoleic acid pathway, part of the anti-oxidant defense system, which is known to be engaged in response to Atrazine exposure [\cite{sengupta2015hr96}]. Similarly, Glutathione S transferase D1 (GstD1), a host gene that varies along this axis, is a secondary metabolic enzyme that leverages glutathione to neutralize reactive oxygen species (eletrophilic substrates). Linoleic acid metabolites are known to strongly induce glutathione synthesis [\cite{arab2006conjugated}]. The primary metabolism gene, Cyp6w1 is strongly up-regulated in response to atrazine [\cite{sieber2009dhr96}], and here we see it is also tightly correlated with the anti-oxidant defense system. We see broad inclusion of cell proliferation genes (CG6770, CG16817, betaTub56D,) and genes involved in reproduction (the Chorion proteins, major structural components of the eggshell chorion, Cp15, Cp16, Cp18, Cp19, Cp38, and Vitelline membrane 26Aa (Vm26Aa)), and it is well known that flies undergo systematic repression of the reproductive system during exposure to environmental stress [\cite{brown2014diversity}]. Whether this reproductive signal is directly associated with linoleic acid metabolism and glutathione production is an intriguing question for future study. 

The second principle axis of variation groups a dominant microbial clade (Lactobacillales) along with a collection of host metabolites, and one gene of unknown function. The host metabolites fall principally on the phosphorylcholine metabolic pathway, which is known to be induced in a sex-specific fashion in response to atrazine in mammals, but, as far as we know, not previously reported in arthopods [\cite{holaskova2019long}] – which may be useful, as it expands the domain of mammalian adverse outcome pathways that can be modeled in Drosophila. 

The third and final principle axis includes two host genes – a cytochrome P450 (Cyp4g1) known to be involved in atrazine detoxification [\cite{sieber2009dhr96}], and a peptidase of unknown function (CG12374) – a minority microbial clade (Rhodospirillales, [\cite{chandler2011bacterial}]), and another collection of linoleic acid pathway metabolites, along with 1-Oleoylglycerophosphoinositol, a host metabolite derived from oleic acid. While the ostensible lack of known microbial metabolites is somewhat disappointing, it may also be that these were simply not assigned chemical IDs during the metabolite identification – a common challenge with untargeted chemistry. 

\begin{figure}
\centering
\includegraphics[width=1\textwidth, center]{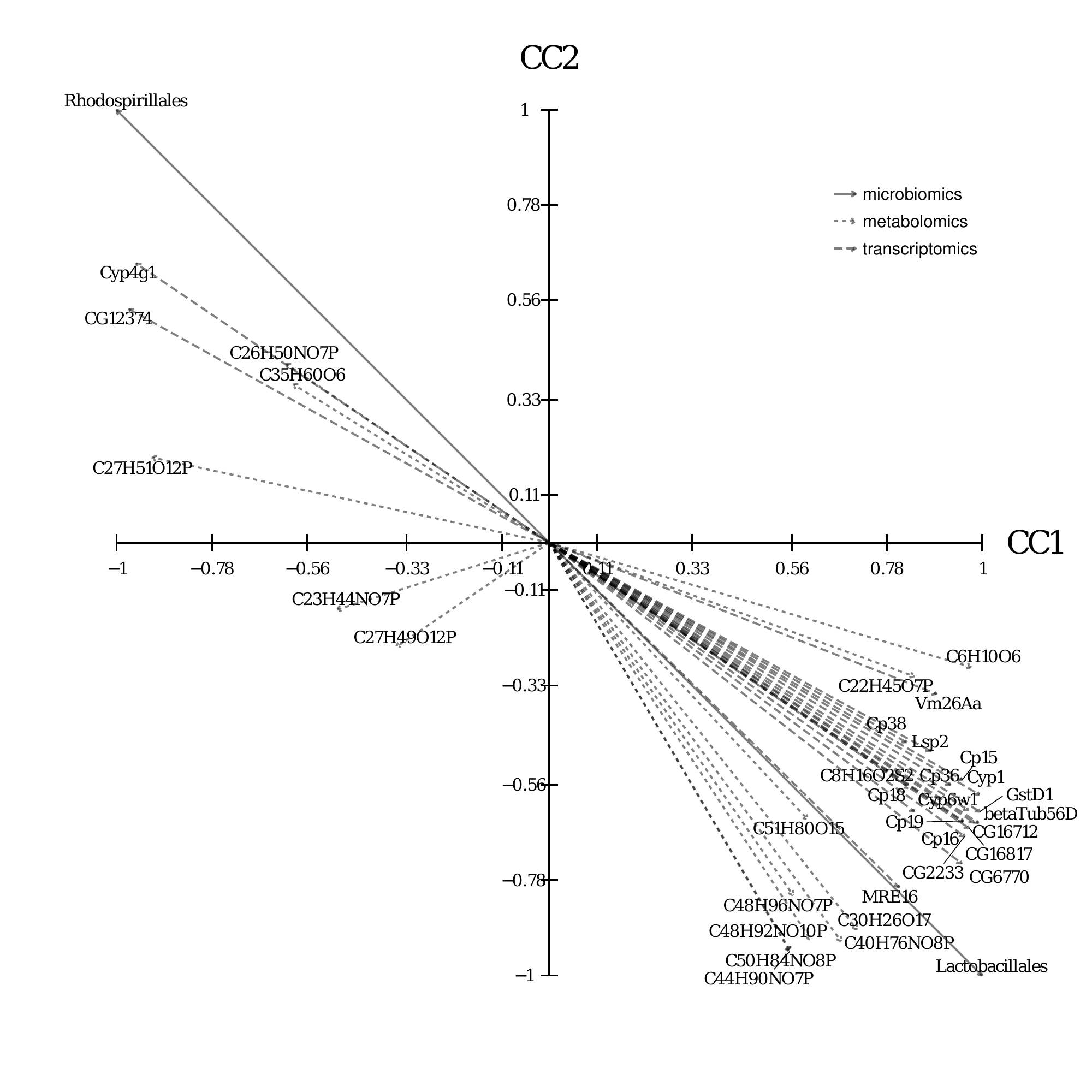}
\caption{CCA biplot of transcriptomic, microbiomic, and metabolomic datasets in Drosophila Atrazine exposure experiment.}
\label{fig:rnaFecalColonBiplot}
\end{figure}

\begin{figure}
\centering
\includegraphics[width=1\textwidth]{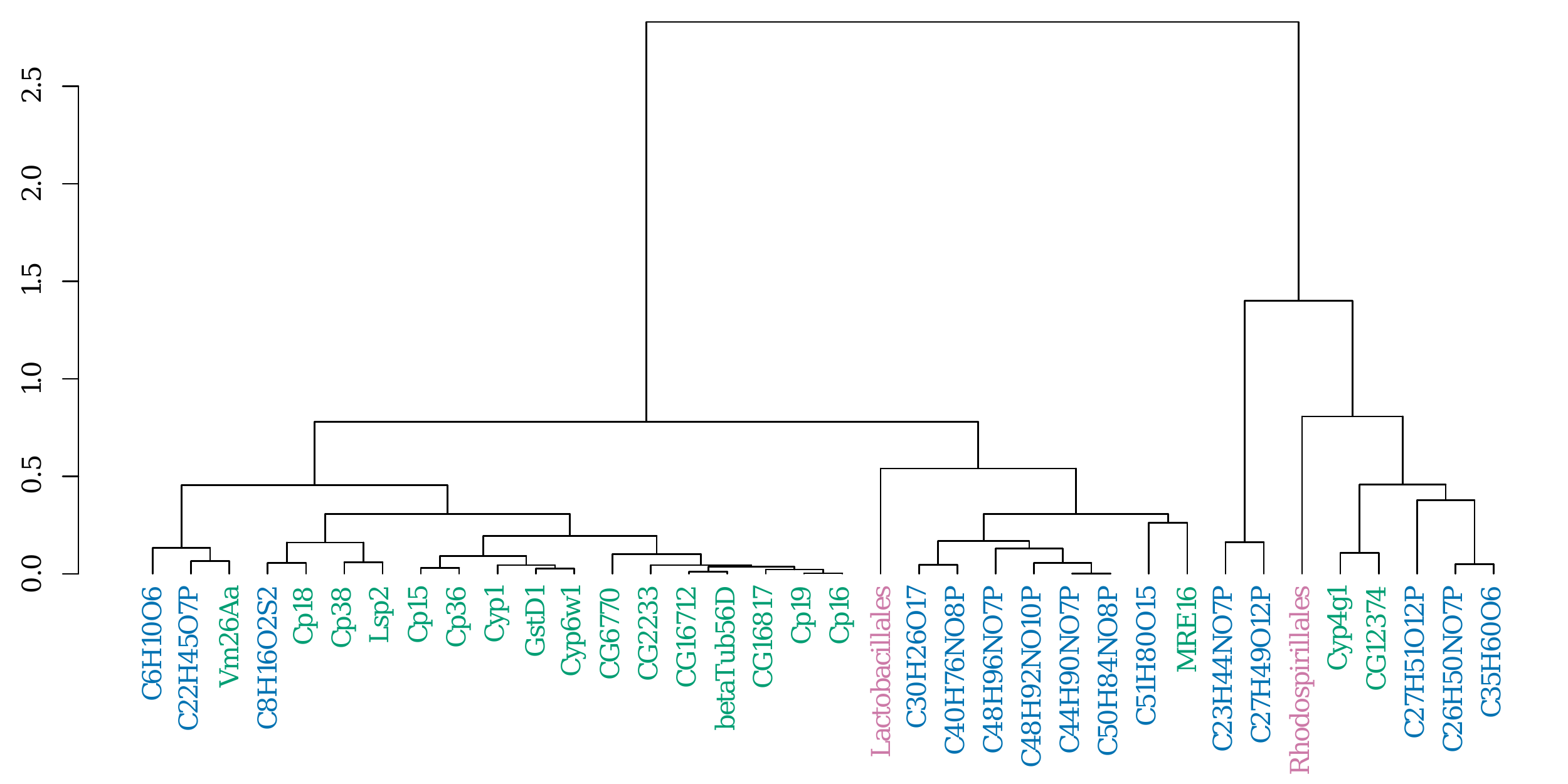}
\caption{Hierarchical clustering of the first two pairs of canonical directions.}
\label{fig:fecalColonHclust}
\end{figure}

In order to verify that the primary effect captured in our canonical directions are co-variations associated with the treatment effect, and not that of sex, exposure length etc., we also projected our samples on to the plane of the first two canonical covariates, see Figure \ref{fig:rnaFecalColonInterp}. We then color-coded the samples according to the \texttt{treatment} vector. We observed that our estimated canonical covariates clearly separate our samples according to the treatment effect.

\begin{figure}[ht]
\centering
\includegraphics[width=1.2\textwidth, center]{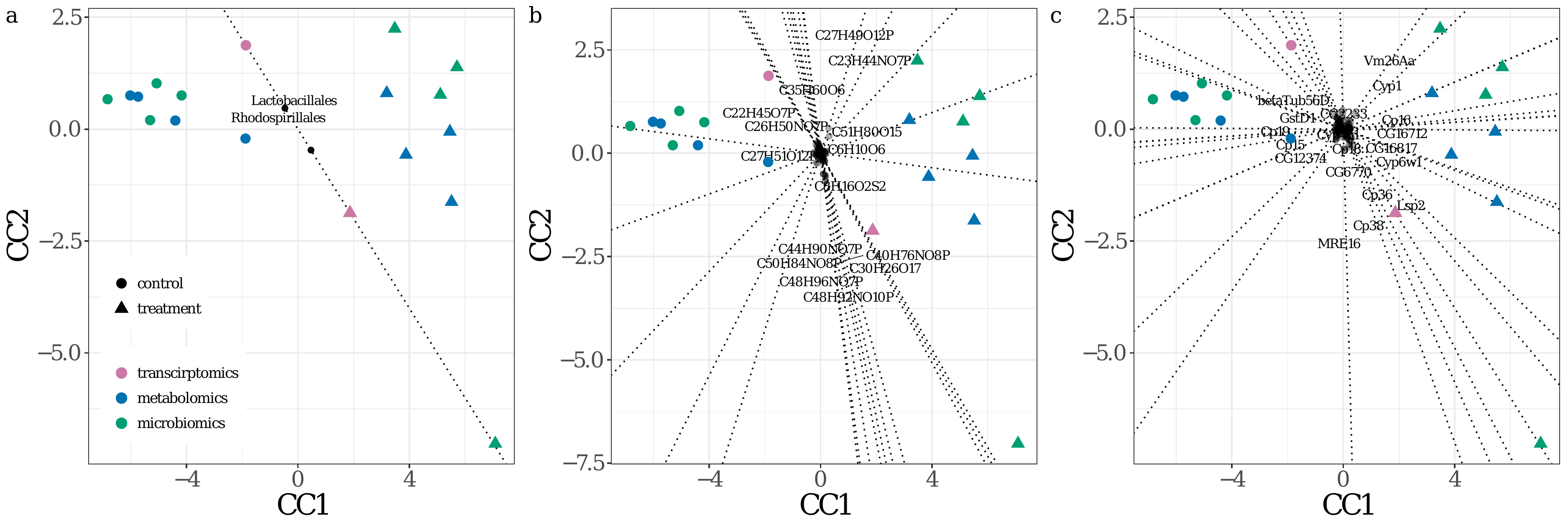}
\caption{Interpolative plots of microbiomics(a), metabolomics(b), and transcriptomics(c) views. Any given sample is interpolated by either the complete parallelogram or the vector sum method explained in Appendix \ref{subsec:app:interplot}}
\label{fig:rnaFecalColonInterp}
\end{figure}

Overall, we see many of the genes and metabolites involved in response to Atrazine identified in the support of the first and second canonical covariates. The fact that many members of individual pathways were returned together is comforting – genes and metabolites in the same or related pathways should co-vary, and they appear to through the lens of our analysis. The novelty and discovery of the sCCA method lies in identifying potential interactions between these pathways – and the current analysis has yielded a number of hypotheses for follow-up studies, including the coupling of germ cell proliferation repression to Linoleic acid metabolism. The identification of genes of unknown function is also interesting – we posit that MRE16 along the second principle axis of variation encodes at least one small functional peptide (e.g. a peptidase or an immunopeptide), and this too will be the subject of future study.

\newpage
\section{Conclusion}
\label{sec:conclusion}

A two-stage approach to sparse CCA problem was introduced, where in the first stage we computed the sparsity patterns of the canonical directions via a fast, convergent concave minimization program. Then we used these sparsity patterns to shrink our problem to a CCA problem of two drastically smaller matrices, where regular CCA methods may be used. We then extended our methods to multi-view settings, i.e. \textit{Multi-View Sparse CCA}, where we have more than two views and also to scenarios where our objective is to generate targeted hypotheses about associations corresponding to a specific experimental design, i.e. \textit{Directed Sparse CCA}. We benchmarked our algorithm and also compared it to several other popular algorithms. Our simulations clearly demonstrated superior solution stability and convergence properties, as well as higher accuracy both in terms of the correlation of the estimated canonical covariates and also in terms of its ability to recover the underlying sparsity patterns of the canonical directions. We also introduced \texttt{MuLe} which is the package implementing our algorithms. We then applied our method to a multi-omic study aiming to understand mechanisms of adaptations of \textit{Drosophila Melanoger (Fruitfly)} to environmental pesticides, here \textit{Atrazine}. Our analysis clearly indicated that the estimated canonical directions, while sparse and interpretable, captures co-variations due to the treatment effect, and also the selected sets of covariates are known, according to the peer-reviewed literature, to be associated with adaptation mechanisms of fruitfly to environmental pesticides and stressors.


\clearpage

\appendix

\section{Proofs}
\label{app:proof}

\subsection{Proof of Proposition \ref{proposition:complexity}}
\label{app:proposition:complexity}
Let's assume without loss of generality that $p_1 \leq p_2$. In this case we can start \texttt{MuLe} to find the sparsity pattern of $\bm{z}_2 \in \mathbb{R}^{p_2}$ first, shrink $\bm{X}_2$ to $\bm{X}_{2red} \in \mathbb{R}^{n \times n_2^{'}}$ where $n_2^{'} \sim n$, then repeat the same for $\bm{z}_1 \in \mathbb{R}^{p_1}$, shrink $\bm{X}_1$ to $\bm{X}_{1red} \in \mathbb{R}^{n \times n_1^{'}}$ where $n_1^{'} \sim n$, and finally compute the first canonical covariates using the shrunken $\bm{X}_{1red}^T\bm{X}_{2red} \in \mathbb{R}^{n_1^{'}\times n_2^{'}}$.

According to the setup of algorithm \ref{alg:2nd}, each iteration to find $\bm{\tau}_2$ is $O(2p_1p_2 + 4p_1 + p_1)$, using $\bm{\tau}_2$ and shrinking $\bm{X}_2$, each iteration for finding $\bm{\tau}_1$ is $O(2p_1n_2^{'} + 4p_1 + n_2^{'})$ which makes the time complexity of both $O(2p_1p_2 + 4p_1 + p_1 + 2p_1n_2^{'} + 4p_1 + n_2^{'})$. With \texttt{pSVDht}, the time complexity of both passes together is $O(4p_1p_2 + 2(p_1 + p_2))$. Assuming $p_2/p_1 = k = o(1)$ and $n \sim n_2^{'}$, if 

$$n < \frac{2kp_1^2 - 2(k+1)p_1 -p_1}{2p_1 +1}$$

The time complexity of \texttt{MuLe} is less than \texttt{pSVDht}. If $p_1 >> 1$, 

$$\frac{2kp_1^2 - 2(k+1)p_1 -p_1}{2p_1 +1} \approx \frac{2kp_1^2 - 2(k+1)p_1 -p_1}{2p_1} = kp_1 - (k+0.5) > p_1$$

So as long as $n < min\{p_1, p_2\}$, our claim stands.

\subsection{Proof of Proposition \ref{proposition:pmd}}
\label{app:proposition:pmd}

Here, just to provide more clarity, Algorithm 3 of \cite{witten:tibshirani:2009} is provided as a representative for the bigger family of \texttt{sSVD} algorithms.

\begin{algorithm}[H] 
 \KwData{Sample Covariance Matrices $\Sigma_{12} = X_1^TX_2$\\  \quad \qquad    $l_1$-penalty parameters $c_1,c_2$}
 \KwResult{ $z_1 \in \mathbb{R}^{p_1}$, $z_2 \in \mathbb{R}^{p_2}$, and $d = z_1^T\Sigma_{12}z_2$}
  Initialize $z_2$ to have $l2-norm$ 1\;
 \While{ convergence criterion is not met }{
	$z_1 \leftarrow \frac{S(\Sigma_{12}z_2, \Delta_1)}{\|S(\Sigma_{12}z_2, \Delta_1)\|_2}$ where $\Delta_1 = 0$ if this results in $\|z_1\|_1 \leq c_1$; otherwise, $\Delta_1$ is chosen to be a positive constant such that $\| z_1\|_1 = c$ \\
	$z_2 \leftarrow \frac{S(\Sigma_{12}^Tz_1, \Delta_2)}{\|S(\Sigma_{12}^Tz_1, \Delta_2)\|_2}$ where $\Delta_2 = 0$ if this results in $\|z_2\|_1 \leq c_2$; otherwise, $\Delta_2$ is chosen to be a positive constant such that $\| z_2\|_1 = c$ \\
    $d \leftarrow z_1^T\Sigma_{12}z_2$
 }
 \caption{ $PMD(L_1, L_1)$ as proposed in \cite{witten:tibshirani:2009}}
 \label{alg:pmd}
\end{algorithm}

 There is no need for a detailed time complexity analysis, as it is evident that although \texttt{MuLe} has order two polynomial time complexity, refer to Appendix \ref{app:proposition:complexity}, the optimization problems in stages 3 and 4 of \textit{PMD}, i.e. finding $\Delta_1$ and $\Delta_2$ that results in $\|z_1\|_1 = c_1$ and $\|z_2\|_1 = c_2$, are of exponential time complexity $O(2^p_1)$ and $O(2^p_2)$. They propose a binary search algorithm for this problem which has less time complexity but doesn't have guaranteed convergence, neither heuristically nor theoretically. In the implementation of the algorithm in the \texttt{PMA} package, the maximum number of iterations is set to a very small number, replacing which with a convergence criteria did not prove to be successful.
 
\section{Complementary Methods and Algorithms}

\subsection{Multi-Factor MuLe}
\label{app:alg:multifactor}


\begin{algorithm}[H] 
 \KwData{Sample Covariance Matrix $\bm{C}_{12}$\\  \quad \qquad    Regularization parameter vectors $\bm{\gamma}_i \in \mathbb{R}^{m}, i \in \{1,2\}$\\ \quad \qquad Initial value vectors $\bm{z}_i \in \mathcal{S}^{p_i}, i \in \{1,2\}$}
 \KwResult{ $\bm{Z}_i \in \mathbb{R}^{p_i \times m}, i \in \{ 1,2 \}$}
Let $\bm{C}_{12}^{(0)} \leftarrow \bm{C}_{12}$\\
\For{$i = 1, \ldots, m$}{

 $(\bm{z}_1^{*(i)}, \bm{Z}_2^{*(i)} \leftarrow sCCA_{MuLe}(\bm{C}_{12}^{(i-1)}, \gamma_{1i}, \gamma_{2i})$\\

$ \bm{C}_{12}^{(i)} = \bm{C}_{12} - \sum_{k = 1}^{i} (\bm{z}_1^{(k)*\top}\bm{C}_{12}^{(k-1)}\bm{z}_2^{(k)*}) \bm{z}_1^{(k)*}\bm{z}_2^{(k)*\top}$

$(\bm{Z}_1[,i], \bm{Z}_2[,i]) \leftarrow (\bm{z}_1^{*(i)}, \bm{z}_2^{*(i)})$
}
\caption{Multi-Factor MuLe}
\label{alg:deflation}
\end{algorithm}

\subsection{Multi-View CCA as Generalized Eigenvalue Problem}
\label{app:mcca}

Here, we frame the CCA problem applied to multiple datasets, $\bm{X}_i$, $i = 1, \ldots, m$, analyzed in \cite{kettenring1971canonical} as the following \textit{Generalized Eigenvalue Problem},

\begin{equation}
\label{eq:gep}
    \begin{bmatrix} \bm{0} & \bm{C}_{12}' & \dots & \bm{C}_{1m}'& \\ \bm{C}_{21}' & \bm{0} &  & \vdots \\
    \vdots & & \ddots & \bm{C}_{(m-1)m}' \\
    \bm{C}_{m1}' & & \bm{C}_{m(m-1)}' & \bm{0}\end{bmatrix}
    \begin{bmatrix}
    \bm{z}_1' \\ \bm{z}_2' \\ \vdots \\ \bm{z}_m'
    \end{bmatrix}
 =
 \lambda
  \begin{bmatrix} \bm{C}_{11}' & \bm{0} & \dots & \bm{0} \\ \bm{0} & \bm{C}_{22}' &  & \vdots \\
    \vdots & & \ddots & \bm{0} \\
    \bm{0} & \dots & \bm{0} & \bm{C}_{mm}'\end{bmatrix}
    \begin{bmatrix}
    \bm{z}_1' \\ \bm{z}_2' \\ \vdots \\ \bm{z}_m'
    \end{bmatrix}
\end{equation}

where $\bm{C}_{ij}'$ is the shrunken $\bm{C}_{ij}$, or the sample covariance matrix of the active entries of $\bm{z}_i$ and $\bm{z}_j$, denoted here as $\bm{z}_i'$ and $\bm{z}_j'$. Equation \ref{eq:gep} can be solved using a wide variety of solvers. We used the \texttt{geigen}\footnote{\url{https://CRAN.R-project.org/package=geigen}} function which is implemented in an r-package of the same name, which uses the routines implemented in \texttt{LAPACK}\footnote{\url{http://github.com/Reference-LAPACK}}. Given that $m$ is usually less than 10, and $\bm{z}_i' = O(n)$, where $n$ is not very large given we're assuming high-dimensional settings, problem \ref{eq:gep} does not involve very large matrices.

\subsection{Multi-View SVD via Power Iteration}
\label{app:msvd}

We proposed a Multi-View CCA in Appendix \ref{app:mcca} which served as the second stage of out two-stage sCCA approach which was to estimate active elements of the canonical directions. Although \ref{eq:gep} is of reasonable size, it still requires inversions which might be deemed as a disadvantage. Although it's very trivial to use ridge regularization to alleviate this issue, here we propose an algorithm which uses power iterations to perform multi-View SVD.

\vspace{\baselineskip}
\begin{algorithm}[H] 
 \KwData{Shrunk Sample Covariance Matrices $\bm{C}_{rs}', \quad 1 \leq r < s \leq m$\\  \quad \qquad Initial values $\bm{z}_r' \in \mathcal{S}^{|\bm{\tau}_r|}, \quad 1 \leq r \leq m$}
 \KwResult{ $\bm{z}_r'$, $r = 1, \ldots , m$, estimated active elements of $\bm{z}_r$}
 initialization\;
 \For{$r = m, \ldots, 1$}{
 \While{ convergence criterion is not met }{
    
    $\bm{z}_r' \leftarrow \sum_{s = 1}^r \bm{C}_{sr}(\bm{C}_{sr}^{\top}\bm{z}_r') + \sum_{s = r + 1}^m \bm{C}_{rs}'\bm{z}_s' $ \\
    $\bm{z}_r \leftarrow \frac{\bm{z}_r }{\| \bm{z}_r \|_2}$
    }
 }
 \caption{\texttt{MuLe} algorithm for optimizing Program \ref{eq:multiznoptsimple}}
 \label{alg:app:3rd}
\end{algorithm}
\vspace{\baselineskip}

\subsection{Two-Stage Directed CCA}
\label{app:semimule}
 
Though simple and obvious, we include this approach in this appendix for the sake of clarity and completeness. Here are the steps for this algorithm.

\begin{enumerate}
    \item Perform variable selection via univariate regression or classification of $\bm{y}$ on each $\bm{X}_i$ resulting in a set of variables, $Q_i$, which are highly associated with the accessory variable.
    \item Subset every datasets such that only the columns selected in the previous steps are kept, resulting in $\bm{X}_i' \in \mathbb{R}^{n \times | Q_i |}$.
    \item Perform sCCA between the datasets using any of the algorithms implemented in \texttt{MuLe}.
\end{enumerate}
 
\section{Further Experimmentations}
 
\subsection{Rank-One Sparse Multi-View CCA Model}
\label{app:subsec:multiview}

To assess the validity of the formulation presented in Program \ref{eq:mCCA} and accuracy of our solution and algorithm presented in Section \ref{subsec:multimodalalg}, for the cases involving more than two, the rank-one model introduced in Section \ref{subsec:rankOne} is extended to three datasets by generating $\mathbf{X}_3$ as follows,

\begin{equation}
\begin{split}
\mathbf{X}_3  = (\mathbf{z}_3 + \epsilon_3)u^T, &\quad \mathbf{z}_3 \in \mathcal{R}^{600}, \quad \epsilon_3 \sim \mathcal{N}(0, 0.1^2), \forall i = 1, \ldots ,600,\\
\mathbf{z}_{1} &= \bigg[\underbrace{1, \ldots ,1}_{25}\quad \underbrace{0, \ldots, 0}_{550}\quad \underbrace{-1, \ldots ,-1}_{25}\bigg]
\end{split}
\end{equation}

where $ \mathbf{u}_i \sim \mathcal{N}(0,1), \forall i = 1, \ldots, 50$.

The coefficient estimates are presented in Figure \ref{fig:rankOneM}. Here, we also included the \texttt{RGCCA} package. Although their conventional sCCA algorithm results were identical to \texttt{PMA}, their generalization to more than two datasets resulted in different and better results. Hence, its inclusion in this simulation. We used each package's own built-in hyper-parameter tuning procedure to find the best parameters. As evident from the results, \texttt{MuLe} identifies the underlying model quite accurately, but \texttt{RGCCA} although does a good job on parameter estimation, it does a very poor job on recovering the sparsity patterns of the canonical directions. \texttt{PMA} misses both critera quite significantly.

In the next section we utilize \texttt{MuLe} to discover correlation structures in a genomic setting.

\begin{figure}
\includegraphics[width=1\textwidth]{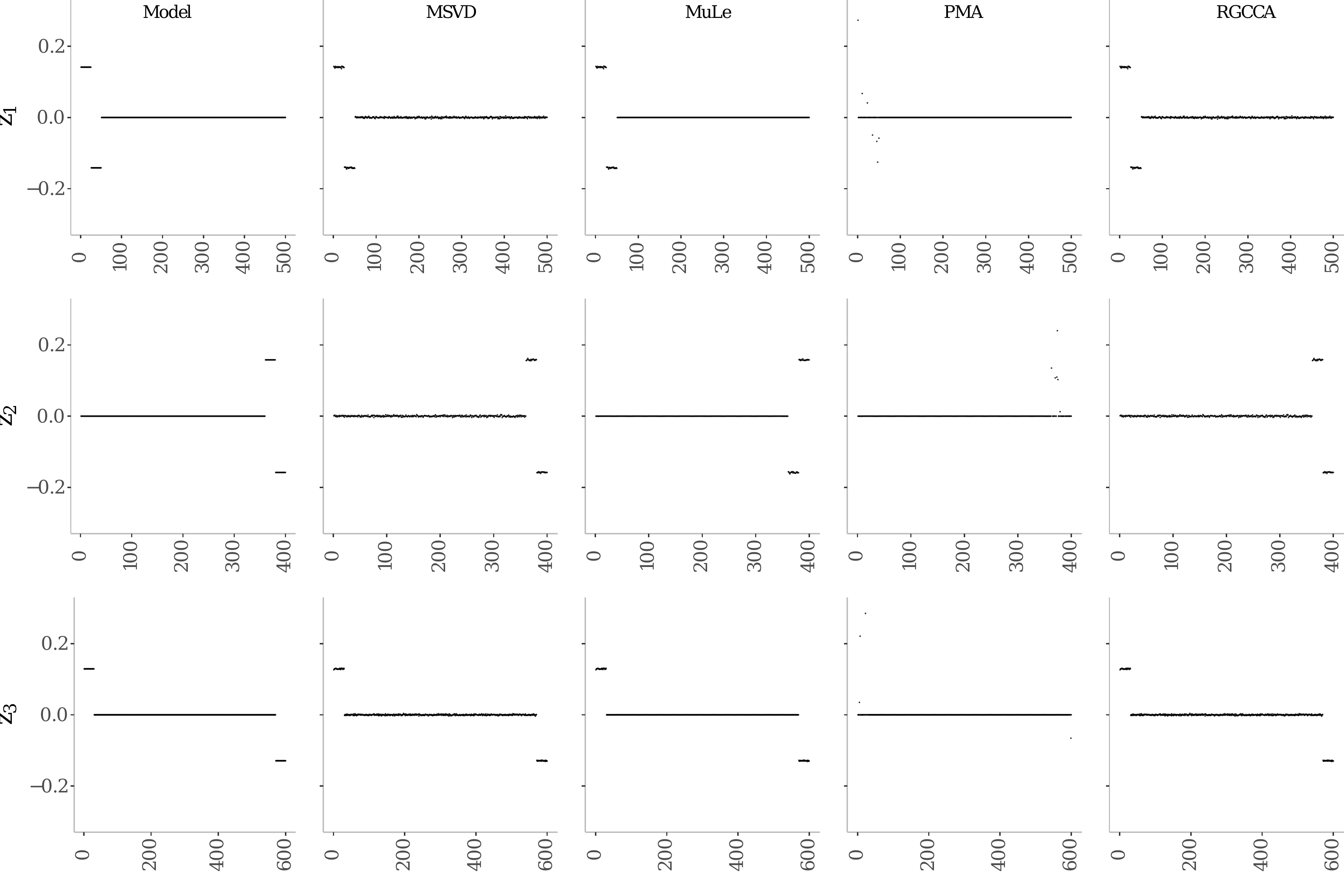}
\caption{Comparing performance of some of the most common multi-view sCCA approaches to that of \textit{MuLe} in recovering the sparsity pattern and estimating active elements of the canonical directions. The \textit{Model} or ``true" canonical directions are plotted in the leftmost plot.}
\label{fig:rankOneM}
\end{figure}

\section{Visualization Methods}
\label{sec:app:vis}

In a general subspace learning problem involving datasets, we're seeking to replace each dataset with three low-dimensional pieces of information, a rule for projecting the original covariates to the learned subspace for the respective subspace, a low-dimensional projection of samples from the original sample-space to the learned sub-space, and a measure of similarity or alignment between the learned subspace. In our linear sCCA context, we replace the dataset $\bm{X}_i$ with $\bm{Z}_i$ whose rows contain the correlation of the covariate $\bm{x}_i$ with the canonical covariates, $\bm{CC}_i$ the projection of samples onto the canonical directions and the canonical correlations $\bm{\rho}_i^{(j)} \in \mathbb{R}^m$ containing the correlation between the $j$-th canonical covariate of the $i$-th dataset and the $j$-th canonical covariates obtained from other datasets. Now we explain the procedures used to create the figures in Section \ref{sec:reda} which facilitate the interpretation of sCCA results. Inspired by the methods proposed in \cite{alves2003interpolative}, we adapt their CCA biplot and interpolative plot to our sCCA settings. In the following brief tutorial, we focus on the first two canonical covariates, thereby keeping only the first two columns of $\bm{Z}_i$ and $\bm{CC}_i$, denoted by $\bm{Z}_i^{(2)}$ and $\bm{CC}_i^{(2)}$, and only $\bm{\rho}_i^{(j)}$ for $j \in \{1,2\}$ and $i = \{1, \ldots, m\}$.

\subsection{CCA Biplot}
\label{subsec:app:biplot}
In order to create the CCA biplot, e.g. Figure \ref{fig:rnaFecalColonBiplot}, we simply plot the first two columns of $\bm{Z}_i^{(2)}$ in the same plot. A key complementary piece of information facilitating interpretation are the first two canonical correlations. Utilizing at this plot, we can form hypotheses about how and to what extend groups of variables from different datasets are associated with each other. The length of the vectors indicate the variable's share in each canonical direction, while the angle between them indicate their degree of association.

\subsection{CCA Interpolative Plots}
\label{subsec:app:interplot}

Another informative visualization we exploit to interpret sCCA results are \textit{Interpolative CCA Plots}, e.g. Figure \ref{fig:rnaFecalColonInterp}. In order to create such figure for each dataset, we first plot $\bm{CC}_i$ from all datasets in the same plot, which by itself provides enlightening insights into how strongly the samples from different datasets align with each other. Next we need to add lines corresponding to the variables from the respective dataset. In order to make interpolation easier and the plots more clear, we first choose a set of marker points $\bm{\mu}_{ij}$ corresponding to the $j$-th variable from the $i$-th dataset, consisting of values within the range of observed values of the variable $\bm{x}_{ij}$, i.e. $\mu_{ijk} \in [ min(\bm{x}_{ij}), max(\bm{x}_{ij}) ]$. We project these points using the following projection $\bm{\mu}_{ij}\bm{e}_{ij}\bm{V}_i^{(2)}$, where $\bm{e}_{ij}$ is a vector whose elements except the $j$-th is zeroed out. Finally, we pass a line through the projected points. Marking the values of each variable corresponding to a sample as a vector along each variable we can find the interpolated position of the said sample. This is a powerful tool as we can find how accurately we can interpolate a samples position using the values of a different dataset. This is specially important in cases where sample matching from different datasets are not exact and samples are matched based on some other metadata, e.g. gender, age etc.

\clearpage

\supplement

\begin{center}
\textbf{\large Supplemental Materials: \mule}
\end{center}

\section{\textsc{MuLe} Package}

An R-implementation of our package \texttt{MuLe}, named \texttt{MuLe-R}, along with the scripts used to perform the simulations and create the visualizations, and the data used in Section \ref{sec:reda} is available online at \url{https://github.com/osolari/MuleR}.

\clearpage
\vskip 0.2in
\bibliography{jmlr}

\end{document}